\newtheorem{theorem}{{\bf Theorem}}
\newtheorem{lemma}[theorem]{{\bf Lemma}}
\newtheorem{corollary}[theorem]{{\bf Corollary}}
\newtheorem{definition}[theorem]{{\bf Definition}}
\newtheorem{proposition}[theorem]{{\bf Proposition}}
\newcommand{\prob}[1]{\Pr\left(#1\right)}
\newcommand{\expect}[1]{\mathbf{E}\left[#1\right]}
\newcommand{\zerobit}{0\nobreakdash-bit\xspace}
\newcommand{\onebits}{1\nobreakdash-bits\xspace}
\newcommand{\zerobits}{0\nobreakdash-bits\xspace}
\newcommand{\trans}[1]{{#1}^{\ensuremath{\mathsf{T}}}}    
\DeclareMathOperator{\diag}{diag}                         
\DeclareMathOperator{\poly}{poly}
\newcommand{\ab}{\hspace{0.125em}}                        
\newcommand{\ie}{\hbox{i.\ab e.}\xspace}                  
\newcommand{\etal}{et al.\ }
\newcommand{\selpres}{\text{{\sc SelPres}$_{\sigma,\delta,k}$}\xspace}
\newcommand{\leadingones}{\text{\sc LeadingOnes}\xspace}
\newcommand{\onemax}{\text{\sc OneMax}\xspace}
\begin{document}
\title{On the Impact of Mutation-Selection Balance\\
       on the Runtime of Evolutionary Algorithms\thanks{This work was supported by the EPSRC under grant
        no. EP/D052785/1, and by Deutsche Forschungsgemeinschaft (DFG) 
        under grant no. WI~3552/1-1.}}
\author{Per Kristian Lehre\thanks{Per Kristian Lehre is with 
DTU Informatics, Technical
  University of Denmark, 2800 Kongens Lyngby, Denmark. 
  (e-mail: pkle@imm.dtu.dk). } \and Xin Yao\thanks{Xin Yao is with The Centre of
        Excellence for Research in Computational Intelligence and 
        Applications (CERCIA), School of Computer Science, 
        The University of Birmingham, Edgbaston, Birmingham B15 2TT, UK        
        (e-mail: x.yao@cs.bham.ac.uk)}}

\maketitle

\begin{abstract}
  The interplay between mutation and selection
  plays a fundamental role in the behaviour of evolutionary
  algorithms (EAs). However, this interplay is still not completely
  understood.  This paper presents a rigorous runtime analysis of a
  non-elitist population-based EA that uses the linear ranking
  selection mechanism. The analysis focuses on how the balance between
  parameter $\eta$, controlling the selection pressure in linear
  ranking, and parameter $\chi$ controlling the bit-wise
  mutation rate, impacts the runtime of the algorithm.
  The results point out situations where a correct balance between
  selection pressure and mutation rate is essential for finding the
  optimal solution in polynomial time. In particular, it is shown that
  there exist fitness functions which can only be solved in polynomial
  time if the ratio between parameters $\eta$ and $\chi$ is within a
  narrow critical interval, and where a small change in this ratio can
  increase the runtime exponentially. Furthermore, it is shown
  quantitatively how the appropriate parameter choice depends on the
  characteristics of the fitness function.
  In addition to the original results on the runtime of EAs, this paper also 
  introduces a very useful analytical tool, \ie, multi-type branching 
  processes, to the runtime analysis of non-elitist population-based EAs.
\end{abstract}

\section{Introduction}
Evolutionary algorithms (EAs) have been applied successfully to many
optimisation problems \cite{Sarker2002EvOpt}. However, despite several
decades of research, many fundamental questions about their behaviour
remain open. One of the central questions regarding EAs is to
understand the interplay between the selection mechanism and the
genetic operators. Several authors have suggested that EAs must find a
balance between maintaining a sufficiently diverse population to
\emph{explore} new parts of the search space, and at the same time
\emph{exploit} the currently best found solutions by focusing the
search in this direction
\cite{Eiben1998Expl,Whitley1989GENITOR,Goldberg1991Selection}.  In
fact, the trade-off between exploration and exploitation has been a
common theme not only in evolutionary computation, but also in
operations research and artificial intelligence in general. However,
few theoretical studies actually exist that explain how to define such
trade-off quantitatively and how to achieve it. Our paper can be
regarded as one of the first rigorous runtime analyses of EAs that
addresses the interaction between exploration, driven by mutation, and
exploitation, driven by selection.

Much research has focused on finding measures to quantify the
selection pressure in selection mechanisms --- without taking into
account the genetic operators --- and subsequently on investigating
how EA parameters influence these measures
\cite{Goldberg1991Selection,Back1994Selection,Blickle1996Selection,SchlierkampVoosen1993Breeder,CantuPaz2002Order}.
One such measure, called the \emph{take-over time}, considers the
behaviour of an evolutionary process consisting only of the selection
step, and no crossover or mutation operators
\cite{Goldberg1991Selection,Back1994Selection}. Subsequent populations
are produced by selecting individuals from the previous generation,
keeping at least one copy of the fittest individual.  Hence, the
population will after a certain number of generations only contain
those individuals that were fittest in the initial population, and
this time is called the take-over time. A short take-over time
corresponds to a high selection pressure. Other measures of selection
pressure consider properties of the distribution of fitness values in
a population that is obtained by a single application of the selection
mechanism to a population with normally distributed fitness values.
One of these properties is the \emph{selection intensity}, which is
the difference between the average population fitness before and after
selection \cite{SchlierkampVoosen1993Breeder}. Other properties are
\emph{loss of diversity} \cite{Blickle1996Selection,
  Motoki2002Selection} and higher order cumulants of the fitness
distribution \cite{CantuPaz2002Order}.

To completely understand the role of selection mechanisms, it is
necessary to also take into account their interplay with the genetic
operators.  There exist few rigorous studies of selection mechanisms
when used in combination with genetic operators.  Happ \etal
considered fitness proportionate selection, which is one of the first
selection mechanisms to be employed in evolutionary algorithms
\cite{Happ2008Selection}.  Early research in evolutionary computation
pointed out that this selection mechanism suffers from various
deficiencies, including population stagnation due to low selective
pressure \cite{Whitley1989GENITOR}. Indeed, the results by Happ \etal
show that variants of the RLS and the (1+1) EA that use
fitness-proportional selection have exponential runtime on the class
of linear functions \cite{Happ2008Selection}. Their analysis was
limited to single-individual based EAs. Neumann \etal showed that even
with a population-based EA, the \onemax problem cannot be optimised in
polynomial time with fitness proportional selection
\cite{Neumann2009FitProp}.  However, they pointed out that polynomial
runtime can be achieved by scaling the fitness function. Witt also
studied a population-based algorithm with fitness proportionate
selection, however with the objective to study the role of populations
\cite{Witt2008Population}.  Chen \etal analysed the ($N$+$N$)~EA to
compare its runtimes with truncation selection, linear ranking
selection and binary tournament selection on the \leadingones\ and
\onemax\ problems \cite{Chen2008Selection}. They found the expected
runtime on these fitness functions to be the same for all three
selection mechanisms. None of the results above show how the balance
between the selection pressure and mutation rate impacts the runtime.

This paper analyses rigorously a non-elitist, population based EA that
uses linear ranking selection and bit-wise mutation. The main
contributions are an analysis of situations where the
mutation-selection balance has an exponentially large impact on the
runtime, and new techniques based on branching processes for analysing
non-elitist population based EAs.  The paper is based on preliminary
work reported in \cite{Lehre2009FOGA}, which contained the first
rigorous runtime analysis of a non-elitist, population based EA with
stochastic selection.  This paper significantly extends this early
work. In addition to strengthening the main result, simplifying
several proofs and proving a conjecture, we have added a completely
new section that introduces multi-type branching processes as an
analytical tool for studying the runtime of EAs.

\subsection{Notation and Preliminaries}
The following notation will be used in the rest of this paper.  The
length of a bitstring $x$ is denoted $\ell(x)$.  The $i$-th bit, $1\leq
i\leq \ell(x),$ of a bitstring $x$ is denoted $x_i$.  The
concatenation of two bitstrings $x$ and $y$ is denoted by $x\cdot y$ and
$xy$. Given a bitstring $x$, the notation $x[i,j]$,
where $1\leq i<j\leq \ell(x)$, denotes the substring $x_ix_{i+1}\cdots
x_j$.  For any bitstring $x$, define
$\|x\|:=\sum_{i=1}^{\ell(x)}x_i/\ell(x)$, \ie the fraction of
\onebits in the bitstring. We say that an event holds with
\emph{overwhelmingly high probability (w.o.p.)} with respect to a
parameter $n$, if the probability of the event is bounded from 
below by $1-e^{-\Omega(n)}$.

In contrast to classical algorithms, the runtime of EAs is usually
measured in terms of the number of evaluations of the fitness
function, and not the number of basic operations. For a given function
and algorithm, the \emph{expected runtime} is defined as the mean
number of fitness function evaluations until the optimum is evaluated
for the first time. The runtime on a class of fitness functions is
defined as the supremum of the expected runtimes of the functions in
the class \cite{DJW02Analysis}. The variable name $\tau$ will be used
to denote the runtime in terms of number of generations of the EA. In
the case of EAs that are initialised with a population of $\lambda$
individuals, and which in each generation produce $\lambda$
offspring, variable $\tau$ can be related to the runtime $T$ by
$\lambda(\tau-1)\leq T\leq \lambda\tau$.

\section{Definitions}
\subsection{Linear Ranking Selection}
In ranking selection, individuals are selected according to their
fitness rank in the population. A ranking selection mechanism
is uniquely defined by the probabilities $p_i$ of selecting an 
individual ranked $i$, for all ranks $i$
\cite{Blickle1996Selection}. For mathematical convenience, an
alternative definition due to Goldberg and Deb
\cite{Goldberg1991Selection} is adopted, in which a function
$\alpha:[0,1]\rightarrow\mathbb{R}$ is considered a ranking
function if it is non-increasing, and satisfies the 
following two conditions
\begin{enumerate}
\item $\alpha(x)\geq 0$, \text{ and}
\item $\int_0^1 \alpha(y) dy = 1$.
\end{enumerate}
Individuals are ranked from 0 to 1, with the best individual ranked 0,
and the worst individual ranked 1. For a given ranking function
$\alpha$, the integral $\beta(x,y) := \int_x^y\alpha(z)dz$ gives the
probability of selecting an individual with rank between $x$ and $y$.
By defining the linearly decreasing ranking function $\alpha(x) :=
\eta - cx$, where $\eta$ and $c$ are parameters, one obtains
\emph{linear ranking selection}.  The first condition
implies that $\eta\geq c\geq 0$, and the second condition implies that
$c=2(\eta-1)$. Hence, for linear ranking selection, we have
\begin{align}
  \alpha(x) & := \eta(1-2x)+2x\label{eq:alpha},\text{ and}\\
  \beta(x)  & := \beta(0,x)  = x(\eta(1-x) + x).\label{eq:beta}
\end{align}
Note that since $\alpha$ is non-increasing, \ie, $\alpha'(x)\leq 0$,
we must have $\eta\geq 1$. Also, the special case $\alpha(1)\geq 0$ of
the first condition implies that $\eta\leq 2$.  The selection
pressure, measured in terms of the take-over time, is uniquely 
given by, and monotonically decreasing in
the parameter $\eta$ \cite{Goldberg1991Selection}.  The weakest
selection pressure is obtained for $\eta=1$, where selection is
uniform over the population, and the highest selection pressure is
obtained for $\eta=2$. We therefore assume that $1<\eta\leq 2$.

\subsection{Evolutionary Algorithm}

\begin{algorithm}
  \caption{Linear Ranking EA \cite{Lehre2009FOGA}}
  \begin{algorithmic}[1]
    \STATE $t\leftarrow 0$.
    \FOR{$i=1$ to $\lambda$}
    \STATE Sample $x$ uniformly at random from $\{0,1\}^n$.
    \STATE $P_0(i) \leftarrow  x$.
    \ENDFOR
    \REPEAT
    \STATE Sort $P_t$ according to fitness $f$, such that\\
    \quad\quad $f(P_t(1))\geq f(P_t(2))\geq \cdots\geq f(P_t(\lambda))$.
    \FOR{$i=1$ to $\lambda$}
    \STATE Sample $r$ in $\{1,...,\lambda\}$ with $\prob{r\leq \gamma\lambda}=\beta(\gamma)$.
    \STATE $P_{t+1}(i) \leftarrow P_t(r)$.
    \STATE Flip each bit position in $P_{t+1}(i)$ with prob. $\chi/n$.
    \ENDFOR
    \STATE $t\leftarrow t+1$.
    \UNTIL{termination condition met.}
  \end{algorithmic}
  \label{fig:ea}
\end{algorithm}

We consider a population-based non-elitist EA
which uses linear ranking as selection mechanism. The crossover
operator will not be considered in this paper. The pseudo-code of the
algorithm is given above. After sampling the initial
population $P_0$ at random in lines 1 to 5, the algorithm enters its
main loop where the current population $P_t$ in generation $t$ is
sorted according to fitness, then the next population $P_{t+1}$ is
generated by independently selecting (line 9) and mutating (line 10)
individuals from the previous population $P_t$. The analysis of the
algorithm is based on the assumption that parameter $\chi$ is a
constant with respect to $n$.

Linear ranking selection is indicated in line 9, where for a given
selection pressure $\eta$, the cumulative probability of sampling
individuals with rank less than $\gamma\lambda$ is $\beta(\gamma)$. It
can be seen from the definition of the functions $\alpha$ and $\beta$,
that the upper bound $\beta(\gamma,\gamma+\delta)\leq
\delta\cdot\alpha(\gamma)$, holds for any $\gamma,\delta>0$ where
$\gamma+\delta\leq 1$. Hence, the expected number of times a uniformly
chosen individual ranked between $\gamma\lambda$ and
$(\gamma+\delta)\lambda$ is selected during one generation is upper
bounded by $(\lambda/\delta\lambda)\cdot
\beta(\gamma,\gamma+\delta)\leq \alpha(\gamma)$.  We leave the
implementation details of the sampling strategy unspecified, and
assume that the EA has access to some sampling mechanism which draws
samples perfectly according to $\beta$.

\subsection{Fitness Function}

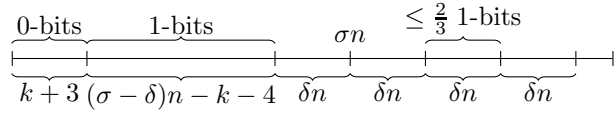
\begin{figure}
  \centering
  \begin{tikzpicture}
    \draw[|-|] (0cm,0cm) -- (8cm,0cm);
    \draw (1cm,-0.1cm) -- (1cm,0.1cm);
    \draw[snake=brace] (1cm,-0.2cm) -- node [below] {$k+3$}      (0cm,-0.2cm);

    \draw (3.5cm,-0.1cm) -- (3.5cm,0.1cm);
    \draw (4.5cm,-0.1cm) -- (4.5cm,0.1cm) node [above] {$\sigma n$};
    \draw (5.5cm,-0.1cm) -- (5.5cm,0.1cm);
    \draw (6.5cm,-0.1cm) -- (6.5cm,0.1cm);
    \draw (7.5cm,-0.1cm) -- (7.5cm,0.1cm);

    \draw[snake=brace] (3.5cm,-0.2cm) -- node [below] {$(\sigma-\delta)n-k-4$} (1cm,-0.2cm);
    \draw[snake=brace] (4.5cm,-0.2cm) -- node [below] {$\delta n$} (3.5cm,-0.2cm);
    \draw[snake=brace] (5.5cm,-0.2cm) -- node [below] {$\delta n$} (4.5cm,-0.2cm);
    \draw[snake=brace] (6.5cm,-0.2cm) -- node [below] {$\delta n$} (5.5cm,-0.2cm);
    \draw[snake=brace] (7.5cm,-0.2cm) -- node [below] {$\delta n$} (6.5cm,-0.2cm);

    \draw[snake=brace] (5.5cm,0.2cm) -- node [above] {$\leq \frac{2}{3}\; 1$-bits}  (6.5cm,0.2cm);
    \draw[snake=brace] (0cm,0.2cm) -- node [above] {0-bits} (1cm,0.2cm);
    \draw[snake=brace] (1cm,0.2cm) -- node [above] {1-bits} (3.5cm,0.2cm);
  \end{tikzpicture}
  \caption{\label{fig:fitness}Illustration of optimal search points \cite{Lehre2009FOGA}.}
\end{figure}

\begin{definition}
  For any constants $\sigma, \delta$, $0<\delta<\sigma<1-3\delta$,
  and integer $k\geq 1$, define the function  
  \begin{align*}
  \selpres(x) & :=
    \begin{cases}
      2n     & \text{if } x\in X_\sigma^*,\text{ and}\\
      \sum_{i=1}^n\prod_{j=1}^i x_j  & \text{otherwise,}
    \end{cases}
  \end{align*}
  where the set of optimal solutions $X_\sigma^*$ is defined to 
  contain all bitstrings $x\in\{0,1\}^n$ satisfying
  \begin{align*}
    \|x[1,k+3]\|& =0,\\
    \|x[k+4, (\sigma-\delta)n-1]\| & = 1, \text{ and}\\
    \|x[(\sigma+\delta)n,(\sigma+2\delta)n-1]\| & \leq 2/3.
  \end{align*}
\end{definition}

Except for the set of globally optimal solutions $X^*_\sigma$, the
fitness function takes the same values as the well known \leadingones\
fitness function, \ie the number of leading 1\nobreakdash-bits in the
bitstring. The form of the optimal search points, which is illustrated
in Fig.~\ref{fig:fitness}, depends on the three problem parameters
$\sigma, k$ and $\delta$. The $\delta$-parameter is needed for
technical reasons and can be set to any positive constant arbitrarily
close to 0. Hence, the globally optimal solutions have approximately
$\sigma n$ leading 1\nobreakdash-bits, except for $k+3$ leading
0\nobreakdash-bits. In addition, globally optimal search points must
have a short interval after the first $\sigma n$ bits which does not
contain too many 1\nobreakdash-bits.

\section{Main Result}
\begin{theorem}\label{thm:main-result}
  For any constant integer $k\geq 1$, let $T$ be the runtime of the 
  Linear Ranking EA with population
  size $n\leq \lambda\leq n^k$ with 
  a constant selection pressure of $\eta, {1<\eta\leq 2}$, and 
  bit-wise mutation rate $\chi/n$, for a constant $\chi>0$, on 
  function \selpres with parameters $\sigma$ and 
  $\delta$, where $0<\delta<\sigma<1-3\delta$.
  Let $\epsilon>0$ be any constant.
  \begin{enumerate}
  \item If $\eta < \exp(\chi(\sigma-\delta))-\epsilon$, then for some
    constant $c>0$,
    \begin{align*}
      \prob{T\geq e^{cn}}=1-e^{-\Omega(n)}.
    \end{align*}
  \item If $\eta = \exp(\chi\sigma)$, then
    \begin{align*}
      \prob{T\leq n^{k+4}} = 1-e^{-\Omega(n)}.
    \end{align*}
  \item If $\eta > (2\exp(\chi(\sigma+3\delta))-1)/(1-\delta)$, then
    \begin{align*}
      \expect{T} = e^{\Omega(n)}.
    \end{align*}
  \end{enumerate}
\end{theorem}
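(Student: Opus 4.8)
The plan is to analyze the population of the Linear Ranking EA restricted to the \leadingones\ structure of \selpres, where, after sorting $P_t$, an individual's rank is essentially an increasing function of its number of leading ones $\lo(x)$. The natural object is, for each \emph{level} $i$, the count $X_t^{(i)}:=|\{x\in P_t : \lo(x)\ge i\}|$. The basic reproduction estimate is the following: a fixed individual occupying rank-fraction $\gamma$ is, by the bound $\beta(\gamma,\gamma+1/\lambda)\le\alpha(\gamma)/\lambda$ recalled above, selected $\lambda\cdot\beta(\gamma,\gamma+1/\lambda)\le\alpha(\gamma)$ times in expectation per generation, each selected copy still has at least $i$ leading ones with probability $(1-\chi/n)^i=e^{-\chi i/n}(1\pm o(1))$, and it reaches level $i+1$ with probability $\Theta(1/n)$ when bit $i+1$ was a $0$. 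Hence, while level $\ge i$ occupies only rank-fraction $\gamma$, the process $(X_t^{(i)})_t$ is sandwiched between multi-type branching processes whose per-individual reproductive rate is $\eta\,e^{-\chi i/n}(1\pm o(1))$ at the very top and $(\eta(1-\gamma)+\gamma)\,e^{-\chi i/n}(1\pm o(1))=(\beta(\gamma)/\gamma)\,e^{-\chi i/n}(1\pm o(1))$ once the level has spread to fraction $\gamma$, with immigration of rate $O(\lambda/n)$ from level $i-1$. The \emph{critical level} is the $i$ at which this rate equals $1$, and all three parts follow by comparing it with the structural thresholds $(\sigma-\delta)n$, $\sigma n$, $(\sigma+2\delta)n$ in the definition of $X_\sigma^*$, together with the observation that the only ``fast'' route into $X_\sigma^*$ is to take an individual with at least $(\sigma-\delta)n-1$ leading ones and flip exactly its first $k+3$ bits (a ``magic mutation'' of probability $\Theta(n^{-(k+3)})$, polynomial since $k$ is constant), after which the block in positions $[(\sigma+\delta)n,(\sigma+2\delta)n-1]$ meets the $\le 2/3$ constraint provided it has not been driven towards all-ones.

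\textbf{Part 1 (exponential lower bound).} Put the barrier at $i^{*}=(\sigma-\delta)n-1$. Since $\eta<e^{\chi(\sigma-\delta)}-\epsilon$, even the top rate at level $\ge i^{*}$ is at most $\eta\,(1-\chi/n)^{i^{*}}\le(e^{\chi(\sigma-\delta)}-\epsilon)\,e^{-\chi(\sigma-\delta)}(1+O(1/n))\le 1-\epsilon'$ for a constant $\epsilon'>0$ and $n$ large; thus $(X_t^{(i^{*})})_t$ is dominated by a \emph{subcritical} branching process with immigration rate $O(\lambda/n)$. A drift/potential argument on $X_t^{(i^{*})}$ (equivalently, domination by a Galton--Watson process with immigration whose occupancy has exponentially decaying tails) gives $X_t^{(i^{*})}=O(\log\lambda)$ for all $t\le e^{cn}$ with probability $1-e^{-\Omega(n)}$. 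Moreover, an individual carrying the optimal-type all-zero prefix but with $\lo<i^{*}$ has \leadingones\ fitness $<k+3$, so it is selected only under negligible selective pressure towards extending its $1$-suffix; since the optimal-type suffix in $[k+4,(\sigma-\delta)n-1]$ can essentially only be \emph{inherited} (selection does not bias the suffix bits of a low-fitness individual) and never arises in a high-fitness individual by the branching bound, building it by pure mutational drift on such individuals costs $2^{\Theta(n)}$ time. Hence no member of $X_\sigma^*$ is produced within $e^{cn}$ generations; a union bound over $e^{cn}$ generations and $\lambda=\poly(n)$ individuals keeps the failure probability $e^{-\Omega(n)}$.

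\textbf{Part 2 (polynomial upper bound).} With $\eta=e^{\chi\sigma}$ the rate at level $i$ is $e^{\chi(\sigma-i/n)}(1\pm o(1))$, which is $1+\Omega(1)$ for every $i\le(\sigma-\delta)n$ as long as that level occupies less than a suitable constant fraction of $P_t$; once it exceeds that fraction the level is essentially taken over. A level-based / supercritical multi-type branching argument then shows that with probability $1-e^{-\Omega(n)}$, within $O(n\log n)$ generations a constant fraction of $P_t$ has at least $(\sigma-\delta)n-1$ leading ones, and this persists. Because the rate drops below $1$ beyond level $\sigma n$, no individual overshoots to level $(\sigma+2\delta)n$, so the bits in $[(\sigma+\delta)n,(\sigma+2\delta)n-1]$ remain essentially unbiased and, by a Chernoff bound, carry at most a $2/3$ fraction of ones w.o.p.\ throughout. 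Finally, with $\Omega(\lambda)$ individuals at level $\ge(\sigma-\delta)n-1$ every generation, the magic mutation (together with the tail block staying $\le 2/3$, an extra $\Theta(1)$ factor) is produced by each such offspring with probability $\Theta(n^{-(k+3)})$; over $\tau=\Theta(n^{k+4}/\lambda)$ generations the number of relevant offspring is $\Theta(\lambda\tau)=\Theta(n^{k+4})$, so at least one hits $X_\sigma^*$ except with probability $(1-\Theta(n^{-(k+3)}))^{\Theta(n^{k+4})}=e^{-\Omega(n)}$. Since $\lambda(\tau-1)\le T\le\lambda\tau$, this gives $\prob{T\le n^{k+4}}=1-e^{-\Omega(n)}$.

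\textbf{Part 3 (exponential expected time), and the main obstacle.} The hypothesis $\eta>(2e^{\chi(\sigma+3\delta)}-1)/(1-\delta)$ is calibrated so that the density-dependent rate $(\beta(\gamma)/\gamma)\,e^{-\chi i/n}$ stays above $1$ for every $i\le(\sigma+2\delta)n$ even when the high-fitness individuals already fill almost all of the population; hence the climb does not halt before level $(\sigma+2\delta)n$, and with probability $1-e^{-\Omega(n)}$ the population reaches, within $O(n)$ generations, a state where all but $o(\lambda)$ individuals have $\lo\ge(\sigma+2\delta)n$ and thus all-ones in $[(\sigma+\delta)n,(\sigma+2\delta)n-1]$. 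From such a state, entering $X_\sigma^*$ requires turning that $\delta n$-bit block from all-ones into at most a $2/3$ fraction of ones: within one mutation this needs $\ge\delta n/3$ prescribed flips, probability $e^{-\Omega(n)}$; gradually it is impossible, since any individual whose block has been substantially cleared of ones has $\lo\le(\sigma+\delta)n<(\sigma+2\delta)n$, hence ranks behind the bulk and is selected away before completing the descent. Therefore, conditioned on the w.o.p.\ overshoot event, the expected time to reach $X_\sigma^*$ is $e^{\Omega(n)}$, so $\expect{T}\ge(1-e^{-\Omega(n)})\,e^{\Omega(n)}=e^{\Omega(n)}$. The crux in all three parts is that linear ranking makes the reproductive rate of a level depend on the current population composition, so the thresholds above must be read off a genuinely \emph{multi-type, density-dependent} branching process rather than a fixed Galton--Watson process; making the supercritical climb of Part~2 rigorous --- proving that a constant fraction of the population reaches and remains at level $(\sigma-\delta)n-1$ against the slippage selection would otherwise cause --- is the technically heaviest step, and is exactly where the multi-type branching-process machinery advertised in the introduction is needed.
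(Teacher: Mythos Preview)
Your high-level structure---equilibrium at level $\approx n(\ln\eta)/\chi$, magic mutation from level $(\sigma-\delta)n$, the tail block distinguishing overshoot from non-overshoot---matches the paper's. But Part~1 has a genuine gap. Bounding $X_t^{(i^*)}$ at the single barrier $i^*=(\sigma-\delta)n-1$ does not give what you need. First, the ``immigration rate $O(\lambda/n)$'' is unjustified: immigration into level $i^*$ comes from level $i^*-1$, whose occupancy you have not bounded, and so on down a cascade. Second, and more seriously, even a correct bound $X_t^{(i^*)}=O(\log\lambda)$ would not prevent individuals from \emph{reaching} level $i^*$; it only bounds how many are there simultaneously. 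Over $e^{cn}$ generations with even one individual per generation at level $i^*$, the magic mutation (probability $\Theta(n^{-(k+3)})$) would fire. You must show that with probability $1-e^{-\Omega(n)}$ \emph{no} individual ever reaches level $i^*$ in $e^{cn}$ generations. The paper does this (Theorem~\ref{thm:unlikely-above}) by tracking the \emph{prefix sum}---the number of \onebits in a fixed-length prefix, not the leading-ones count---via a multi-type branching process whose types are the $\Omega(n)$ possible prefix sums above the core; crossing $\Omega(n)$ subcritical levels has exponentially small probability, which a single-level argument cannot deliver.

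You have also misplaced where the multi-type machinery is needed. In the paper it is used not for the supercritical climb of Part~2 but for the opposite direction: preventing overshoot beyond equilibrium (Theorem~\ref{thm:unlikely-above}), which feeds both Part~1 directly and Part~2 (to keep the tail block $\le 2/3$). The climb itself is handled by a two-dimensional drift argument (Theorem~\ref{thm:eq-time-below}) on the level of the $\gamma$-ranked individual together with the count $\lambda^+$ above it. Your Part~2 line ``the rate drops below $1$ beyond level $\sigma n$, so no individual overshoots'' is precisely the step that requires Theorem~\ref{thm:unlikely-above}. Finally, in Part~3 you omit the initial phase: before the bulk overshoots to level $(\sigma+2\delta)n$, why is the optimum not found? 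The paper treats this separately (Propositions~\ref{prop:111-ind} and~\ref{prop:prob-opt-in-n-square}), obtaining only constant probability of non-discovery in the first $O(n^2)$ generations---enough for an expectation bound---and your ``gradually impossible'' post-overshoot argument, while correct in spirit, needs the non-selective family-tree technique (Proposition~\ref{prop:belowcritical}) to bound the extinction time of low-ranked stray lineages against the $\Omega(n)$ bit-flips they would need.
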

\begin{proof}
  The theorem follows from Theorem~\ref{thm:succprob-corr-selpres},
  Theorem~\ref{thm:runtime-high-selpres},
  and Corollary~\ref{cor:runtime-low-selpres}.
\end{proof}

Theorem~\ref{thm:main-result} describes how the runtime of
the Linear Ranking EA on fitness function \selpres depends on the main
problem parameters $\sigma$ and $k$, the mutation rate $\chi$ and the
selection pressure $\eta$. The theorem is illustrated in
Figure~\ref{fig:finalresult} for problem parameter $\sigma=1/2$.  Each
point in the grey area indicates that for the corresponding values of
mutation rate $\chi$ and selection pressure $\eta$, the EA has
either expected exponential runtime or exponential runtime with
overwhelming probability (\ie is highly inefficient). The
thick line indicates values of $\chi$ and $\eta$ where the 
runtime of the EA is polynomial with overwhelmingly high
probability (\ie is efficient). The runtime in the white 
regions is not analysed. 

The theorem and the figure indicate that setting one of the two
parameters of the algorithm (\ie  $\eta$ or $\chi$) independently of
the other parameter is insufficient to guarantee polynomial
runtime. For example, setting the selection pressure parameter to
$\eta:=3/2$ only yields polynomial runtime for certain settings of the
mutation rate parameter $\chi$, while it leads to exponential 
runtime for other settings of the mutation rate parameter.  Hence, it
is rather the balance between the mutation rate $\chi$ and the
selection pressure $\eta$, \ie  the \emph{mutation-selection
  balance}, that determines the runtime for the Linear
Ranking EA on this problem. More specifically, a too high setting of
the selection pressure parameter $\eta$ can be compensated by
increasing the mutation rate parameter $\chi$. Conversely, a too low
parameter setting for the mutation rate $\chi$ can be compensated by
decreasing the selection pressure parameter $\eta$.
Furthermore, the theorem shows that the runtime can be highly
sensitive to the parameter settings. Notice that the margins between
the different runtime regimes are determined by the two parameters
$\epsilon$ and $\delta$ that can be set to any constants arbitrarily
close to 0. Hence, decreasing the selection pressure below
$\exp(\chi\sigma)$ by any constant, or increasing the mutation rate
above $\ln(\eta)/\sigma$ by any constant, will increase the 
runtime from polynomial to exponential.
Finally, note that the optimal mutation-selection balance
$\eta=\exp(\chi\sigma)$ depends on the problem parameter $\sigma$.
Hence, there exists no problem-independent optimal balance between the
selection pressure and the mutation rate.

Before proving Theorem~\ref{thm:main-result}, we mention that also
previous analyses have shown that the runtime of randomised search 
heuristics can depend critically on the parameter settings.  In 
the case of EAs, it is known that the population size is important
\cite{He2002,Jansen2005Offspring,Witt2006MuOneEA}. In fact, even small
changes to the population size can lead to an exponential increase in
the runtime \cite{Storch2008ParentPop,Witt2008Population}. Another
example is the evaporation factor in Ant Colony Optimisation, where a
small change can increase the runtime from polynomial to exponential
\cite{NeumannWitt2006Ant,Doerr2007ACO,Doerr2007Ant}. A distinguishing
aspect of the result in this paper is that the runtime is here 
shown to depend critically on the relationship between \emph{two} 
parameters of the algorithm.

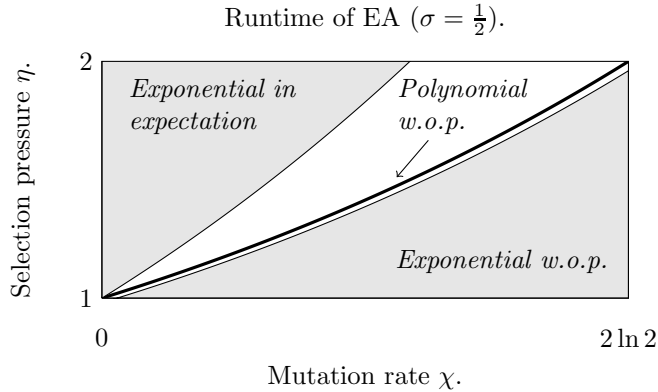
\begin{figure}
  \centering
  \begin{tikzpicture}[scale=0.7,domain=0:1.386294,samples=100,parametric=true]

    \begin{scope}
      \clip (0,0) rectangle (10,4.5);
      \draw[very thick]   plot[id=equal] function{(t/log(4))*10,4.50*(exp(t/2)-1)} node[midway,sloped,above] {};
      \draw[fill=gray!20] plot[id=upper] function{(t/log(4))*10,4.50*(2*exp(t/2)-2)}  -- (0,4.5) -- (0,0);
      \draw[fill=gray!20] plot[id=lower] function{(t/log(4))*10,4.50*(exp(t/2-0.02)-1)} -- (10,0.0) -- (0,0);
    \end{scope}

    \draw (0,0) -- (10,0) -- (10,4.5) -- (0,4.5) -- (0,0);
    \node              at (5,5.3)      {Runtime of EA ($\sigma=\frac{1}{2}$).};
    \node              at (5,-1.5)     {Mutation rate $\chi$.};
    \node              at (0, -0.75)   {0};
    \node              at (10,-0.75)   {$2\ln 2$};
    \node[rotate=90]   at (-1.5, 2.25) {Selection pressure $\eta$.};
    \node[anchor=east] at (0, 0)       {1};
    \node[anchor=east] at (0, 4.5)     {2};

    \draw[<-] (5.6,2.3) -- (6.2,3.0);
    \node[text width=2.5cm] at (2.3,3.65) {\emph{Exponential in expectation}};
    \node[text width=2.5cm] at (7.4,3.65) {\emph{Polynomial w.o.p.}};
    \node                   at (7.6,0.70) {\emph{Exponential w.o.p.}};

  \end{tikzpicture}
  \caption{Illustration of the main result (Theorem~\ref{thm:main-result}),
    indicating the runtime of the EA on \selpres for problem parameter $\sigma=1/2$, 
    as a function of the mutation rate $\chi$ (horizontal axis) 
    and the selection pressure $\eta$ (vertical axis).}
  \label{fig:finalresult}
\end{figure}

\section{Runtime Analysis}\label{sec:runtime-analysis}
This section gives the proofs of Theorem~\ref{thm:main-result}. The
analysis is conceptually divided into two parts. In Sections
\ref{sec:equilibrium} and \ref{sec:balance}, the behaviour of the main
``core'' of the population is analysed, showing that the population
enters an equilibrium state. This analysis is sufficient to prove the
polynomial upper bound in Theorem~\ref{thm:main-result}.
Sections~\ref{sec:embeddings} and \ref{sec:too-high} analyse the 
behaviour of the ``stray'' individuals that sometimes move away 
from the core of the population. This analysis is necessary to prove
the exponential lower bound in Theorem~\ref{thm:main-result}.

\subsection{Population Equilibrium}\label{sec:equilibrium}

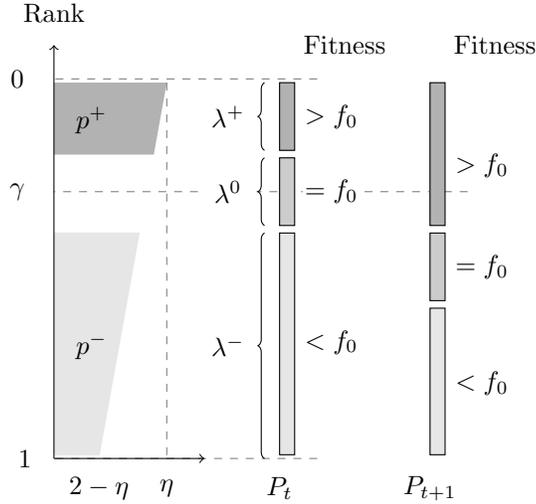
\begin{figure}
  \centering  
  \begin{tikzpicture}[bend angle=45,node distance=0.4cm,place]

    \draw[dashed,gray] (2.5,3.55) -- (-3,3.55) node [left=0.1cm,black] {$\gamma$};
    \draw[dashed,gray] (0.5,0.00) -- (-3,0.00) node [left=0.0cm,black] {$1$};
    \draw[dashed,gray] (0.5,5.05) -- (-3,5.05) node [left=0.1cm,black] {$0$};

    \filldraw[fill=gray!20] (0.00, 0.05) rectangle (0.20, 3.00);
    \filldraw[fill=gray!40] (0.00, 3.10) rectangle (0.20, 4.00);
    \filldraw[fill=gray!60] (0.00, 4.10) rectangle (0.20, 5.00);
    \node at (0,-0.4) {$P_t$};

    \draw[snake=brace] (-0.2,0.0) -- node[left=0.1cm]            {$\lambda^-$} (-0.2,3.0);
    \draw[snake=brace] (-0.2,3.1) -- node[left=0.1cm,fill=white] {$\lambda^0$} (-0.2,4.0);
    \draw[snake=brace] (-0.2,4.1) -- node[left=0.1cm]            {$\lambda^+$} (-0.2,5.0);

    \node[anchor=west] at (0.3,5.5) {Fitness};
    \node[anchor=west,fill=white] at (0.3,3.55) {$=f_0$};
    \node[anchor=west] at (0.3,4.55) {$>f_0$};
    \node[anchor=west] at (0.3,1.55) {$<f_0$};

    \filldraw[fill=gray!20] (2.00, 0.05) rectangle (2.20, 2.00);
    \filldraw[fill=gray!40] (2.00, 2.10) rectangle (2.20, 3.00);
    \filldraw[fill=gray!60] (2.00, 3.10) rectangle (2.20, 5.00);
    \node at (2,-0.4) {$P_{t+1}$};

    \node[anchor=west] at            (2.3,5.5) {Fitness};
    \node[anchor=west,fill=white] at (2.3,2.55) {$=f_0$};
    \node[anchor=west] at            (2.3,3.90) {$>f_0$};
    \node[anchor=west] at            (2.3,1.00) {$<f_0$};

    \begin{scope}
      \clip (-3,0) -- (-2.4,0) -- (-1.5,5) -- (-3,5) -- (-3,0);
      \filldraw[gray!20] (-3,0.05) rectangle (-1.5,3);
      \filldraw[gray!60] (-3,4.05) rectangle (-1.5,5);
      \node at (-2.5,4.5) {$p^+$};
      \node at (-2.5,1.5) {$p^-$};
    \end{scope}

    \draw (-2.4,-0.4) node {$2-\eta$};
    \draw (-1.5,-0.4) node {$\eta$};
    \draw[dashed,gray] (-1.5,5) -- (-1.5,0);

    \draw[->] (-3,0) --(-1,0);
    \draw[->] (-3,0) --(-3,5.5) node[above] {Rank};

  \end{tikzpicture}

  \caption{Impact of one generation of selection and mutation from the
    point of view of the $\gamma$-ranked individual in population
    $P_t$ \cite{Lehre2009FOGA}.}
  \label{fig:selection-bernoulli}
\end{figure}

As long as the global optimum has not been found, the population is
evolving with respect to the number of leading 1\nobreakdash-bits.  In
the following, we will prove that the population eventually reaches an
equilibrium state in which the population makes no progress with
respect to the number of leading 1\nobreakdash-bits.
The population equilibrium can be explained informally as follows.  On
one hand, the selection mechanism increases the number of individuals
in the population that have a relatively high number of leading
1\nobreakdash-bits. On the other hand, the mutation operator may flip
one of the leading 1\nobreakdash-bits, and the probability of doing so
clearly increases with the number of leading 1\nobreakdash-bits in the
individual. Hence, the selection mechanism causes an influx of
individuals with a high number of leading 1\nobreakdash-bits, and the
mutation causes an efflux of individuals with a high number of leading
1\nobreakdash-bits. At a certain point, the influx and efflux reach a
balance which is described in the field of population genetics as
mutation-selection balance.

Our first goal will be to describe the population when it is in the
equilibrium state. This is done rigorously by considering each
generation as a sequence of $\lambda$ Bernoulli trials, where each
trial consists of selecting an individual from the population and then
mutating that individual. Each trial has a certain probability of
being successful in a sense that will be described later, and the
progress of the population depends on the sum of successful trials,
\ie the population progress is a function of a certain Bernoulli
process.

\subsubsection{Ranking Selection as a Bernoulli Process}\label{sec:bernoulli}

We will associate a Bernoulli process with the selection step in any
given generation of the non-elitist EA, similar to Chen \etal
\cite{Chen2008Selection}.  For notational convenience, the individual
that has rank $\gamma\lambda$ in a given population, will be
called the $\gamma$-ranked individual of that population. For any
constant $\gamma, 0<\gamma<1$, assume that the $\gamma$-ranked
individual has $f_0:=\xi n$ leading \onebits for some
constant $\xi$. As illustrated in Fig.~\ref{fig:selection-bernoulli},
the population can be partitioned into three groups of individuals:
$\lambda^+$-individuals with fitness higher than $f_0$, 
$\lambda^0$-individuals with fitness equal to $f_0$, and 
$\lambda^-$-individuals with fitness less than $f_0$.  Clearly,
$\lambda^++\lambda^0+\lambda^-=\lambda$, and $0\leq
\lambda^+<\gamma\lambda$.

The following theorem makes a precise statement about the position
$\xi^*=\ln(\beta(\gamma)/\gamma)/\chi$ for a given rank
$\gamma,0<\gamma<1$, in which the population equilibrium
occurs. Informally, the theorem states that the number of leading
\onebits in the $\gamma$\nobreakdash-ranked individual is unlikely to decrease
when it is below $\xi^*n$, and is unlikely to increase, when it is
above $\xi^*n$.

\begin{theorem}\label{thm:eq-pos}
  For any constant $\gamma,0<\gamma<1$, and any $t_0>0$, define for
  all $t\geq 1$ the random variable $L_t$ as the number of leading 
  \onebits in the $\gamma$-ranked individual in 
  generation $t_0+t$.
  For any $t\leq e^{c\lambda}$, define 
  $T^*:=\min\{t,T-t_0\}$, where $T$ is the number of generations until an optimal search point
  is found. 
  Furthermore, for any constant mutation rate $\chi>0,$ define
  $\xi^* := \ln\left(\beta(\gamma)/\gamma\right)/\chi$, where 
  the function $\beta(\gamma)$ is as given in Eq. (\ref{eq:beta}).
  Then for any constant $\delta,0<\delta<\xi^*$, it holds that
  \begin{align*}
    & \prob{\min\left\{ \xi_0 n, (\xi^*-\delta)n \right\} 
          > \min_{0\leq i\leq T^*} L_i
          \mid L_0\geq \xi_0 n
         } 
    = e^{-\Omega(\lambda)}\\
    & \prob{\max\left\{ \xi_0 n, (\xi^*+\delta)n \right\} 
          < \max_{0\leq i\leq T^*}L_i
          \mid L_0\leq \xi_0 n
         }
     = e^{-\Omega(\lambda)}
  \end{align*}  
  where $c>0$ is some constant.
\end{theorem}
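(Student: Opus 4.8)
The plan is to analyze a single generation from the viewpoint of the $\gamma$-ranked individual using the Bernoulli-process decomposition already set up: in generation $t$, the next population is produced by $\lambda$ independent select-then-mutate trials, and we classify a trial as ``good'' if the produced offspring has at least $f_0$ leading $1$-bits (where $f_0 = \xi n$ is the current count of the $\gamma$-ranked individual). The number of good trials is a sum of $\lambda$ independent Bernoulli variables, so it concentrates around its mean by a Chernoff bound. The key quantity is $p$, the probability a single trial is good. If $p > \gamma$ by a constant margin, then w.o.p. strictly more than $\gamma\lambda$ offspring have fitness $\geq f_0$, so the $\gamma$-ranked individual in the next generation has at least $f_0$ leading $1$-bits — the count does not decrease. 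Conversely, if $p < \gamma$ by a constant margin, w.o.p. fewer than $\gamma\lambda$ offspring have fitness $\geq f_0$, so the new $\gamma$-ranked count is below $f_0$. The crux is to estimate $p$ as a function of $\xi$.

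First I would lower-bound $p$ when $\xi < \xi^* - \delta$. A trial is good if it selects one of the $\lambda^+ + \lambda^0 \geq \gamma\lambda$ individuals with fitness $\geq f_0$ (this happens with probability $\geq \beta(\gamma)$ by definition of the sampling rule in line 9, since those individuals occupy ranks $1,\dots,\lceil\gamma\lambda\rceil$ at worst) \emph{and then} mutation does not flip any of its (at most, essentially) $\xi n$ leading $1$-bits; the latter has probability at least $(1-\chi/n)^{\xi n} \to e^{-\chi\xi}$, and in fact $\geq e^{-\chi\xi}(1-o(1))$. Hence $p \geq \beta(\gamma)\, e^{-\chi\xi}(1-o(1))$. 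Since $\xi < \xi^* - \delta$ means $e^{-\chi\xi} > e^{-\chi\xi^*}e^{\chi\delta} = (\gamma/\beta(\gamma))e^{\chi\delta}$, we get $p \geq \gamma\, e^{\chi\delta}(1-o(1)) > \gamma(1+\delta')$ for some constant $\delta'>0$ and $n$ large. Symmetrically, for $\xi > \xi^* + \delta$ I would upper-bound $p$: a good trial must select an individual of fitness $\geq f_0$ (probability $\beta(\gamma) + o(1)$, using the $\alpha$-bound stated in the paper to control the slack from the at most $\lambda^+ < \gamma\lambda$ strictly-better individuals and the boundary rank) and must preserve all $\geq \xi n$ leading $1$-bits (probability $\leq e^{-\chi\xi}(1+o(1))$), plus a lower-order term for producing $\geq f_0$ leading $1$-bits ``from below'' by flipping $0$-bits to $1$-bits, which costs an extra factor $\Theta(\chi/n)$ per missing bit and is negligible. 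Thus $p \leq \beta(\gamma) e^{-\chi\xi}(1+o(1)) < \gamma\, e^{-\chi\delta}(1+o(1)) < \gamma(1-\delta'')$.

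Then I would run the single-generation statement into a union bound over $t \leq e^{c\lambda}$ generations, choosing $c>0$ small enough that $e^{c\lambda}\cdot e^{-\Omega(\lambda)} = e^{-\Omega(\lambda)}$ still holds; here one also uses that the process is stopped at $T^*$, so we never need to reason past the optimum. The argument is a drift/barrier argument: starting from $L_0 \geq \xi_0 n$, as long as the current count is in the band $[(\xi^*-\delta)n, \,\xi_0 n)$ (if $\xi_0 > \xi^*-\delta$) the bound $p>\gamma(1+\delta')$ prevents a decrease below the starting level, and once the count is $\geq (\xi^*-\delta)n$ it likewise cannot drop below $(\xi^*-\delta)n$; combining, $\min_i L_i \geq \min\{\xi_0 n, (\xi^*-\delta)n\}$ w.o.p. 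The second inequality is the mirror image using the upper bound on $p$ and the fact that a single generation can only increase the leading-$1$ count by $O(1)$ in expectation per individual so overshoot past $(\xi^*+\delta)n$ in one step is also exponentially unlikely.

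The main obstacle is making the estimate of $p$ tight enough near the boundary: we need the $o(1)$ error terms (from $(1-\chi/n)^{\xi n}$ versus $e^{-\chi\xi}$, from the discretization of ranks $\gamma\lambda$ versus $\lceil\gamma\lambda\rceil$, and from the contribution of offspring that reach fitness $f_0$ by flipping leading $0$-bits to $1$-bits) to be genuinely $o(1)$ uniformly, so that they are dominated by the \emph{constant} gaps $e^{\pm\chi\delta}-1$. This is where the hypothesis that $\chi$ is constant and $\delta$ is a fixed constant is essential. A secondary technical point is handling the conditioning $L_0 \geq \xi_0 n$ and the stopping at $T^*$ cleanly, which is done by noting that the per-generation bad event is defined measurably in terms of $P_t$ alone, so a union bound over generations is valid regardless of the history.
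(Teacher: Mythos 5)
Your proposal is correct and follows essentially the same route as the paper's proof: treat each generation as $\lambda$ independent select-then-mutate Bernoulli trials, bound the success probability of producing an offspring above the barrier by $\beta(\gamma)e^{-\chi\xi}$ (plus the $O(\chi/n)$ term for crossing from below), compare it to $\gamma$ using the constant gap $e^{\pm\chi\delta}$, apply a Chernoff bound per generation, and finish with a union bound over at most $e^{c\lambda}$ generations conditioned on the barrier not yet having been crossed. The paper formalizes your conditioning step via the events $\mathcal{F}_j^\pm$ and $\mathcal{G}_j^\pm$ and avoids your one-step-overshoot remark by measuring successes directly against the barrier level $\xi n$ rather than the current count, but these are presentational differences, not differences of substance.
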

\begin{proof}
For \emph{the first statement}, define 
$\xi := \min\{ \xi_0 , \xi^* - \delta\}$.
Consider the events $\mathcal{F}^-_j$ and $\mathcal{G}^-_j$, defined
for $j,0\leq j<t,$ by
\[
  \mathcal{F}^-_j   : L_{j+1} < \xi n,\quad\text{and}\quad
  \mathcal{G}^-_j   : \min_{0\leq i\leq j} L_i\geq \xi n.
\]
The first probability in the theorem can now be expressed as
\begin{multline*}
  \prob{\cup_{0\leq j<T^*} \mathcal{F}_j^- \wedge  \mathcal{G}_j^- \mid L_0\geq \xi_0 n}\\
     \leq \sum_{j=0}^{t-1} \prob{\mathcal{F}_j^-\wedge \mathcal{G}_j^-\mid L_0\geq \xi_0 n}\\
    \leq \sum_{j=0}^{t-1} \prob{\mathcal{F}_j^-\mid \mathcal{G}_j^-\wedge L_0\geq \xi_0 n},
\end{multline*}
where the first inequality follows from the union bound. The second
inequality follows from the definition of conditional probability,
which is well-defined in this case because 
$\prob{\mathcal{G}_j^-\mid L_0\geq \xi_0 n}>0$ clearly holds.

To prove the first statement of the theorem, it 
now suffices to choose a not too large constant $c$, and show
that for all $j, 0\leq j<t$,
\begin{align*}
\prob{\mathcal{F}_j^-\mid \mathcal{G}_j^-\wedge L_0\geq \xi_0 n}=e^{-\Omega(\lambda)}.
\end{align*}

To show this, we consider each iteration of the selection mechanism
in generation $j$
as a Bernoulli trial, where a trial is successful if the following
event occurs:
\begin{itemize}
\item[$\mathcal{E}^+_1$:] An individual with at least $\xi n$
  leading 1\nobreakdash-bits is selected, and none of the initial $\xi n$ 
  bits are flipped.
\end{itemize}
Let the random variable $X$ denote the number of successful trials.
Notice that the event $X\geq \gamma\lambda$ implies that the 
$\gamma$-ranked individual in the next generation has at least 
$\xi n$ leading 1\nobreakdash-bits, \ie, that event $\mathcal{F}_j^-$
does not occur.
From the assumption that $\xi\leq \ln(\beta(\gamma)/\gamma)/\chi-\delta$,
we get
\begin{align*}
  \frac{1}{e^{\xi\chi}} \geq \frac{\gamma }{\beta(\gamma)}\cdot e^{\delta\chi}.
\end{align*}
Hence it follows that
\begin{align*}
\expect{X\mid \mathcal{G}_j^-\wedge L_0\geq \xi_0 n} 
   & = \lambda\cdot \prob{\mathcal{E}^+_1\mid \mathcal{G}_j^-\wedge L_0\geq \xi_0 n}\\
   & \geq \beta(\gamma)\lambda\cdot \left(1-\frac{\chi}{n}\right)\left(1-\frac{\chi}{n}\right)^{\xi n-1}\\
   & \geq \beta(\gamma)\lambda\cdot \left(1-\frac{\chi}{n}\right)\cdot e^{-\xi\chi}\\
   & \geq \gamma\lambda\cdot \left(1-\frac{\chi}{n}\right)\cdot e^{\delta\chi}\\
   & \geq \gamma\lambda\cdot (1+\delta\chi)\cdot \left(1-\frac{\chi}{n}\right).
\end{align*}
For sufficiently large $n$, a Chernoff bound
\cite{motwani:randomized} therefore implies that 
\begin{align*}
  \prob{X<\gamma\lambda\mid \mathcal{G}_j^-\wedge L_0\geq \xi_0 n}=e^{-\Omega(\lambda)}.  
\end{align*}

For \emph{the second statement}, define 
$\xi:=\max\{ \xi_0 , \xi^* + \delta \}$.
Consider the events $\mathcal{F}^+_j$ and $\mathcal{G}^+_j$, defined
for $j,0\leq j<t,$ by
\[
  \mathcal{F}^+_j   : L_{j+1} > \xi n,\quad\text{and}\quad
  \mathcal{G}^+_j   : \min_{0\leq i\leq j} L_i\leq \xi n.
\]
Similarly to above, the second statement
can be proved by showing that
\begin{align*}
  \prob{\mathcal{F}_j^+\mid\mathcal{G}_j^+\wedge L_0\leq \xi_0n} = e^{-\Omega(\lambda)}  
\end{align*}
for all $j,0\leq j<t.$
To show this, we define a trial in generation $j$
successful if one of the following two events occurs:
\begin{itemize}
\item[$\mathcal{E}^+_2$:] An individual with at least $\xi n+1$
  leading 1\nobreakdash-bits is selected, and none of the initial 
  $\xi n+1$ bits are flipped.
\item[$\mathcal{E}^-_2$:] An individual with less than $\xi n+1$
  leading 1\nobreakdash-bits is selected, and the mutation of this 
  individual creates an individual with at least $\xi n+1$ 
  leading 1\nobreakdash-bits.
\end{itemize}
Let the random variable $Y$ denote the number of successful trials.
Notice that the event $Y< \gamma\lambda$ implies that the 
$\gamma$\nobreakdash-ranked individual in the next generation has no
more than $\xi n$ leading 1\nobreakdash-bits, 
\ie, that event $\mathcal{F}_j^+$ does not occur.
Furthermore, since the $\gamma$\nobreakdash-ranked individual in the current
generation has no more than $\xi n$ leading 1-bits, less than 
$\gamma\lambda$ individuals have more than $\xi n$ 
leading 1\nobreakdash-bits. Hence, the event $\mathcal{E}^+_2$ occurs with probability
\begin{align*}
  \prob{\mathcal{E}^+_2\mid\mathcal{G}_j^+\wedge L_0\leq \xi_0n} 
    & \leq \beta(\gamma) \left(1-\frac{\chi}{n}\right)^{\xi n+1}
      \leq \frac{\beta(\gamma)}{e^{\xi\chi}}.
\end{align*}
If the selected individual has $k\geq 1$ 0-bits within the first $\xi
n+1$ bit positions, then the probability of mutating this individual
into an individual with at least $\xi n+1$ leading 1\nobreakdash-bits, and
hence also the probability of event $\mathcal{E}^-_2$, is bounded 
from above by 
\begin{align*}
  \prob{\mathcal{E}^-_2\mid\mathcal{G}_j^+\wedge L_0\leq \xi_0n} 
         & \leq \left(\frac{\chi}{n}\right)^k\left(1-\frac{\chi}{n}\right)^{\xi n+1-k}
           \leq \frac{\chi}{ne^{\xi\chi}}.
\end{align*}
From the assumption that 
$\xi\geq \ln(\beta(\gamma)/\gamma)/\chi+\delta$, we get
\begin{align*}
\frac{1}{e^{\xi\chi}} \leq \frac{\gamma}{\beta(\gamma)}\cdot e^{-\delta\chi}.
\end{align*}
Hence, for any constant $\delta',0<\delta'<1-e^{-\delta\chi}<1$, we have 
\begin{align*}
  \expect{Y\mid\mathcal{G}_j^+\wedge L_0\leq \xi_0n} 
     & =     \lambda\cdot \prob{\mathcal{E}_2^+\mid\mathcal{G}_j^+\wedge L_0\leq \xi_0n}\\
     &\quad +\lambda\cdot\prob{\mathcal{E}_2^-\mid\mathcal{G}_j^+\wedge L_0\leq \xi_0n}\\
     & \leq \lambda \left(\beta(\gamma)+\frac{\chi}{n}\right)\cdot e^{-\xi \chi}\\
     & \leq \gamma\lambda \left(1+\frac{\chi}{n\beta(\gamma)}\right)\cdot e^{-\delta\chi}\\
     & \leq \gamma\lambda (1-\delta') \left(1+\frac{\chi}{n\beta(\gamma)}\right).
\end{align*}
For sufficiently large $n$, a Chernoff bound therefore implies that 
\begin{align*}
\prob{Y\geq \gamma\lambda\mid\mathcal{G}_j^+\wedge L_0\leq \xi_0n}=e^{-\Omega(\lambda)}.
\end{align*}
\end{proof}

In the following, we will say that the $\gamma$-ranked individual $x$
is in the \emph{equilibrium position} with respect to a given constant
$\delta>0$, if the number of leading 1-bits in individual $x$ is larger than
$(\xi^*-\delta)n,$ and smaller than $(\xi^*+\delta)n$, where
$\xi^*=\ln(\beta(\gamma)/\gamma)/\chi$.

\subsubsection{Drift Analysis in Two Dimensions}\label{sec:drift}
Theorem~\ref{thm:eq-pos} states that when the population reaches a
certain region of the search space, the progress of the population will
halt and the EA enters an equilibrium state. Our next goal is to
calculate the expected time until the EA enters the equilibrium
state. More precisely, for any constants $\gamma, 0<\gamma<1$ and
$\delta>0$, we would like to bound the expected number of generations
until the fitness $f_0$ of the $\gamma$-ranked individual becomes at
least $(\ln(\beta(\gamma)/\gamma)/\chi-\delta)n$. Although the
fitness $f_0$ will have a tendency to drift towards higher values, it
is necessary to take into account that the fitness can in general both
decrease and increase according to stochastic fluctuations.

Drift analysis has proven to be a powerful mathematical technique to
analyse such stochastically fluctuating processes \cite{He2004}. Given
a distance measure (sometimes called potential function) from any
search point to the optimum, one estimates the drift $\Delta$
towards the optimum in one generation, and bounds the expected time 
to overcome a distance of $b(n)$ by $b(n)/\Delta$.

However, in our case, a direct application of drift analysis with
respect to $f_0$ will give poor bounds, because the drift of
$f_0$ depends on the value of a second variable $\lambda^+$. The
probability of increasing the fitness of the $\gamma$-ranked
individual is low when the number of individuals in the population
with higher fitness, \ie $\lambda^+$, is low. However, it is
still likely that the sum $\lambda^0+\lambda^+$ will increase, thus
increasing the number of good individuals in the population.

Several researchers have discussed this alternating behaviour of
population-based EAs \cite{Witt2006MuOneEA,Chen2008Selection}. Witt
shows that by taking into account replication of good individuals, one
can improve on trivial upper runtime bounds for the ($\mu$+$1$) EA,
e.g. from $O(\mu n^2)$ on \leadingones into $O(\mu n\log n + n^2)$
\cite{Witt2006MuOneEA}. Chen \etal describe a similar situation in
the case of an elitist EA, which goes through a sequence of
two-stage phases, where the first stage is characterised by
accumulation of leading individuals, and the second stage is
characterised by acquiring better individuals
\cite{Chen2008Selection}.

Generalised to the non-elitist EA described here, this corresponds
to first accumulation of $\lambda^+$-individuals, until one eventually
gains more than $\gamma\lambda$ individuals with fitness higher than
$f_0$. In the worst case, when $\lambda^+=0$, one expects that $f_0$
has a small positive drift. However, when $\lambda^+$ is high, there
is a high drift. When the fitness is increased, the value of
$\lambda^+$ is likely to decrease. To take into account this mutual
dependency between $\lambda^+$ and $f_0$, we apply drift analysis in
conceptually two dimensions, finding the drift of both $f_0$
and $\lambda^+$. Similar in vein to this two dimensional drift
analysis, is the analysis of simulated annealing due to Wegener, in
which a gambler's ruin argument is applied with respect to a potential
function having two components \cite{Wegener2005SA}.

The drift analysis applies the following simple property of function
$\beta$ which follows from its definition in
Eq.~(\ref{eq:beta}).
\begin{lemma}\label{lem:beta-property}
  For all $x\geq 1,$ and $\gamma,0<\gamma< 1$, the function $\beta$ defined in Eq. (\ref{eq:beta}) satisfies
  \begin{align*}
    \frac{\beta(\gamma/x)}{\beta(\gamma)}
     \geq \frac{1}{x},    
  \end{align*}
\end{lemma}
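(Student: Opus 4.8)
The plan is to work directly from the closed form $\beta(\gamma)=\gamma\bigl(\eta(1-\gamma)+\gamma\bigr)$ given in Eq.~(\ref{eq:beta}), and to reduce the claimed inequality to the monotonicity of a simple affine function of $\gamma$. The key observation is that $\beta(\gamma)/\gamma=\eta(1-\gamma)+\gamma=\eta-(\eta-1)\gamma$, so $\gamma\mapsto\beta(\gamma)/\gamma$ is an affine function with slope $-(\eta-1)$. Since the paper's standing assumption is $1<\eta\le 2$, this slope is negative, hence $\gamma\mapsto\beta(\gamma)/\gamma$ is (strictly) decreasing on $(0,1)$.

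Given this, the argument is short. Because $x\ge 1$ we have $\gamma/x\le\gamma$, and both $\gamma/x$ and $\gamma$ lie in $(0,1)$, so monotonicity yields $\beta(\gamma/x)/(\gamma/x)\ge\beta(\gamma)/\gamma$. Noting that $\beta(\gamma)>0$ for $\gamma\in(0,1)$, one may multiply both sides by $(\gamma/x)/\beta(\gamma)>0$ to obtain $\beta(\gamma/x)/\beta(\gamma)\ge(\gamma/x)/\gamma=1/x$, which is exactly the assertion of the lemma. (Equivalently, one can clear denominators directly: the inequality is seen to be equivalent to $(1/x)(1-\eta)\ge 1-\eta$, which holds since $1-\eta\le 0$ and $1/x\in(0,1]$.)

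There is no genuine obstacle here; this is a routine computation. The only point requiring care is the sign of $\eta-1$: it is precisely the hypothesis $\eta>1$ (equivalently, that the ranking function $\alpha$ is non-increasing) that makes $\beta(\cdot)/(\cdot)$ decreasing. For completeness one may remark that if $\eta=1$ the ratio is constant and the inequality holds with equality, and that $\eta>2$ never occurs under the paper's conventions, so the stated range $1<\eta\le 2$ covers all relevant cases and the bound is tight at $x=1$.
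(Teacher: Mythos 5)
Your proof is correct: writing $\beta(\gamma)/\gamma=\eta-(\eta-1)\gamma$ and using $\eta>1$ to get monotonicity (or, equivalently, the cleared-denominator inequality $(1/x)(1-\eta)\geq 1-\eta$) is exactly the routine verification the paper intends, since the paper states the lemma as a simple property that ``follows from its definition'' and omits the computation. No gaps; the remark that equality holds at $x=1$ (and for $\eta=1$) is a nice sanity check but not needed under the standing assumption $1<\eta\leq 2$.
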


The following theorem shows that if the $\gamma$-ranked individual in
a given population is below the equilibrium position, then the
equilibrium position will be reached within expected $O(\lambda n^2)$
function evaluations.
\begin{theorem}\label{thm:eq-time-below}
  Let $\gamma$ and $\delta$ be any constants with $0<\gamma<1$ and
  $\delta>0$. The expected number of function evaluations until 
  the $\gamma$-ranked individual of the Linear Ranking EA 
  with population size $\lambda\geq c\ln n$, for some constant $c>0$
  that depends on $\gamma$, 
  attains at least $n(\ln(\beta(\gamma)/\gamma)/\chi-\delta)$ leading
  1\nobreakdash-bits or the optimum is reached, is $O(\lambda n^2)$.
\end{theorem}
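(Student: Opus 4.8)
The plan is to apply drift analysis (in the style of He and Yao) with respect to a two-dimensional potential that tracks both the fitness $f_0$ of the $\gamma$-ranked individual and the ``mass'' of individuals at least as good as $f_0$ in the population. Concretely, let $\mu = \lambda^0 + \lambda^+$ be the number of individuals in $P_t$ whose fitness is at least $f_0$. I would define a distance function of the form $d(P_t) = a\cdot\bigl((\xi^*-\delta)n - f_0\bigr)^+ + b\cdot g(\mu)$, where $g$ is a decreasing function (e.g.\ $g(\mu) = \max\{0, \log(\gamma\lambda/\mu)\}$ or a piecewise-linear surrogate thereof), and $a, b>0$ are constants chosen so that every elementary move decreases $d$ in expectation by a constant amount. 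The intuition — matching the informal discussion preceding the theorem — is that $f_0$ makes slow but positive progress when $\mu$ is small, while $\mu$ itself grows geometrically under selection; the potential $d$ is designed so that ``accumulation'' steps (increasing $\mu$ toward $\gamma\lambda$) and ``breakthrough'' steps (increasing $f_0$) both register as progress.

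The key steps, in order, are: (i) Condition on the current population and, using the same Bernoulli-trial setup as in Theorem~\ref{thm:eq-pos}, show that when $\mu < \gamma\lambda$ the expected number of selected-and-correctly-copied good individuals is $\beta(\mu/\lambda)\lambda \geq (1-\chi/n)\cdot$ (something bounded below in terms of $\mu$), so that by Lemma~\ref{lem:beta-property} the expected new value of $\mu$ is a constant factor larger than $\mu$ (as long as $\mu$ is not already $\Omega(\gamma\lambda)$); a Chernoff bound then gives that $\mu$ at least $(1+c')$-multiplies with probability $1-e^{-\Omega(\lambda)}$, so the number of generations to drive $\mu$ up to $\gamma\lambda$ is $O(\log\lambda)$, hence (since $\lambda\geq c\ln n$) at most $O(n)$ in the worst case, and in expectation $O(\log n)$. (ii) Once $\mu \geq \gamma\lambda$, the $\gamma$-ranked individual has fitness $\geq f_0$, and with constant probability one of the selected good individuals has a leading-ones-increasing bit flip at exactly the right position (probability $\Theta(1/n)$ of flipping the first 0-bit, times $\Theta(1)$ of not flipping a leading 1, times $\Theta(1)$ selection probability), so conditional on $\mu\geq\gamma\lambda$ the fitness $f_0$ increases by at least $1$ within $O(n)$ generations in expectation. (iii) Combine (i) and (ii): each unit of fitness gain costs $O(n)$ generations of ``climb plus breakthrough,'' and we need $O(n)$ units, giving $O(n^2)$ generations, i.e.\ $O(\lambda n^2)$ function evaluations. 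Steps (i)–(iii) are made rigorous by verifying that the chosen $d$ has drift $\Delta = \Omega(1/n)$ everywhere on $\{d>0\}$ and $d = O(n)$ initially, so the drift theorem yields $O(n^2)$ generations.

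The main obstacle is step (i) combined with the coupling in step (ii): one must handle the \emph{mutual dependency} between $\mu$ and $f_0$ carefully, because a fitness increase typically destroys most of the accumulated mass $\mu$ (the individuals that were tied at $f_0$ are now strictly below the new $f_0$), so the potential must be weighted so that the $\Theta(1)$ drop in the $g(\mu)$-component when $f_0$ jumps is more than paid for by the drop in the $a\cdot((\xi^*-\delta)n - f_0)^+$ component; this forces $a$ to be a sufficiently large constant relative to $b$, and one must check the inequality $a\cdot 1 \geq b\cdot(g(\text{small}) - g(\gamma\lambda))= b\cdot O(\log\lambda) = b\cdot O(\log n)$, which is why the potential's $\mu$-component should be bounded by a \emph{constant} (a capped/logarithmic $g$) rather than growing with $\lambda$. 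A secondary technical point is ensuring the Chernoff estimates in step (i) hold uniformly over all $\mu$ in the relevant range — in particular that the multiplicative growth factor of $\mu$ stays bounded below by a constant $>1$ until $\mu$ is within a constant factor of $\gamma\lambda$ — which is exactly where Lemma~\ref{lem:beta-property} is invoked. Finally, one must verify the whole argument is not derailed by the optimum being found, which is handled by the stopping time $T$ already built into the statement (we stop as soon as an optimal point is reached).
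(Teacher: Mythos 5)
Your overall plan---a two-dimensional drift argument combining the fitness $f_0$ of the $\gamma$-ranked individual with a measure of how many good individuals the population has accumulated---is the same approach the paper takes, but the way you instantiate it has genuine gaps. First, your mass variable is mis-specified: with $\mu=\lambda^0+\lambda^+$ you count individuals of fitness at least $f_0$, and since $f_0$ is by definition the fitness of the $\gamma$-ranked individual, $\mu\geq\gamma\lambda$ holds in \emph{every} generation, so the accumulation phase of your step (i) (``drive $\mu$ up to $\gamma\lambda$'') is vacuous. The quantity that actually has to accumulate is $\lambda^+$, the number of individuals \emph{strictly} fitter than $f_0$. This mis-specification propagates into step (ii), which is wrong as stated: creating one individual with fitness above $f_0$ (the $\Theta(1/n)$ flip of the first 0-bit) raises the fitness of the best individual, not of the $\gamma$-ranked one; for $f_0$ to increase, at least $\gamma\lambda$ offspring strictly above $f_0$ must appear in the next generation, i.e.\ the new fitness level must first be taken over by a $\gamma$-fraction of the population. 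That takeover is exactly what the second coordinate of the potential must pay for, and your potential cannot register it because $\mu$ never drops below $\gamma\lambda$.

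Second, the concentration claim in step (i) fails precisely where it is needed: the number of preserved copies of strictly-better individuals in one generation is a sum of $\lambda$ Bernoulli trials with mean of order $\lambda^+$, so the Chernoff failure probability is $e^{-\Omega(\lambda^+)}$, not $e^{-\Omega(\lambda)}$; when $\lambda^+$ is constant (the typical state right after $f_0$ has just increased) there is no high-probability multiplicative growth, and the strictly-better lineage may simply die out. The paper circumvents this by arguing in expectation: using Lemma~\ref{lem:beta-property} and $\xi\leq\xi^*-\delta$ it shows the expected number of preserved $\lambda^+$-offspring is at least $\lambda^+$ (non-negative vertical drift), and adds a creation term of $\Omega(\lambda/n)$ from $\lambda^0$-individuals flipping their first 0-bit. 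This additive vertical drift $\Omega(\lambda/n)$ over a vertical distance of at most $\gamma\lambda$, together with horizontal drift close to $1$ once $\lambda^+>\gamma\lambda/l$, and with weight $\lambda$ on the fitness coordinate (so a fitness gain of one, worth $\lambda$, dominates the worst-case loss $\gamma\lambda$ in the second coordinate---which is what makes your $a$ versus $b\cdot O(\log n)$ worry disappear without any capping of $g$), gives a combined drift of $\Omega(\lambda/n)$ over a total distance $O(n\lambda)$, hence $O(n^2)$ generations and $O(\lambda n^2)$ evaluations. Finally, you never account for the event that $f_0$ \emph{decreases}; the paper bounds its probability by $e^{-\Omega(\lambda)}$ via the first statement of Theorem~\ref{thm:eq-pos} and its cost by $n$, and this is exactly where the hypothesis $\lambda\geq c\ln n$ enters---a hypothesis your argument otherwise never uses.
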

\begin{proof}
  Recall from the definition of the EA that $P_t$ is the
  population vector in generation $t\geq 0$.  We consider the drift
  by to the potential function $h(P_t):=h_y(P_t)+\lambda
  h_x(P_t)$, 
  which is composed of a horizontal component $h_x$, and a
  vertical component $h_y$, defined as
  \begin{align*}
    h_x(P_t) & := n-\leadingones( x_{(\gamma)} ),\\
    h_y(P_t) & := \gamma\lambda-|\{ y\in P_t\mid f(y)>f(x_{(\gamma)}) \}|,
  \end{align*}
  where $x_{(\gamma)}$ is the $\gamma$-ranked individual in population
  $P_t$.  The horizontal $\Delta_{x,t}$ and vertical $\Delta_{y,t}$
  drift in generation $t$ are
  \begin{align*}
    \Delta_{x,t}(i) & := \expect{h_x(P_t)-h_x(P_{t+1})\mid h_x(P_t)=i}, \text{ and}\\
    \Delta_{y,t}(i) & := \expect{h_y(P_t)-h_y(P_{t+1})\mid h_y(P_t)=i}.
  \end{align*}
  The horizontal and vertical drift will be bounded
  independently in the following two cases, 
  \begin{itemize}
  \item[1)] $0\leq \lambda^+_t\leq \gamma\lambda/l$,\quad and
  \item[2)] $\gamma\lambda/l<\lambda^+_t$,
  \end{itemize}
  where $l$ is a constant that will be specified later,

  Assume that the $\gamma$-ranked individual has $\xi n$ leading
  1\nobreakdash-bits, where it holds
  ${\xi < \ln(\beta(\gamma)/\gamma)/\chi-\delta}$.
  By the first statement of Theorem~\ref{thm:eq-pos}, the probability
  of reducing the number of leading \onebits in the $\gamma$-ranked
  individual, i.e., of increasing the horizontal distance, is 
  $e^{-\Omega(\lambda)}$. The horizontal distance cannot increase 
  by more than $n$, so $\Delta_{x,t}\geq -ne^{-\Omega(\lambda)}$ holds
  in both cases.

  We now bound the horizontal drift $\Delta_{x,t}$ for Case 2.
  Let the random variable $S_t$ be the number of selection 
  steps in which an individual with fitness strictly higher 
  than $f_0=f(x_{(\gamma)})$ is selected, and none of the 
  leading $\xi n$ bits are flipped. Then
  \begin{align*}
    \expect{S_t} & \geq \lambda \cdot \beta(\gamma/l)\cdot e^{-\xi\chi}\cdot\left(1-\frac{\chi}{n}\right)\\
    & \geq \gamma\lambda \cdot (1+\chi\delta)\cdot\frac{\beta(\gamma/l)}{\beta(\gamma)}\cdot\left(1-\frac{\chi}{n}\right)\\
    & \geq \gamma\lambda \cdot \frac{(1+\chi\delta)}{l}\cdot\left(1-\frac{\chi}{n}\right).
  \end{align*}
  By defining $l:=(1+\chi\delta/2)$, there exists a constant $\delta'>0$
  such that for sufficiently large $n$, we have $\expect{S_t}\geq
  (1+\delta')\cdot \gamma\lambda.$ Hence, by a Chernoff bound, with
  probability $1-e^{-\Omega(\lambda)}$, the number $S_t$ of such
  selection steps is at least $\gamma\lambda$, in which case
  $\Delta_{x,t}\geq 1$. The unconditional horizontal drift in Case 2
  therefore satisfies
  $\Delta_{x,t} \geq 1\cdot (1-e^{-\Omega(\lambda)})-n\cdot e^{-\Omega(\lambda)}$.

  We now bound the vertical drift $\Delta_{y,t}$ for Case 1.  In order to
  generate a $\lambda^+$-individual in a selection step, it is
  sufficient that a $\lambda^+$-individual is selected and none of the
  leading ${\xi n+1}$ 1\nobreakdash-bits are flipped. We first show
  that the expected number of such events is sufficient to ensure a
  non-negative drift. If $\lambda_t^+=0$, then the vertical drift
  cannot be negative. Let us therefore assume that
  $0<\lambda_t^+=\gamma\lambda/m$ for some $m>1$ which is not necessarily
  constant. The expected number of times a new $\lambda^+$-individual
  is created is at least
  \begin{align*}
    \lambda\cdot\beta(\gamma/m)\cdot e^{-\xi\chi} \cdot\left(1-\frac{\chi}{n}\right)
    &\geq \gamma\lambda\cdot\frac{\beta(\gamma/m)}{\beta(\gamma)}\cdot(1+\chi\delta)\cdot\left(1-\frac{\chi}{n}\right)\\
    &\geq (\lambda \gamma/m) \cdot(1+\chi\delta)\cdot\left(1-\frac{\chi}{n}\right).
  \end{align*}
  Hence, for sufficiently large $n$, this is at least $\lambda_t^+$, and
  the expected drift is at least positive.
  In addition, a $\lambda^+$-individual can be created by selecting a
  $\lambda^0$-individual, and flipping the first 0-bit and no other
  bits. The expected number of such events is at least
  $\lambda\cdot\beta(\gamma/l,\gamma)\cdot e^{-\xi\chi}\cdot \chi/n=\Omega(\lambda/n)$.
  Hence, the expected vertical drift in Case 1 is
  $\Omega(\lambda/n)$. Finally, for Case 2, we use the trivial lower
  bound $\Delta_{y,t}\geq -\gamma\lambda$.

  The horizontal and vertical drift is now added into a \emph{combined drift}
  \begin{align*}
    \Delta_t := \Delta_{y,t} + \lambda\Delta_{x,t},     
  \end{align*}
  which in the two cases is bounded by
  \begin{itemize}
  \item[1)] $\Delta_t = \Omega(\lambda/n)-\lambda n e^{-\Omega(\lambda)}$,\quad and
  \item[2)] $\Delta_t =
    -\gamma\lambda+\lambda(1-e^{-\Omega(\lambda)})-\lambda n e^{-\Omega(\lambda)}$.
  \end{itemize}

  Given a population size $\lambda\geq c\ln n$, for a sufficiently
  large constant $c$ with respect to $\gamma$, the combined drift
  $\Delta_t$ is therefore in both cases bounded from below by
  $\Omega(\lambda/n).$ The maximal distance is $b(n)\leq
  (n+\gamma)\cdot\lambda$, hence, the expected number of function
  evaluations $T$ until the $\gamma$-ranked individual attains at
  least $n(\ln(\beta(\gamma)/\gamma)/\chi-\delta)$ leading \onebits is
  no more than $\expect{T}\leq \lambda\cdot b(n)/\Delta_t = O(\lambda
  n^2)$.
\end{proof}

\subsection{Mutation-Selection Balance}\label{sec:balance}
In the previous section, it was shown that the population reaches an
equilibrium state in $O(\lambda n^2)$ function evaluations in
expectation. Furthermore, the position of the equilibrium state is
given by the selection pressure $\eta$ and the mutation rate $\chi$.
By choosing appropriate values for the parameters $\eta$ and $\chi$,
one can ensure that the equilibrium position occurs close to the
global optimum that is given by the problem parameter $\sigma$.
Theorem \ref{thm:unlikely-above}, that will be proved in
Section~\ref{sec:too-low}, also implies that no individual will reach
far beyond the equilibrium position. It is now straightforward to
prove that an optimal solution will be found in polynomial
time with overwhelmingly high probability.

\begin{theorem}\label{thm:succprob-corr-selpres}
  The probability that Linear Ranking EA with population size
  $n\leq\lambda\leq n^k$, for any constant integer $k\geq 1$, selection
  pressure $\eta$, and bit-wise mutation rate $\chi/n$ for a constant
  $\chi>0$ satisfying $\eta=\exp(\sigma\chi)$, finds the optimum of 
  \selpres within $n^{k+4}$ function evaluations is $1-e^{-\Omega(n)}$.
\end{theorem}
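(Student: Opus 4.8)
The plan is to choose a rank $\gamma$ that places the equilibrium position exactly at $\sigma n$, and then argue that once the population is in equilibrium, the globally optimal structure described in Figure~\ref{fig:fitness} is sampled by a single mutation of a core individual with probability $\Omega(\lambda^{-1})$ per generation, so that within a polynomial number of generations the optimum is found with overwhelming probability. Concretely, I would fix $\gamma$ so that $\ln(\beta(\gamma)/\gamma)/\chi=\sigma$; this is possible because $\beta(\gamma)/\gamma=\eta(1-\gamma)+\gamma$ ranges continuously from $\eta$ (at $\gamma\to 0$) down to $1$ (at $\gamma\to 1$), and the hypothesis $\eta=\exp(\sigma\chi)$ together with $\sigma<1$ guarantees that $\exp(\sigma\chi)=\eta$ lies in the open interval $(1,\eta]$... more carefully, one wants $\eta(1-\gamma)+\gamma=e^{\sigma\chi}=\eta$, which forces $\gamma\to 0$, so instead I would pick a \emph{slightly smaller} target, taking a small constant $\delta'>0$ and a constant $\gamma>0$ with $\ln(\beta(\gamma)/\gamma)/\chi \in (\sigma-\delta', \sigma)$; since $\beta(\gamma)/\gamma\to\eta=e^{\sigma\chi}$ as $\gamma\to 0$, such a constant $\gamma$ exists, and by Theorem~\ref{thm:eq-time-below} the $\gamma$-ranked individual attains at least $(\sigma-2\delta')n$ leading $1$-bits within $O(\lambda n^2)=O(n^{k+2})$ function evaluations in expectation, hence within $n^{k+3}$ evaluations with probability $1-e^{-\Omega(n)}$ by a Markov/restart argument.

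Next I would invoke Theorem~\ref{thm:eq-pos} (second statement) with the \emph{same} $\gamma$ and with $\delta'$ small enough: once the $\gamma$-ranked individual has reached the equilibrium band, the number of leading $1$-bits of \emph{that} individual stays below $(\xi^*+\delta')n<\sigma n$ for the next $e^{c\lambda}$ generations with overwhelming probability. In particular the $\gamma$-ranked individual remains non-optimal and keeps the \leadingones\ fitness structure, so selection keeps operating on the \leadingones\ landscape throughout this window. Since $\lambda\ge n$, we have $e^{c\lambda}=e^{\Omega(n)}\gg n^{k+4}$, so the equilibrium description is valid for the whole time budget. I would also need the complementary fact that the core does not \emph{fall too far below} $\sigma n$ either — this is exactly the first statement of Theorem~\ref{thm:eq-pos}, giving a lower band $(\xi^*-\delta')n$ that holds w.o.p.\ over the same horizon — so that throughout the window there is a positive constant fraction of the population (the individuals ranked near $\gamma$, whose measure under $\beta$ is a positive constant) having between roughly $(\sigma-3\delta')n$ and $\sigma n$ leading $1$-bits.

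With the equilibrium established, the final step is a direct probabilistic estimate. Pick any core individual $x$ with $m\in[(\sigma-3\delta')n,\sigma n]$ leading $1$-bits; I claim a single mutation turns $x$ into a member of $X^*_\sigma$ with probability $\Omega(1)$ — wait, not $\Omega(1)$, but at least $\Omega(1)$-to-a-constant-power times a factor handling the $k+3$ leading $0$-bits: flipping the first $k+3$ bits to $0$ costs $(\chi/n)^{k+3}$, which is only polynomially small, flipping the (at most $\delta'n+O(1)$ many) remaining non-$1$ bits among positions $k+4$ to $(\sigma-\delta)n-1$ up to $1$ costs at most $(\chi/n)^{O(\delta'n)}$ — this is too expensive. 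So instead $\delta'$ must be chosen not merely small but so that the region that must be "repaired" has only $O(1)$, not $\Theta(n)$, wrong bits; this is the role of the $\delta$-parameter in the definition of \selpres and the interval $[(\sigma-\delta)n-1]$ versus $\sigma n$. The honest version: choose the equilibrium to sit in $[(\sigma-\delta)n, \sigma n]$, so a core individual already has all of positions $k+4,\dots,(\sigma-\delta)n-1$ equal to $1$ once we also flip back the first $k+3$ bits; then one mutation needs only to (i) flip the $k+3$ leading bits to $0$ — probability $(\chi/n)^{k+3}(1-\chi/n)^{n-k-3}=\Theta(n^{-(k+3)})$ — and (ii) leave positions $k+4,\dots,(\sigma-\delta)n-1$ untouched — probability $(1-\chi/n)^{\Theta(n)}=\Theta(1)$ — and (iii) satisfy the $\le 2/3$ density condition on the block $[(\sigma+\delta)n,(\sigma+2\delta)n-1]$, which holds with probability $\Omega(1)$ for a typical individual since an all-$1$ tail is untypical and even an all-$1$ block survives the $\le 2/3$ test with constant probability after mutation flips an expected $\chi\delta$ fraction... this last point needs care and is the main obstacle. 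Granting it, each core individual produces an optimum with probability $\Omega(n^{-(k+3)})$ per selection-and-mutation step, there are $\Omega(\lambda)$ such steps per generation, so the per-generation success probability is $\Omega(n^{-(k+3)})$, and over $n^{k+3}\cdot\log n$ generations — well within the remaining budget of roughly $n^{k+4}/\lambda \ge n^{k+3}$ generations since $\lambda\le n^k$ — the optimum is found with probability $1-e^{-\Omega(n)}$ by a Chernoff bound on the number of successful generations. Summing the two phases gives total runtime at most $n^{k+3}+n^{k+4}\le n^{k+4}$ (for large $n$) with probability $1-e^{-\Omega(n)}$.

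\medskip

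\noindent\emph{Main obstacle.} The delicate point is step (iii): verifying that the mutated string satisfies $\|x[(\sigma+\delta)n,(\sigma+2\delta)n-1]\|\le 2/3$ with constant probability. This requires controlling the number of $1$-bits in that $\delta n$-length block of a core individual. Since the equilibrium analysis (Theorem~\ref{thm:eq-pos}) only constrains the \emph{leading} $1$-bits and says nothing about bits beyond position $\sigma n$, one must separately argue that bits in positions $[(\sigma+\delta)n,(\sigma+2\delta)n-1]$ behave like (nearly) independent fair-ish coins — this is presumably what the yet-to-be-proven Theorem~\ref{thm:unlikely-above}, mentioned in the excerpt, is for, and I would cite it to conclude that no individual pushes its leading-$1$-prefix far past $\sigma n$, so these bits remain "fresh" and a Chernoff/anti-concentration bound yields density below $2/3$ with probability bounded below by a positive constant. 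The rest of the argument is routine bookkeeping of Chernoff bounds and the $\lambda(\tau-1)\le T\le\lambda\tau$ conversion.
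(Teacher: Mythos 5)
Your proposal follows essentially the same route as the paper's proof: choose a constant $\gamma$ with $\ln(\beta(\gamma)/\gamma)/\chi>\sigma-\delta$, use Theorem~\ref{thm:eq-time-below} plus a Markov/sub-phase restart argument to bring the $\gamma$-ranked individual to $(\sigma-\delta)n$ leading \onebits, keep it there by the first part of Theorem~\ref{thm:eq-pos}, and then wait for a single mutation that flips exactly the $k+3$ leading bits of a top-$\gamma$ parent, an event of probability $\Theta(n^{-(k+3)})$ per selection step. Your resolution of the ``main obstacle'' is also exactly the paper's: Theorem~\ref{thm:unlikely-above} caps every prefix at $(\sigma+\delta)n$, so the bits in the window $[(\sigma+\delta)n,(\sigma+2\delta)n)$ stay uniformly distributed; the paper then gets the $\leq 2/3$ density condition with probability $1-e^{-\Omega(n)}$ \emph{simultaneously for all individuals} in both phases (Chernoff plus a union bound), which is cleaner than your per-individual constant-probability claim but rests on the same mechanism.

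The one place your write-up would not deliver the stated bound is the final tally. You lower-bound the per-generation success probability by $\Omega(n^{-(k+3)})$ (discarding the factor $\lambda$) and then claim $n^{k+4}/\lambda\geq n^{k+3}$ generations are available; the latter is false for $\lambda>n$ (it only gives $\geq n^4$), and with your per-generation bound a budget of $n^{k+3}\log n$ generations yields failure probability $n^{-\Omega(1)}$, not $e^{-\Omega(n)}$. The fix is to count function evaluations rather than generations, as the paper does: phase two contains $\Theta(n^{k+4})$ selection steps, each succeeding with probability $\Omega(n^{-(k+3)})$, so the expected number of successes is $\Omega(n)$ and sub-phases of length $2c'n^{k+3}$ steps with Markov's inequality (or a Chernoff bound on the step count) give failure probability $2^{-\Omega(n)}$.
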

\begin{proof}
  We divide the run into two phases. The first phase lasts the first
  $\lambda n^3$ function evaluations, and the second phase lasts the
  remaining $n^{k+4}-\lambda n^3$ function evaluations. We say that a
  failure occurs during the run, if within these two phases, there
  exists an individual that has more than $(\sigma+\delta)n$ leading
  \onebits, or more than $2n\delta/3$ \onebits in the interval from
  $(\sigma+\delta)n$ to $(\sigma+2\delta)n$. We first claim that the
  probability of this failure event is exponentially small. By
  Theorem~\ref{thm:unlikely-above}, no individual reaches more than
  $(\sigma+\delta)n$ leading \onebits within $cn^{k+4}$ function
  evaluations with probability $1-e^{-\Omega(n)}$.  Hence the bits
  after position $(\sigma+\delta)n$ will be uniformly distributed. By
  a Chernoff bound, and a union bound over all the individuals in the
  two phases, the probability that any individual during the two
  phases has more than $2\delta n/3$ \onebits in the interval from
  $n(\sigma+\delta)$ to $n(\sigma+2\delta)$ is exponentially small. We
  have therefore proved the first claim.

  Let $\gamma>0$ be a constant such that
  $\ln(\beta(\gamma)/\gamma)/\chi>\sigma-\delta$.  We say that a
  failure occurs in the first phase, if by the end of this phase,
  there exists a non-optimal individual with rank between 0 and
  $\gamma$ that has less than $(\sigma-\delta)n$ leading \onebits.  We
  will prove the claim that the probability of this failure event is
  exponentially small.  By Theorem~\ref{thm:eq-time-below}, the
  expected number of function evaluations until the $\gamma$-ranked
  individual has obtained at least $(\sigma-\delta)n$ leading \onebits
  is no more than $c\lambda n^2$, for some constant $c>0$. We divide
  the first phase into sub-phases, each of length $2c\lambda n^2$. By
  Markov's inequality, the probability that the
  $\gamma$\nobreakdash-ranked individual has not obtained
  $(\sigma-\delta)n$ leading \onebits within a given sub-phase is less
  than $1/2$. The probability that this number of leading \onebits is
  not achieved within $n/2c$ such sub-phases, \ie by the end of the
  first phase, is no more than $2^{-n/2c}$, and the second claim
  holds.

  We say that a failure occurs in the second phase, if a non-optimal
  individual with rank better than $\gamma$ has less than
  ${(\sigma-\delta)n}$ leading \onebits, or the optimum is not found by
  the end of the phase. We claim that the probability of this failure
  event is exponentially small. The first part of the claim follows
  from the first part of Theorem \ref{thm:eq-pos} with the parameters
  $\xi_0=\sigma-\delta$ and $t=n^{k+4}/\lambda-n^3$.  Assuming no
  failure in the previous phase, it suffices to select an individual
  with rank between $0$ and $\gamma$, and flip the leading $k+3$
  \onebits, and no other bits. The probability that this event happens
  during a single selection step, assuming that $n>2\chi-k-3$, \ie,
  $n-k-3<2n-2\chi$, is
  \begin{align*}
    r 
    & =    \beta(\gamma)\left(\frac{\chi}{n}\right)^{k+3}\left(1-\frac{\chi}{n}\right)^{n-k-3}\\
    &  \geq \beta(\gamma)\left(\frac{\chi}{n}\right)^{k+3}\left[\left(1-\frac{\chi}{n}\right)^{\frac{n}{\chi}-1}\right]^{2\chi}\\
    &  \geq \frac{\beta(\gamma)}{e^{2\chi}}\left(\frac{\chi}{n}\right)^{k+3}.
  \end{align*}
  The expected number of selection steps until the optimum is produced
  is $1/r\leq c'n^{k+3}$ for some constant $c'>0$. Similarly to the
  first phase, we consider sub-phases, each of length $2c'n^{k+3}$. By
  Markov's inequality, the probability that the optimum has not been
  found within a given sub-phase is less than $1/2$. The probability
  that the optimum has not been found within $n/4c'$ sub-phases, \ie
  before the end of the second phase, is $2^{-n/4c'}$, and the third
  claim holds.

  If none of the failure events occurs, then the optimum has been
  found by the end of the second phase.  The probability that any of
  the failure events occurs is $e^{-\Omega(n)}$, and the theorem then
  follows.
\end{proof}

\subsection{Non-Selective Family Trees}\label{sec:embeddings}

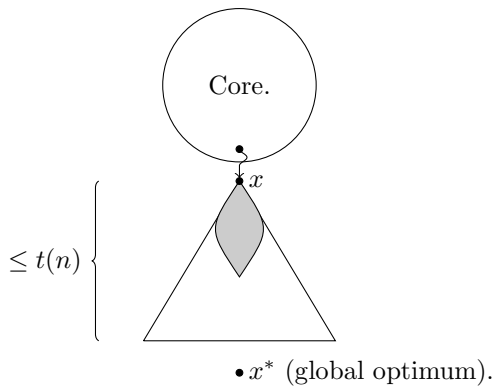
\begin{figure}
  \centering  
  \begin{tikzpicture}[level distance=4mm,sibling distance=4mm,scale=0.85]
    \draw (0.5,1) -- (3.5,1) -- (2,3.5) -- (0.5,1);

    \draw[fill=black!20] 
    (2,3.5) .. controls (1.5,2.75) .. (2,2.0) ..controls (2.5,2.75) .. (2,3.5);

    \draw[snake=brace] (-0.2,1.0) -- node[left=0.1cm]  {$\leq t(n)$} (-0.2,3.5);

    \draw (2,5) circle (1.2cm) node {Core.};
    \draw[fill] (2,4.0) circle (1.5pt); 
    \draw[fill] (2,3.5) circle (1.5pt) node[anchor=west] {$x$};
    \draw[->,
    snake=snake,
    segment aspect=0,
    segment amplitude=1mm,
    line after snake=0.5mm] (2,4.0) -- (2,3.55);

    \draw[fill] (2,0.5) circle (1.5pt) node[anchor=west] { $x^*$ (global optimum).};
  \end{tikzpicture}
  \caption{Non-selective family tree (triangle) of the 
           family tree (gray) rooted in individual $x$ \cite{Lehre2009FOGA}.}
  \label{fig:core}
\end{figure}

While Theorem \ref{thm:eq-pos} describes the equilibrium position of
any $\gamma$-ranked individual for any positive constant $\gamma$, the
theorem cannot be used to analyse the behaviour of single ``stray''
individuals, including the position of the fittest individual (\ie
$\gamma=0$). This is because the tail inequalities obtained by the
Chernoff bounds used in the proof of Theorem \ref{thm:eq-pos} are too
weak for ranks of order $\gamma=o(1)$.

To analyse stray individuals, we will apply the technique of
non-selective family trees introduced in \cite{Lehre2009FOGA}. This
technique is different from, but related to, the \emph{family tree}
technique described by Witt \cite{Witt2006MuOneEA}. A family tree has
as its root a given individual $x$ in some generation $t$, and the
nodes in each level $k$ correspond to the subset of the population in
generation $t+k$ defined in the following way. An individual $y$ in
generation $t+k$ is a member of the family tree if and only if it was
generated by selection and mutation of an individual $z$ that belongs
to level $t+k-1$ of the family tree. In this case, individual $z$ is
the parent node of individual $y$. If there is a path from an
individual $z$ at level $k$ to an individual $y$ at level $k'>k$, then
individual $y$ is said to be a \emph{descendant} of individual $z$,
and individual $z$ is an \emph{ancestor} of individual $y$. A directed
path in the family tree is called a \emph{lineage}. A family tree is
said to become \emph{extinct} in generation $t+t(n)+1$ if none of the
individuals in level $t(n)$ of the tree were selected. In this case,
$t(n)$ is called the \emph{extinction time} of the family tree.

The idea for proving that stray individuals do not reach a given part
of the search space can be described informally using Fig.
\ref{fig:core}. One defines a certain subset of the search space
called the \emph{core} within which the majority of the population is
confined with overwhelming probability. In our case, an appropriate
core can be defined using Theorems \ref{thm:eq-pos} and
\ref{thm:eq-time-below}. One then focuses on the family trees that are
outside this core, but which have roots within the core. Note that
some descendants of the root may re-enter the core. We therefore prune
the family tree to those descendants which are always outside the
core. More formally, the pruned family tree contains node $x$ if and
only if $x$ belongs to the original family tree, and $x$ and all
its ancestors are outside the core.

We would then like to analyse the positions of the individuals that
belong to the pruned family tree.  However, it is non-trivial to
calculate the exact shape of this family tree. Let the random variable
$\xi_x$ denote the number of offspring of individual $x$. Clearly, the
distribution of $\xi_x$ depends on how $x$ is ranked within the
population. Hence, different parts of the pruned family tree may grow
at different rates, which can influence the position and shape of the
family tree. To simplify the analysis, we embed the pruned family tree
into a larger family tree which we call the \emph{non-selective
  family tree}.  This family tree has the same root as the real pruned
family tree, however it grows through a modified selection process. In
the real pruned family tree, the individuals have different numbers of
offspring according to their rank in the population.  In the
non-selective family tree, the offspring distribution $\xi_x$ of all
individuals $x$ is identical to the offspring distribution $\xi_z$ of an
individual $z$ which is best ranked among individuals outside the
core.  We will call the expectation of this distribution $\xi_z$ the 
\emph{reproductive rate} of the non-selective family tree.
Hence, each individual in the non-selective family tree has at
least as many offspring as in the real family tree. The real family tree will therefore occur as a sub-tree in the non-selective
family tree. Furthermore, the probability that the real family tree
reaches a given part of the search space is upper bounded by the
probability that the non-selective family tree reaches this part of the
search space. A related approach, where faster growing family trees
are analysed, is described by J\"agersk\"upper and 
Witt~\cite{Jaegerskuepper2005MuPlusOne}.

Approximating the family tree by the non-selective family tree has three
important consequences.  The \emph{first} consequence is that the
non-selective family tree can grow faster than the real family tree, and
in general beyond the population size $\lambda$ of the original
process.  The \emph{second} consequence is that since all individuals
in the family tree have the same offspring distribution, no individual
in the family tree has any selective advantage, hence the name
non-selective family tree. The behaviour of the 
family tree is therefore independent of the fitness function, and each lineage fluctuates
randomly in the search space according to the bits flipped by the
mutation operator. Such mutation random walks are easier to analyse
than the real search process. To bound the probability that such a
mutation random walk enters a certain region of the search space, it is
necessary to bound the extinction time $t(n)$ of the non-selective
family tree.  The \emph{third} consequence is that the sequence of
random variables $Z_{t\geq 0}$ describing the number of elements in
level $t$ of the non-selective family tree is a discrete time branching
process \cite{Haccou2005BranchingProcesses}. We can therefore apply
the techniques that have been developed to study branching processes 
to bound the extinction time $t(n)$.

Before introducing branching processes, we summarise the main steps in
a typical application of non-selective family trees, assuming the goal
is to prove that with overwhelming probability, an algorithm does not
reach a given search point $x^*$ within $e^{cn}$ generations for some
constant $c>0$.  The first step is to define an appropriate core,
which is a subset of the search space that is separated from $x^*$ by
some distance. The second step is to prove that any non-selective
family tree outside the core will become extinct in $t(n)$ generations
with overwhelmingly high probability. This can be proved by applying
results about branching processes,
e.g. Lemma~\ref{lemma:branching-width} and
Lemma~\ref{lemma:branching-size-tail} in this paper.  The third step
is to bound the number of different lineages that the family tree has
within $t(n)$ generations. Again, results about branching processes
can be applied. The fourth step involves bounding the probability that
a given lineage, starting inside the core reaches search point $x^*$
within $t(n)$ generations. This can be shown in various ways,
depending on the application. The fifth, and final step, is to apply a
union bound over all the different lineages that can exist within
$e^{cn}$ generations.

In the second step, one should keep in mind that there are several
causes of extinction. A reproductive rate less than $1$ is perhaps the
most evident cause of extinction. Such a low reproductive rate may
occur when the fitness outside the core is lower than the fitness
inside the core, as is the case for the family trees considered in
Section~\ref{sec:too-high}. With a majority of the population inside
the core, each individual outside the core is selected in expectation
less than once per generation. However, a low reproductive rate is not
the only cause of extinction. This is illustrated by the core
definition in Section~\ref{sec:too-low}, where the fitness is
generally higher outside, than inside the core. While the family tree
members may in general be selected more than once per generation, the
critical factor here is that their offspring are in expectation closer
to the core than their parents. Hence, the lineages outside the core
will have a tendency to drift back into the core where they are no
longer considered part of the family tree due to the pruning process.

\begin{definition}[Single-Type Branching Process \cite{Haccou2005BranchingProcesses}]
  A single-type branching process is a Markov process $Z_0, Z_1, ...$
  on $\mathbb{N}_0$, which
  for all $t\geq 0$, is given by $Z_{t+1}  := \sum_{i=1}^{Z_t} \xi_i$,
  where $\xi_i\in \mathbb{N}_0$ are i.i.d. random variables having $\expect{\xi}=:\rho$.
\end{definition}
A branching process can be thought of as a population of identical
individuals, where each individual survives exactly one
generation. Each individual produces $\xi$ offspring independently of
the rest of the population during its lifetime, where $\xi$ is a
random variable with expectation $\rho$. The random variable 
$Z_{t}$ denotes the population size in generation $t$. 
Clearly, if $Z_{t}=0$ for some $t$, then $Z_{t'}=0$ for all 
$t'\geq t$. The following lemma gives a simple bound on the size
of the population after $t\geq 1$ generations.
\begin{lemma}\label{lemma:branching-width}
  Let $Z_0, Z_1, ...$ be a single-type branching process with
  $Z_0:=1$ and mean number of offspring per individual $\rho$.
  Define random variables $T:=\min\{t\geq 0\mid Z_t=0\}$, \ie the 
  extinction time, and $X_t$ the number of different lineages 
  until generation $t$. Then for any $t,k\geq 1$,
  \begin{align*}
    \prob{Z_t\geq k}  \leq \frac{\rho^t}{k},\quad\text{ and }\quad
    \prob{T\geq t}   \leq \rho^t.
  \end{align*}
  Furthermore, if $\rho<1$, then
  \begin{align*}
    \expect{X_t} \leq \frac{\rho}{1-\rho}, \quad\text{ and }\quad
    \prob{X_t\geq k} \leq \frac{\rho}{k(1-\rho)}.
  \end{align*}
\end{lemma}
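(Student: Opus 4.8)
The plan is to handle the four inequalities in two groups, since the first two concern an arbitrary reproductive rate $\rho$ while the last two exploit subcriticality ($\rho<1$). For the bound on $\prob{Z_t\geq k}$, I would first establish $\expect{Z_t}=\rho^t$ by a straightforward induction on $t$: conditioning on $Z_{t-1}$ and using Wald's identity (or just linearity, since $Z_t=\sum_{i=1}^{Z_{t-1}}\xi_i$ with the $\xi_i$ i.i.d. and independent of $Z_{t-1}$) gives $\expect{Z_t}=\expect{Z_{t-1}}\cdot\rho$, and the base case $\expect{Z_0}=1$ closes it. Then Markov's inequality applied to the nonnegative integer-valued random variable $Z_t$ yields $\prob{Z_t\geq k}\leq \expect{Z_t}/k=\rho^t/k$. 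The extinction-time bound follows by noting that $\{T\geq t\}$ is exactly the event $\{Z_t\geq 1\}$ (if the process is extinct by generation $t$ then $Z_t=0$, and conversely), so $\prob{T\geq t}=\prob{Z_t\geq 1}\leq \rho^t$ by the previous inequality with $k=1$.

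For the subcritical case, the key observation is that $X_t$, the total number of distinct lineages (equivalently, the total number of individuals ever produced up to generation $t$, or up to extinction), satisfies $X_t\leq \sum_{s=1}^{t} Z_s$ — I would in fact work with $X:=\sum_{s\geq 1} Z_s$, the total progeny, which dominates $X_t$ for every $t$. Then
\begin{align*}
  \expect{X_t}\;\leq\;\sum_{s=1}^{\infty}\expect{Z_s}\;=\;\sum_{s=1}^{\infty}\rho^s\;=\;\frac{\rho}{1-\rho},
\end{align*}
where convergence uses $\rho<1$. The tail bound $\prob{X_t\geq k}\leq \rho/(k(1-\rho))$ is then immediate from Markov's inequality applied to the nonnegative random variable $X_t$ together with $\expect{X_t}\leq \rho/(1-\rho)$.

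The only genuinely delicate point is pinning down precisely what "number of different lineages until generation $t$" means and checking that the chosen bound ($X_t\leq\sum_{s=1}^t Z_s$) is the correct interpretation — a lineage in the family-tree application corresponds to a maximal directed path, and the number of such paths is at most the number of nodes, which is $\sum_{s\geq 0} Z_s$; subtracting the root (or not, which only affects the constant) gives the stated form. I expect this bookkeeping to be the main obstacle, in the sense that the probabilistic content is entirely routine (induction plus two applications of Markov's inequality), and the real care is in matching the combinatorial quantity $X_t$ used downstream in Section~\ref{sec:embeddings} to the sum-of-generation-sizes quantity that the martingale/expectation argument controls. Once that identification is fixed, the proof is a short chain of the steps above.
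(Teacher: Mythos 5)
Your proposal is correct and follows essentially the same route as the paper's proof: compute $\expect{Z_t}=\rho^t$ by iterating the conditional expectation, apply Markov's inequality, treat the extinction bound as the $k=1$ case, and bound $X_t$ by $\sum_{s=1}^t Z_s$ (the paper justifies this by noting each lineage contains at least one individual unique to it, matching your leaf-counting remark) before a final Markov application. No substantive difference.
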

\begin{proof}
  By the law of total expectation, we have
  \begin{align*}    
  \expect{Z_t}
      =    \expect{\expect{Z_t\mid Z_{t-1}}}
      =    \rho\cdot \expect{Z_{t-1}}.
  \end{align*}  
  Repeating this $t$ times gives
  $\expect{Z_t} =  \rho^t\cdot \expect{Z_0}$.
  The first part of the lemma now follows by Markov's
  inequality, \ie 
  \begin{align*}
    \prob{Z_t\geq k} \leq \frac{\expect{Z_t}}{k} = \frac{\rho^t}{k}.    
  \end{align*}
  The second part of the lemma is a special case of the first part 
  for $k=1$, \ie $\prob{T\geq t}=\prob{Z_t\geq 1}\leq \rho^t$.
  For the last two parts, note that since each lineage must contain 
  at least one individual
  that is unique to that lineage, we have $X_t\leq Z_1+\cdots+Z_t$. 
  By linearity of expectation and the previous 
  inequalities, we can therefore conclude that 
  \begin{align*}
     \expect{X_t}
     \leq \sum_{i=1}^t \expect{Z_i}
     \leq \sum_{i=1}^\infty \rho^i
     = \frac{\rho}{1-\rho}.
  \end{align*}
  Finally, it follows from Markov's inequality that 
  \begin{align*}
    \prob{X_t\geq k}\leq \frac{\rho}{k(1-\rho)}.    
  \end{align*}
\end{proof}

From the preceding lemma, it is clear that the expected number of
offspring $\rho$ is important for the fate of a branching process. For
$\rho<1$, the process is called \emph{sub-critical}, for ${\rho=1}$, the
process is called \emph{critical}, and for $\rho>1$, the process is
called \emph{super-critical}. In this paper, we will consider
sub-critical processes.

\subsection{Too High Selection Pressure}\label{sec:too-high}

In this section, it is proved that \selpres is hard for Linear Ranking
EA when the ratio between parameters $\eta$ and $\chi$ is sufficiently
large. The overall proof idea is first to show that the population is
likely to reach the equilibrium position before the optimum is reached
(Proposition~\ref{prop:prob-opt-in-n-square} and
Theorem~\ref{thm:eq-time-below}).  Once the equilibrium position is
reached, a majority of the population will have significantly more
than $(\sigma+\delta) n$ leading \onebits, and individuals that are
close to the optimum are therefore less likely to be selected
(Proposition \ref{prop:belowcritical}).

The proof of Proposition~\ref{prop:prob-opt-in-n-square} builds on the
result in Proposition~\ref{prop:111-ind}, which states that the
individuals with at least $k+3$ leading \onebits will quickly dominate
the population. Hence, family trees of individuals with less than
$k+3$ leading \onebits are likely to become extinct before they
discover an optimal search point. Recall that optimal search points
have $k+3$ leading \zerobits.  In the following, individuals with at
least $k+3$ leading \onebits will be called $1^{k+3}$-individuals.

\begin{proposition}\label{prop:111-ind}
  Let $\gamma^*$ be any constant $0<\gamma^*<1$, and $t(\lambda)=\poly(\lambda)$.
  If the Linear Ranking EA with population size $\lambda, n\leq
  \lambda\leq n^k$, for any constant integer $k\geq 1$, 
  and bit-wise mutation rate $\chi/n$ for a constant $\chi>0$,
  is applied to
  \selpres, then with probability $1-o(1)$, all the
  $\gamma^*$\nobreakdash-ranked individuals in generation $\log\lambda$
  to generation $T^*:=\min\{t(\lambda),T-1\}$ are $1^{k+3}$-individuals, 
  where $T$ is the number of generations until the optimum has been found.
\end{proposition}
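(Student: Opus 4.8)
The plan is to show that $1^{k+3}$-individuals take over the population quickly and then remain dominant until the optimum is found, so that in particular the $\gamma^*$-ranked slot is occupied by a $1^{k+3}$-individual throughout the relevant window. The key observation is that a $1^{k+3}$-individual has at least $k+3$ leading \onebits, hence fitness at least $k+3$ under \leadingones (and a fortiori under \selpres as long as it is non-optimal), whereas an individual with fewer leading \onebits has strictly smaller fitness; so once there are sufficiently many $1^{k+3}$-individuals, they all rank ahead of every non-$1^{k+3}$ individual. I would first argue the \emph{takeover phase}: starting from the random initial population, where in expectation a constant fraction of individuals already have $\geq k+3$ leading \onebits (since $k$ is constant, this probability is $2^{-(k+3)}=\Omega(1)$), and then showing that the fraction of $1^{k+3}$-individuals grows. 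This is exactly the kind of Bernoulli-trial argument used in the proof of Theorem~\ref{thm:eq-pos}: a trial that selects a $1^{k+3}$-individual and flips none of its first $k+3$ bits produces a new $1^{k+3}$-individual, and this happens with probability at least (fraction selected)$\cdot(1-\chi/n)^{k+3} = (\text{fraction selected})\cdot(1-o(1))$. Since $k+3$ is constant, the mutation survival factor is bounded below by a positive constant, so the expected number of $1^{k+3}$-offspring is a constant factor times the current count (using $\beta$ superlinearity, Lemma~\ref{lem:beta-property}), and a Chernoff bound shows the count at least doubles each generation w.o.p.\ until it exceeds, say, $(1-\gamma^*/2)\lambda$. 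This takes $O(\log\lambda)$ generations, matching the statement's ``generation $\log\lambda$''.

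Second, I would argue the \emph{maintenance phase}: once at least $(1-\gamma^*/2)\lambda$ individuals are $1^{k+3}$-individuals, I claim this remains true w.o.p.\ for each of the next $\poly(\lambda)$ generations. Here the relevant Bernoulli trial is ``select one of the (at least $(1-\gamma^*/2)\lambda$ best-ranked) $1^{k+3}$-individuals and do not flip any of its first $k+3$ bits''; the selection probability for the top $(1-\gamma^*/2)$-fraction is $\beta(1-\gamma^*/2)$, a positive constant, and multiplying by the constant survival factor $(1-\chi/n)^{k+3}$ gives expected number of new $1^{k+3}$-individuals at least $c\lambda$ for a constant $c>1-\gamma^*/2$ once $n$ is large (one should check the constants line up, using $\beta(1-\gamma^*/2)>1-\gamma^*/2$, which holds since $\beta(x)/x = \eta(1-x)+x > 1$ for $x<1$ and $\eta>1$). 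A Chernoff bound then gives that the number of $1^{k+3}$-individuals in the next generation is at least $(1-\gamma^*/2)\lambda > (1-\gamma^*)\lambda$ w.o.p., so in particular the $\gamma^*$-ranked individual is a $1^{k+3}$-individual. A union bound over the $O(\log\lambda) + t(\lambda) = \poly(\lambda)$ generations, each failing with probability $e^{-\Omega(\lambda)} = e^{-\Omega(n)}$ (since $\lambda\geq n$), gives overall failure probability $\poly(n)\cdot e^{-\Omega(n)} = o(1)$, as required; note the conditioning on $T^* = \min\{t(\lambda),T-1\}$ just restricts attention to generations where the optimum has not yet been found, which is precisely where ``$1^{k+3}$'' and ``best-ranked'' behave as claimed.

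The main obstacle I expect is handling the conditioning and the ``best-ranked among $1^{k+3}$-individuals'' bookkeeping cleanly: the set of $1^{k+3}$-individuals and their internal ranking both change stochastically from generation to generation, and the fitness values of $1^{k+3}$-individuals can exceed $k+3$ (they follow \leadingones on their leading block), so I must be careful that ``$1^{k+3}$-individual'' is a \emph{downward-closed} property under the ranking — which it is, since any individual ranked at least as high as a $1^{k+3}$-individual must itself have $\geq k+3$ leading \onebits or be optimal, and optimality is excluded before $T$. A secondary subtlety is the very first generations before generation $\log\lambda$: I need the initial count of $1^{k+3}$-individuals to be $\Omega(\lambda)$ w.o.p., which follows from a Chernoff bound since each of the $\lambda$ initial individuals is independently a $1^{k+3}$-individual with probability $2^{-(k+3)} = \Omega(1)$ and $\lambda \geq n$. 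Everything else is a routine iteration of the Chernoff-plus-union-bound template already deployed in the proof of Theorem~\ref{thm:eq-pos}.
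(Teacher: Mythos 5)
Your overall plan (constant initial fraction of $1^{k+3}$-individuals, geometric growth of that fraction via per-generation Chernoff bounds, then maintenance plus a union bound over $\poly(\lambda)$ generations) is sound and is essentially the same machinery the paper uses, just applied directly: the paper instead shows the $\gamma^*$-ranked individual becomes a $1^{k+3}$-individual within the first $\log\lambda$ generations via a drift argument on $\log\lambda^+$ combined with Markov's inequality (this is exactly where its $1-o(1)$ bound comes from), and delegates all maintenance (of the fraction $\gamma_0=2^{-(k+4)}$ during take-over and of the $\gamma^*$-ranked position afterwards) to the first statement of Theorem~\ref{thm:eq-pos} with $\xi_0=(k+3)/n$. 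Your route, if carried out, actually yields the stronger bound $1-e^{-\Omega(n)}$.

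There is, however, a concrete flaw in your maintenance phase: you have the rank convention backwards. In this paper the best individual has rank $0$ and the worst rank $1$, so the $\gamma^*$-ranked individual sits at position $\gamma^*\lambda$ from the top, and since non-optimal $1^{k+3}$-individuals occupy the top ranks, what you need is at least $\gamma^*\lambda$ of them --- not more than $(1-\gamma^*)\lambda$ as you assert in the step ``at least $(1-\gamma^*/2)\lambda>(1-\gamma^*)\lambda$, so the $\gamma^*$-ranked individual is a $1^{k+3}$-individual''. That implication is false for $\gamma^*>1/2$, and your maintained level $(1-\gamma^*/2)\lambda$ is itself below the required $\gamma^*\lambda$ whenever $\gamma^*>2/3$. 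The repair is easy: target and maintain a fraction $\gamma'$ with $\gamma^*\leq\gamma'<1$ (e.g.\ $\gamma'=\gamma^*$ or $(1+\gamma^*)/2$); the same Chernoff argument goes through because $\beta(\gamma)/\gamma=\eta(1-\gamma)+\gamma>1$ for every constant $\gamma<1$ and $\eta>1$, which is exactly the margin the paper exploits via $r(\gamma)/\gamma\geq 1+c$ for $\gamma<\gamma^*$. A second, harmless, slip: the count does not ``at least double each generation'' --- the per-generation growth factor is at most $\eta\leq 2$ and strictly below $2$ once the fraction is a constant --- but only a constant factor $1+\Omega(1)$ is needed, and since you start from the constant fraction $2^{-(k+3)}$, $O(1)$ generations already suffice, well within the first $\log\lambda$.
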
  
\begin{proof}
  If the $\gamma^*$-ranked individual in some generation $t_0\leq \log\lambda$ 
  is an $1^{k+3}$\nobreakdash-individual, then by the first part of 
  Theorem~\ref{thm:eq-pos} with parameter $\xi_0:=(k+3)/n$,
  the $\gamma^*$-ranked individual remains so until generation
  $T^*$ with probability $1-e^{-\Omega(\lambda)}$. Otherwise,
  we consider the run a failure.

  It remains to prove that the $\gamma^*$-ranked individual in one of
  the first $\log\lambda$ generations is an $1^{k+3}$-individual with
  probability $1-o(1)$.  We apply the drift theorem with respect to
  the potential function $\log(\lambda^+)$, where $\lambda^+$ is the
  number of $1^{k+3}$-individuals in the population.

  A run is considered failed if the fraction of
  $1^{k+3}$\nobreakdash-individuals in any of the first $T^*$
  generations is less than $\gamma_0:=1/2^{k+4}.$ The initial
  generation is sampled uniformly at random, so by a Chernoff bound,
  the probability that the fraction of $1^{k+3}$-individuals in the
  initial generation is less than $\gamma_0$, is
  $e^{-\Omega(\lambda)}$.  Given that the initial fraction of
  $1^{k+3}$-individuals is at least $\gamma_0$, it follows again by the
  first part of 
  Theorem~\ref{thm:eq-pos} with parameter $\xi_0=(k+3)/n$ that this 
  holds until generation $T^*$ with probability $1-e^{-\Omega(\lambda)}$.  Hence, the
  probability of this failure event is $e^{-\Omega(\lambda)}$.

  The $1^{k+3}$\nobreakdash-individuals are fitter than any other
  non-optimal individuals.  Assume that the fraction of
  $1^{k+3}$-individuals in a given generation is $\gamma, \gamma_0\leq
  \gamma<\gamma^*$. In order to create a $1^{k+3}$-individual in a
  selection step, it suffices to select one of the best
  $\gamma\lambda$ individuals, and to not mutate any of the first $k+3$
  bit positions. The expected number of $1^{k+3}$-individuals in the
  following generation is therefore at least $r(\gamma)\lambda$, where
  we define
  $r(\gamma) := \beta(\gamma)(1-\chi/n)^{k+3}$.  The ratio
  $r(\gamma)/\gamma$ is linearly decreasing in $\gamma$, and for
  sufficiently large $n$, strictly
  larger than $1+c$, where $c>0$ is a constant. Hence, for
  all $\gamma<\gamma^*$, it holds that
  \begin{align*}
    r(\gamma)=\gamma\frac{r(\gamma)}{\gamma}> \gamma\frac{r(\gamma^*)}{\gamma^*}\geq \gamma(1+c).
  \end{align*}
  The drift is therefore for all $\gamma,$ where $\gamma_0\leq \gamma<\gamma^*$,
  \begin{align*}
    \Delta 
    & \geq \log(r(\gamma)\lambda)-\log(\gamma\lambda)\\
    & \geq \log(\gamma(1+c)\lambda)-\log(\gamma\lambda)
     =\log(1+c).    
  \end{align*}

  Assuming no failure, the potential must be increased by no more than
  $b(\lambda):=\log(\gamma^*\lambda)-\log(\gamma_0\lambda)=\log(\gamma^*/\gamma_0)$.
  By the drift theorem, the expected number of generations until this
  occurs is $b(\lambda)/\Delta=O(1)$. And the probability that this
  does not occur within $\log\lambda$ generations is
  $O(1/\log\lambda)$ by Markov's inequality.

  Taking into account all the failure probabilities, the proposition
  now follows.
\end{proof}

\begin{proposition}\label{prop:prob-opt-in-n-square}
  For any constant $r>0$, the probability that
  the Linear Ranking EA with population size 
  $\lambda, n\leq\lambda\leq n^k$, for some constant integer $k\geq
  1$, and bit-wise mutation rate $\chi/n$ for a constant $\chi>0$,
  has not found the optimum of \selpres within
  $\lambda rn^2$ function evaluations is $\Omega(1)$.
\end{proposition}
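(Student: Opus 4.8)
The plan is to show that with constant probability, the algorithm needs more than $\lambda r n^2$ evaluations, equivalently more than $rn^2$ generations, to reach $X_\sigma^*$. The key observation is that every optimal search point has exactly $k+3$ leading \zerobits, whereas by Proposition~\ref{prop:111-ind} the bulk of the population (every $\gamma^*$-ranked individual for a constant $\gamma^*$) consists of $1^{k+3}$-individuals from generation $\log\lambda$ onward, with probability $1-o(1)$. So the only way to produce an optimum is through a lineage that ``strays'' from the $1^{k+3}$-dominated core down to individuals with fewer than $k+3$ leading \onebits, and such lineages are rare and short-lived. First I would fix a constant $\gamma^*$ (say $\gamma^*=1/2$) and condition on the event of Proposition~\ref{prop:111-ind}, which holds with probability $1-o(1)$ up to generation $T^*=\min\{rn^2, T-1\}$; on this event, for every generation $t$ with $\log\lambda\le t\le T^*$, all individuals of rank better than $\gamma^*\lambda$ are $1^{k+3}$-individuals.

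Second, I would set up a non-selective family tree argument as outlined in Section~\ref{sec:embeddings}, taking the \emph{core} to be the set of $1^{k+3}$-individuals. Any optimal search point lies outside the core, so any lineage producing an optimum must leave the core; I prune family trees to the descendants that stay outside the core. An individual just outside the core (with fewer than $k+3$ leading \onebits but still reasonably fit, i.e. close to $f_0$) is ranked worse than $\gamma^*\lambda$, so its reproductive rate is bounded by $\alpha(\gamma^*)=\eta(1-2\gamma^*)+2\gamma^* <\eta\le 2$; more carefully, since the individuals of rank better than $\gamma^*$ are all fitter $1^{k+3}$-individuals, the best possible rank of a non-core individual is $\gamma^*$, giving reproductive rate at most $\beta(\gamma^*,1)/(1-\gamma^*)$, which for $\gamma^*$ bounded away from $0$ can be made $\le 1-c$ for a constant $c>0$ — wait, that requires $\gamma^*$ close enough to $1$; instead I would simply take $\gamma^*$ a small enough constant that $\alpha(\gamma^*)$... actually the clean route: choose $\gamma^*$ so that the reproductive rate of non-core individuals is strictly below $1$, using that the $1^{k+3}$-individuals occupy the top $\gamma^*\lambda$ ranks. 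Then by Lemma~\ref{lemma:branching-width}, each such family tree is sub-critical, becomes extinct within $t(n)=O(\log n)$ generations w.o.p., and spawns only $O(1)$ lineages in expectation.

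Third, I would bound the probability that any single short lineage of length $O(\log n)$, starting at the boundary of the core, reaches an optimal search point. Reaching an optimum requires, among other things, flipping the $k+3$ leading \onebits to \zerobits while simultaneously keeping $\approx\sigma n$ subsequent positions as \onebits; in a single mutation step the former alone has probability $(\chi/n)^{k+3}=O(n^{-(k+3)})$, and over $O(\log n)$ generations and the $O(1)$ lineages per tree this stays $o(1/n)$ per family tree. Finally I would take a union bound over all family trees that can be rooted in the core during the first $rn^2$ generations: there are at most $\lambda\cdot rn^2 \le rn^{k+2}$ root--generation pairs, each contributing $O(1)$ lineages, so naively this is too many. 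The fix is the standard one from Section~\ref{sec:embeddings}: only count family trees rooted at individuals that \emph{leave} the core, and use that the number of selection steps in $rn^2$ generations producing a non-core child from a core parent is, in expectation, only $O(rn^2)\cdot(\text{prob. of a mutation dropping below }k+3\text{ leading ones})$. A core individual drops below $k+3$ leading \onebits only by flipping one of its first $k+3$ bits, probability $O(1/n)$ per selection step, so in expectation $O(rn)$ such events occur in $rn^2$ generations; multiplying by the $o(1/n)$ per-family-tree success probability gives $o(1)$ total. Combining with the $o(1)$ failure probability of Proposition~\ref{prop:111-ind} and the extinction-time failure probability $e^{-\Omega(n)}$ (or $o(1)$) from the branching bound, the probability that the optimum is found within $\lambda r n^2$ evaluations is $1-\Omega(1)$... which is the wrong direction — so I would instead track constants to get this probability bounded \emph{away from $1$}, i.e. at most $1-\epsilon_0$ for a constant $\epsilon_0>0$, which is exactly the $\Omega(1)$ failure claim.

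The main obstacle I anticipate is making the union bound over $O(rn^2)$ generations actually close: the crude count of root--generation pairs is polynomially large, so the argument genuinely needs the refined bookkeeping that charges each potential optimum-finding event to a rare ``escape from the core'' mutation (probability $O(1/n)$) and then to a rare ``lineage hits the optimum'' event (probability $o(1/n)$ per short family tree), so that the expected number of successful escape-then-hit events over $rn^2$ generations is $O(rn^2)\cdot O(1/n)\cdot o(1/n)=o(1)$. Getting the two independent $1/n$-type savings to line up — one from the escape mutation flipping a leading bit, one from the subsequent lineage reconstructing $\approx\sigma n$ leading \onebits after clearing $k+3$ of them — while keeping everything conditioned on the Proposition~\ref{prop:111-ind} event, is the delicate part; the rest is routine application of Lemma~\ref{lemma:branching-width} and Chernoff/Markov bounds.
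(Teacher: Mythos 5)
Your overall plan coincides with the paper's: condition on Proposition~\ref{prop:111-ind}, take the $1^{k+3}$-individuals as the core, model escaped lineages by sub-critical non-selective family trees via Lemma~\ref{lemma:branching-width}, and union-bound over lineages. However, the quantitative bookkeeping as written does not close. Your per-lineage bound is based on a non-necessary event: you only bound the probability that a \emph{single} mutation step flips all $k+3$ leading \onebits, but a pruned lineage accumulates flips over its whole lifetime (once one leading bit is flipped the individual is outside the core and its descendants random-walk irrespective of fitness), so the correct necessary event is that each of the $k+3$ bits of the core ancestor is flipped at least once within the extinction time $t$, with probability $(t\chi/n)^{k+3}$. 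Your escape-counting ``fix'' is also off by a factor $\lambda$: there are $\lambda$ selection steps per generation, so the expected number of core-parent-to-non-core-child events in $rn^2$ generations is $O(\lambda rn)\leq O(rn^{k+1})$, not $O(rn)$; combined with your claimed $o(1/n)$ per tree this gives $o(n^k)$, and your arithmetic only appears to close because the two errors offset. The clean repair is the paper's: use the full $(t\chi/n)^{k+3}$ per-lineage bound with $t=(k+3)\ln n/\ln(1/\rho)$ (note the constant must be this large, since $\rho<1$ is constant and the extinction failure is only polynomially small, and note $\gamma^*=1/2$ gives $\alpha(1/2)=1$, so you need $\gamma^*>1/2$, e.g. $(1+\delta)/2$), and then a plain union bound over all $\lambda rn^2\leq rn^{k+2}$ potential roots and their $O(1)$ expected lineages already yields $O(\ln^{k+3}n/n)=o(1)$ --- no escape refinement is needed; the final $\Omega(1)$ constant comes from a Markov bound ($1/2$) on the total number of lineages.

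The more substantial gap is that you do not treat the family trees rooted before generation $\log\lambda$. You condition on Proposition~\ref{prop:111-ind} only from generation $\log\lambda$ onwards and count only trees rooted at individuals that \emph{leave} the core, but a constant fraction (about $1-2^{-(k+3)}$) of the random initial population lies outside the core and was never inside it; before the $1^{k+3}$-individuals take over, such individuals can occupy arbitrary ranks, their reproductive rate is only bounded by $\eta\leq 2$, and their family trees can grow to size $\Theta(\lambda)$, so the sub-criticality argument does not apply to them. One must separately rule out that these early trees find the optimum, which the paper does by showing that w.o.p.\ every initial individual has $\Omega(n)$ \zerobits in the interval of length roughly $(\sigma-\delta)n$ that must be all \onebits in any optimal search point, bounding the survival time of these trees by $\sqrt{n}$ and their total number of lineages by $\lambda^3$, and observing that flipping all $\Omega(n)$ specified \zerobits within $\sqrt{n}$ generations has probability $n^{-\Omega(n)}$. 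Without an argument of this kind for the pre-domination phase, your proof only excludes optima produced after generation $\log\lambda$ by lineages escaping the core, and the proposition is not established.
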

\begin{proof}
  We consider the run a failure if at some point between generation
  $\log\lambda$ and generation $rn^2$, the $(1+\delta)/2$-ranked
  individual has less than $k+3$ leading \onebits without first
  finding the optimum. By Proposition \ref{prop:111-ind}, the
  probability of this failure event is $o(1)$.

  Assuming that this failure event does not occur, we apply the method
  of non-selective family trees with the set of $1^{k+3}$-individuals
  as core. Recall that the family trees are pruned such that they only
  contain lineages outside the core.  However, to simplify the
  analysis, the family trees will not be pruned before generation
  $\log\lambda$.  Therefore, any family tree that is not rooted in an
  $1^{k+3}$-individual, must be rooted in the initial population. The
  proof now considers the family trees with roots after and before
  generation $\log\lambda$ separately.

  \emph{Case 1:} We firstly consider the at most $m:=\lambda rn^2\leq
  rn^{k+2}$ family trees with roots after generation $\log\lambda$. We
  begin by estimating the total number of lineages, and their
  extinction times. The mean number of offspring $\rho$, of an
  individual with rank $\gamma$, is no more than $\alpha(\gamma)$, as
  given in Eq.~(\ref{eq:alpha}).  Assuming no failure, any non-optimal
  individual outside the core has rank at least
  $\gamma:=(1+\delta)/2$.  Hence for any selection pressure
  $\eta,1<\eta\leq 2$, the mean number of offspring of an individual
  in the family tree is $\rho\leq
  \alpha((1+\delta)/2)=1-(\eta-1)\delta<1$.
  We consider the run a failure if any of the $m$ family trees
  survives longer than $t:=(k+3)\ln n/\ln(1/\rho)$ generations.
  By the union bound and Lemma~\ref{lemma:branching-width}, the
  probability of this failure event is no more than
  $m\rho^t=mn^{-k-3}=O(1/n).$ 

  Let the random variable $P_i$ be the number of lineages in family
  tree $i,1\leq i\leq m$. The expected number of lineages in a given
  family tree is by Lemma~\ref{lemma:branching-width} no more than
  $\rho/(1-\rho)$.  We consider the run a failure if there are more
  than $2m\rho/(1-\rho)$ lineages in all these family trees.  The
  probability of this failure event is by Markov's inequality no more
  than
  \begin{align*}
    \prob{\sum_{i=1}^m P_i\geq \frac{2m\rho}{1-\rho}}
    \leq \frac{(1-\rho)\sum_{i=1}^m \expect{P_i}}{2m\rho}
    \leq 1/2.
  \end{align*}

  We now bound the probability that any given lineage contains a
  $0^{k+3}$-individual, which is necessary to find an optimal search
  point. The probability of flipping a given bit during $t$
  generations is by the union bound no more than $t\chi/n$, and the
  probability of flipping $k+3$ bits within $t$ generations is no more
  than $(t\chi/n)^{k+3}$. The probability that any of the at most
  $2m\rho/(1-\rho)$ lineages contains a $0^{k+3}$-individual is by
  the union bound no more than 
  \begin{align*}
    \frac{(t\chi/n)^{k+3}2m\rho}{1-\rho}=O(\ln^{k+3} n/n).    
  \end{align*}

  \emph{Case 2:} We secondly consider the family trees with roots
  before generation $\log\lambda$. In the analysis, we will not prune
  these family trees during the first $\log\lambda$
  generations. However, after generation $\log\lambda$, the family
  trees will be pruned as usual. This will only overestimate the
  extinction time of the family trees. Furthermore, there will be
  exactly $\lambda$ such family trees, one family tree for each of the
  $\lambda$ randomly chosen individuals in the initial population.

  We now bound the number of lineages in these family trees, and their
  extinction times. The mean number of offspring is no more than
  $\eta\leq 2$ during the first $\log\lambda$ generations.  Because
  the family trees are pruned after generation $\log\lambda$, we can
  re-use the arguments from case 1 above to show that the mean number
  of offspring after generation $\log\lambda$ is no more than $\rho$,
  for some constant $\rho<1$.  Let random variable $Z_t$ be the number
  of family tree members in generation $Z_t$. Analogously to the proof
  of Lemma~\ref{lemma:branching-width}, we have $\expect{Z_t} \leq
  2^t$ if $t\leq\log\lambda$, and $\expect{Z_t} \leq
  2^{\log \lambda} \rho^{t-\log\lambda} =
  \lambda\rho^{t-\log\lambda}$ for $t\geq\log\lambda$.  We consider
  the run a failure if any of the $\lambda$ family trees survives
  longer than $\sqrt{n}$ generations.  By the union bound and Markov's
  inequality, the probability of this failure event is no more than
  $\lambda\expect{Z_{\sqrt{n}}}=e^{-\Omega(\sqrt{n})}$.

  Let the random variable $P_i$ be the number of lineages in family
  tree $i,1\leq i\leq \lambda$.  Similarly to the proof of
  Lemma~\ref{lemma:branching-width}, the expected number of different
  lineages in the family tree is no more than
  \begin{align*}
    \expect{P_i} \leq 
    \sum_{t=1}^{\log \lambda}\expect{Z_t} + 
    \sum_{t=\log\lambda+1}^\infty \expect{Z_t}
    \leq 
    2\lambda + \frac{\rho\lambda}{1-\rho} = O(\lambda).
  \end{align*}
  We consider the run a failure if there are more than $\lambda^3$
  lineages in all family trees. By Markov's inequality, the
  probability of this failure event is no more than
  \begin{align*}
    \prob{\sum_{i=1}^\lambda P_i\geq \lambda^3}\leq \sum_{i=1}^\lambda
    \expect{P_i} / \lambda^3 = O(1/\lambda).
  \end{align*}

  We now bound the probability that a given lineage finds an optimal
  search point. Define $\sigma':=\sigma-\delta-(k+4)/n$. To find the
  optimum, it is necessary that all the bits in the interval of length
  $\sigma'n$, starting from position $k+4$, are 1-bits.
  We consider the run a failure if any of the individuals in the
  initial population has less than $\sigma'n/3$ 0-bits in this
  interval. By a Chernoff bound and the union bound, the probability of
  this failure event is no more than $\lambda
  e^{-\Omega(n)}=e^{-\Omega(n)}$.

  The probability of flipping a given \zerobit within $\sqrt{n}$
  generations is by the union bound no more than
  $\chi/\sqrt{n}$. Hence, the probability that all of the at least
  $\sigma'n/3$ \zerobits have been flipped is less than
  $(\chi/\sqrt{n})^{\sigma'n/3}=n^{-\Omega(n)}$.  
  The probability that any of the at most $\lambda^3$ lineages finds
  the optimum within $\sqrt{n}$ generations is by the union bound 
  no more than $\lambda^3 n^{-\Omega(n)} = n^{-\Omega(n)}$.

  If none of the failure events occur, then no globally optimal search
  point has been found during the first $rn^2$ generations. The 
  probability that any of the failure events occur is by union bound
  less than $1/2+o(1)$. The proposition therefore follows.
\end{proof}

Once the equilibrium position has been reached, we will prove 
that it is hard to obtain the global optimum. We will rely on 
the fact that it is necessary to have at least $\delta n/3$ 
\zerobits in the interval from $(\sigma+\delta)n$ to
$(\sigma+2\delta)n$, and that any individual with a \zerobit 
in this interval will be ranked worse than at least half of 
the population.

\begin{proposition}\label{prop:belowcritical}
  Let $\sigma$ and $\delta$ be any constants that satisfy
  $0<\delta<\sigma<1-3\delta$.  If the Linear Ranking EA with
  population size $\lambda$, where $n\leq \lambda\leq n^k$, for any
  constant integer $k\geq 1$, with selection pressure $\eta$ and
  constant mutation rate $\chi>0$ satisfying
  $\eta > (2e^{\chi(\sigma+3\delta)}-1)/(1-\delta)$
  is applied to \selpres, and the $(1+\delta)/2$-ranked individual
  reaches at least $(\sigma+2\delta)n$ leading \onebits before the
  optimum has been found, then the probability that the optimum is
  found within $e^{cn}$ function evaluations is $e^{-\Omega(n)}$, for
  some constant $c>0$.
\end{proposition}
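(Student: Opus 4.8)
The plan is to apply the non-selective family tree machinery developed in Section~\ref{sec:embeddings}, taking as the \emph{core} the set of individuals having more than $(\sigma+\delta)n$ leading \onebits. First I would argue that once the $(1+\delta)/2$-ranked individual has reached $(\sigma+2\delta)n$ leading \onebits, Theorem~\ref{thm:eq-pos} (first statement, with $\xi_0 = \sigma+2\delta$ and $\gamma = (1+\delta)/2$) guarantees that this individual stays above $(\sigma+\delta)n$ leading \onebits for the next $e^{c\lambda}$ generations with probability $1-e^{-\Omega(\lambda)}$; hence at least a $(1-\delta)/2$-fraction of the population is in the core throughout, so any individual \emph{outside} the core has rank worse than $(1+\delta)/2$. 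The key consequence is that an out-of-core individual is selected in expectation at most $\alpha((1+\delta)/2) = 1-(\eta-1)\delta < 1$ times per generation. But for the optimum one also needs $\geq \delta n/3$ \zerobits in the block $[(\sigma+\delta)n,(\sigma+2\delta)n)$; an individual carrying such a \zerobit in that block has at most $(\sigma+\delta)n$ leading \onebits anyway, and more delicately I want the reproductive rate of family trees that are trying to \emph{build up} \onebits in the leading $(\sigma+3\delta)n$ positions to be strictly below~$1$. This is where the hypothesis $\eta > (2e^{\chi(\sigma+3\delta)}-1)/(1-\delta)$ enters: rearranged, it says $(1-\delta)(\eta+1)/2 > e^{\chi(\sigma+3\delta)}$, i.e. even counting the bonus from re-creating leading \onebits by mutation, the expected number of offspring that reach the next ``band'' is bounded by a constant $\rho<1$.

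Concretely, the second step is to fix the core as $C:=\{x : \leadingones(x) > (\sigma+\delta)n\}$, prune all family trees to lineages that stay outside $C$, and embed each pruned tree in a non-selective family tree whose reproductive rate is the offspring expectation of the best-ranked out-of-core individual. I would show this rate is at most some constant $\rho<1$ by the argument above: the probability of selecting an out-of-core individual is at most $\beta((1+\delta)/2)/((1+\delta)/2)\cdot(1+\delta)/2 \le (1-(\eta-1)\delta)$ in expectation, and the probability that mutation of an out-of-core parent produces an offspring with $\geq(\sigma+3\delta)n$ leading \onebits is exponentially small unless the parent already had almost that many leading \onebits, in which case it was \emph{in} the core — so the dangerous lineages have rate $\le \rho<1$ with the stated inequality. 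Then by Lemma~\ref{lemma:branching-width} each such family tree dies within $t(n):=O(\ln\lambda)$ generations w.o.p., has $O(1)$ lineages in expectation, and over $e^{cn}$ generations there are at most $e^{cn}\cdot\lambda$ roots, hence (Markov + union bound) at most $e^{O(cn)}$ lineages total.

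The third step bounds the probability that a single lineage, starting in the core and staying outside it for at most $t(n)=O(\ln\lambda)=O(\ln n)$ generations, manages to reach an optimal search point. To be optimal the lineage must, among other things, flip the $k+3$ leading bits to \zerobits \emph{and} acquire the long block of leading \onebits from position $k+4$ onward — but it starts inside the core, so it starts with $>(\sigma+\delta)n$ leading \onebits, most of which must survive, and simultaneously it must flip all its $\geq\delta n/3$ \zerobits in the block $[(\sigma+\delta)n,(\sigma+2\delta)n)$ (by a Chernoff bound the starting individual has $\Omega(\delta n)$ such \zerobits w.o.p.). Flipping a fixed bit within $t(n)$ generations has probability at most $t(n)\chi/n = O(\ln n / n)$, so flipping $\Omega(n)$ prescribed bits has probability $(O(\ln n/n))^{\Omega(n)} = n^{-\Omega(n)}$. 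Multiplying by the $e^{O(cn)}$ lineages and choosing $c$ small enough keeps the product $n^{-\Omega(n)} = e^{-\Omega(n)}$, and adding the $e^{-\Omega(\lambda)}=e^{-\Omega(n)}$ failure probabilities from the core-stability step finishes the proof.

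I expect the main obstacle to be the second step: carefully verifying that the non-selective family tree's reproductive rate is genuinely bounded by a constant $\rho<1$. One must be precise about \emph{which} individuals count as outside the core and about the contribution of mutation-generated leading \onebits to the offspring count — the inequality on $\eta$ is tight enough that a sloppy bound (e.g. ignoring the $2$ in $2e^{\chi(\sigma+3\delta)}-1$, which comes from the worst-case selection factor $\alpha(0)=\eta\le 2$ for lineages briefly near the core boundary, versus the factor $1-\delta$ discount) will not close. The other steps are routine applications of Lemma~\ref{lemma:branching-width}, Chernoff bounds, and union bounds over the (exponentially many but still controllable) lineages.
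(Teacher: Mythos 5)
Your proposal follows the paper's general skeleton (non-selective family trees outside a core, reproductive rate bounded by $\alpha((1+\delta)/2)=1-(\eta-1)\delta<1$, Lemma~\ref{lemma:branching-width}, union bounds over lineages), but the core is placed at the wrong level, and this breaks the decisive step. You take the core to be $\{x:\leadingones(x)>(\sigma+\delta)n\}$. An individual in that core can already satisfy the third optimality condition, namely at most a $2/3$ fraction of \onebits in the window from $(\sigma+\delta)n$ to $(\sigma+2\delta)n$, since its bits beyond position $(\sigma+\delta)n$ are unconstrained; such an individual is one mutation of the $k+3$ leading bits away from $X^*_\sigma$, an event of probability $\Theta(n^{-(k+3)})$ per selection. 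So nothing exponentially unlikely separates your core from the optimum, and no family-tree argument launched from it can give the claimed $e^{-\Omega(n)}$ bound. Relatedly, your step-3 necessary event is backwards: optimality requires \emph{having} at least $\delta n/3$ \zerobits in that window (the condition is an upper bound on \onebits), not flipping them away; and your claim that the root has $\Omega(\delta n)$ \zerobits there ``by a Chernoff bound w.o.p.'' is unjustified in this regime --- with high selection pressure the bulk of the population is driven past $(\sigma+2\delta)n$ leading \onebits, so that window is all ones, not uniformly random (uniformity of that type is only available in the regime of Theorem~\ref{thm:succprob-corr-selpres}, via Theorem~\ref{thm:unlikely-above}). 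The paper instead puts the core at $(\sigma+2\delta)n$ leading \onebits: then every in-core individual is all-ones throughout the window, and for any member of a non-selective family tree rooted outside this core, the \zerobits in the prefix $[1,(\sigma+2\delta)n]$ arise solely from fitness-independent mutations, hence their positions are uniform over the prefix; optimality forces at least $\delta n/3$ of them to simultaneously avoid the interval $[k+4,(\sigma-\delta)n]$, a constant fraction of the prefix, which has probability $e^{-\Omega(n)}$ per family-tree member.

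You also misattribute the role of the hypothesis on $\eta$. The reproductive-rate bound $\rho\leq\alpha((1+\delta)/2)<1$ needs nothing beyond $\eta>1$; the hypothesis is needed for core stability: with $\gamma=(1+\delta)/2$ one has $\beta(\gamma)/\gamma=(\eta(1-\delta)+1+\delta)/2>e^{\chi(\sigma+3\delta)}$, hence $\xi^*=\ln(\beta(\gamma)/\gamma)/\chi>\sigma+3\delta$, so that $\xi_0=\sigma+2\delta\leq\xi^*-\delta$ and the first part of Theorem~\ref{thm:eq-pos} keeps the $\gamma$-ranked individual above $\xi_0 n$ for $e^{cn}$ generations. (Your rearrangement $(1-\delta)(\eta+1)/2>e^{\chi(\sigma+3\delta)}$ is not what the hypothesis yields; it yields $(\eta(1-\delta)+1)/2>e^{\chi(\sigma+3\delta)}$.) Finally, the quantitative scheme is inconsistent: by Lemma~\ref{lemma:branching-width} the survival probability beyond $t$ generations is $\rho^t$, so with $t=O(\ln\lambda)$ it is only polynomially small and the union bound over the $\lambda e^{cn}$ family trees fails; the paper takes $t=2cn/\ln(1/\rho)=\Theta(n)$, which in turn makes the per-bit flip probability over a lineage's lifetime a constant rather than $O(\ln n/n)$, so your $n^{-\Omega(n)}$ estimate would not survive that fix either --- another reason the paper's uniform-position argument, rather than a ``must flip $\Omega(n)$ bits'' argument, is used at this point.
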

\begin{proof}
  Define $\gamma:=(1+\delta)/2$, and note that
  \begin{align*}
    \frac{\beta(\gamma)}{\gamma}
    & = \eta(1-\gamma)+\gamma
      = \frac{\eta(1-\delta)+1+\delta}{2}
     > e^{\chi(\sigma+3\delta)}.
  \end{align*}
  Hence, we have
  \begin{align}
    \xi^* := \ln(\beta(\gamma)/\gamma)/\chi
    & > \sigma+3\delta.\label{eq:xistar}
  \end{align}
  Let $\xi_0 := \sigma+2\delta=\xi^*-\delta$. Again, we apply the technique of
  non-selective family trees and define the \emph{core} as the set of
  search points with more than $\xi_0 n$ leading 1\nobreakdash-bits.  By
  the first part of Theorem~\ref{thm:eq-pos}, the probability that the
  $\gamma$-ranked individual has less than $\xi_0n$
  leading \onebits within $e^{cn}$ generations is $e^{-\Omega(n)}$ for
  sufficiently small $c$. If this event does happen, we say that a
  \emph{failure} has occurred.  Assuming no failure, each family tree
  member is selected in expectation less than
  $\rho<\alpha((1+\delta)/2)=1-(\eta-1)\delta<1$ times per generation.

  We first estimate the extinction time of each family tree, and the
  total number of lineages among the at most $m:=\lambda e^{cn}$
  family trees. The reproductive rate is bounded from above by a
  constant $\rho<1$. Hence, by Lemma \ref{lemma:branching-width}, the
  probability that a given family tree survives longer than
  $t:=2cn/\ln(1/\rho)$ generations is $\rho^t=e^{-2cn}$. By union
  bound, the probability that any family tree survives longer than $t$
  generations is less than $\lambda e^{-2cn}$, and we say that a
  failure has occurred if a family tree survives longer than $t$
  generations.  For each $i,$ where $1\leq i\leq m,$ let the random variable $P_i$
  denote the number of lineages in family tree $i$.  By Lemma
  \ref{lemma:branching-width} and Markov's inequality, the probability
  that the number of lineages in all the family trees exceeds
  $e^{2cn}\rho/(1-\rho)$, is
  \begin{align*}
    \prob{\sum_{i=1}^m P_i\geq \frac{e^{2cn}\rho}{1-\rho}} 
    \leq \frac{(1-\rho)\sum_{i=1}^m \expect{P_i}}{\rho e^{2cn}}
    \leq \lambda e^{-cn}.
  \end{align*}
  If this happens, we say that a failure has occurred.

  We then bound the probability that any given member of the family
  tree is optimal. To be optimal, it is necessary that there are at
  least $\delta n/3$ \zerobits in the interval from $1$ to $\xi_0 n$. We
  therefore optimistically assume that this is the case for the family
  tree member in question.  However, none of these \zerobits must
  occur in the interval from bit position $k+4$ to bit position
  $(\sigma-\delta)n$, otherwise the family tree member is not
  optimal. The length of this interval is
  $(\sigma-\delta-o(1))n=\Omega(n)$. Since the family tree is
  non-selective, the positions of these 0-bits are chosen uniformly at
  random among the $\xi_0 n$ bit positions. In particular, the
  probability of choosing a 0-bit within this interval, assuming no
  such bit position has been chosen yet, is at least
  $\Omega(n)/\xi_0 n>c'$, for some constant $c'>0$. And the 
  probability that none of the at least $\delta n/3$ \zerobits 
  are chosen from this interval is no more than $(1-c')^{\delta n/3}=e^{-\Omega(n)}$.

  There are at most $t$ family tree members per lineage.  The
  probability that any of the $te^{2cn}\rho/(1-\rho)\leq e^{3cn}$ family
  tree members is optimal is by union bound no more than
  $e^{3cn} e^{-\Omega(n)} = e^{-\Omega(n)}$, assuming that 
  $c$ is a sufficiently small constant. Taking into account 
  all the failure probabilities, the probability that the 
  optimum is found within $e^{cn}$ generations is $e^{-\Omega(n)}$,
  for a sufficiently small constant $c>0$.
\end{proof}

By combining the previous, intermediate results, we can finally prove
the main result of this section.

\begin{theorem}\label{thm:runtime-high-selpres}
  Let $\sigma$ and $\delta$ be any constants that satisfy
  $0<\delta<\sigma<1-3\delta$.  The expected runtime
  of the Linear Ranking EA with population size $\lambda,
  n\leq\lambda\leq n^k$, for any integer $k\geq 1$, and 
  selection pressure $\eta$ and constant mutation rate $\chi>0$ 
  satisfying 
  $\eta > (2e^{\chi(\sigma+3\delta)}-1)/(1-\delta)$
  is $e^{\Omega(n)}$.
\end{theorem}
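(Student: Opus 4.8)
The plan is to combine the three preceding results in this section: Theorem~\ref{thm:eq-time-below} (the population reaches the equilibrium position within $O(\lambda n^2)$ function evaluations in expectation), Proposition~\ref{prop:prob-opt-in-n-square} (the optimum is not found within $O(\lambda n^2)$ function evaluations, with constant probability), and Proposition~\ref{prop:belowcritical} (once the equilibrium position lies beyond $(\sigma+2\delta)n$, the optimum is exponentially hard to reach). Concretely, I would show that with constant probability the $\gamma$-ranked individual, for $\gamma:=(1+\delta)/2$, attains at least $(\sigma+2\delta)n$ leading \onebits before the optimum is found; conditioning on that event, Proposition~\ref{prop:belowcritical} forces a runtime of $e^{\Omega(n)}$ with overwhelming probability, and a one-line averaging step then yields $\expect{T}=e^{\Omega(n)}$.

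First I would fix $\gamma:=(1+\delta)/2$ and recall, exactly as in the proof of Proposition~\ref{prop:belowcritical}, that the hypothesis $\eta>(2e^{\chi(\sigma+3\delta)}-1)/(1-\delta)$ gives $\beta(\gamma)/\gamma=(\eta(1-\delta)+1+\delta)/2>e^{\chi(\sigma+3\delta)}$, so the equilibrium position $\xi^*:=\ln(\beta(\gamma)/\gamma)/\chi$ satisfies $\xi^*>\sigma+3\delta$ and hence $\xi^*-\delta>\sigma+2\delta$. Applying Theorem~\ref{thm:eq-time-below} with this $\gamma$ and this $\delta$ (note $\lambda\geq n\geq c\ln n$ for large $n$), the expected number of function evaluations until the $\gamma$-ranked individual attains at least $n(\xi^*-\delta)>(\sigma+2\delta)n$ leading \onebits, or the optimum is found, is $O(\lambda n^2)$, uniformly over the starting configuration. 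Splitting a run into sub-phases of length $2c_0\lambda n^2$, where $c_0$ bounds this expectation, and applying Markov's inequality within each sub-phase — the boosting argument already used in the proof of Theorem~\ref{thm:succprob-corr-selpres} — the probability that neither event has occurred after $K$ sub-phases, i.e.\ within $r\lambda n^2$ function evaluations with $r:=2Kc_0$, is at most $2^{-K}$.

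Next I would invoke Proposition~\ref{prop:prob-opt-in-n-square} with this constant $r$: the probability $p_0$ that the optimum is \emph{not} found within $r\lambda n^2$ function evaluations is $\Omega(1)$, i.e.\ at least a positive constant depending only on $r$. Since this $p_0$ is fixed once $r$ is fixed, one may choose $K$ large enough that $2^{-K}<p_0/2$. A union bound then shows that with probability at least $p_0-2^{-K}\geq p_0/2=\Omega(1)$ — call this event $E$ — the $\gamma$-ranked individual attains at least $(\sigma+2\delta)n$ leading \onebits within the first $r\lambda n^2=O(n^{k+2})$ function evaluations while the optimum is still unfound. On $E$ the hypotheses of Proposition~\ref{prop:belowcritical} hold, so conditioned on $E$ the probability that the optimum is found within $e^{cn}$ function evaluations is $e^{-\Omega(n)}$ for some constant $c>0$ (this dominates the polynomial first phase). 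Hence $\prob{T\geq e^{cn}}\geq \prob{E}\cdot(1-e^{-\Omega(n)})=\Omega(1)$, and therefore $\expect{T}\geq e^{cn}\cdot\prob{T\geq e^{cn}}=e^{\Omega(n)}$.

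The hard part is reconciling the different strengths of the two ingredients: Proposition~\ref{prop:prob-opt-in-n-square} only yields a constant (not overwhelming) probability of avoiding the optimum early, and Theorem~\ref{thm:eq-time-below} bounds only an expectation rather than giving a high-probability hitting-time bound, so a naive union bound of two ``$1-o(1)$'' events is unavailable. The resolution, exploited above, is that the constant $p_0$ in Proposition~\ref{prop:prob-opt-in-n-square} is guaranteed for \emph{every} fixed constant $r$; one can therefore drive the equilibrium-failure probability $2^{-K}$ strictly below $p_0$ by adding sub-phases, without affecting $p_0$ itself. Once that is arranged, the remainder is routine accounting of failure probabilities.
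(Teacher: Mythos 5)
Your proposal is correct and follows essentially the same route as the paper's own (very terse) proof: Markov's inequality applied to Theorem~\ref{thm:eq-time-below}, combined with Proposition~\ref{prop:prob-opt-in-n-square} and then Proposition~\ref{prop:belowcritical}, with your write-up actually spelling out the intersection-of-events step that the paper leaves implicit. The only caveat is the quantifier order when you pick $K$ after setting $r=2Kc_0$ (formally $p_0$ may depend on $r$), but this is harmless here because the proof of Proposition~\ref{prop:prob-opt-in-n-square} gives a success probability of roughly $1/2-o(1)$ uniformly over any fixed constant $r$.
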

\begin{proof}
  Define $\gamma:=(1+\delta)/2$ and $\xi^* := \ln(\beta(\gamma)/\gamma)/\chi$.
  By Eq. (\ref{eq:xistar}) in the proof of 
  Proposition~\ref{prop:belowcritical}, it holds that $\xi^* -\delta> \sigma+2\delta.$
  By Theorem~\ref{thm:eq-time-below} and Markov's inequality, there is 
  a constant probability that the $\gamma$-ranked 
  individual has reached at least $(\xi^*-\delta)n>(\sigma+2\delta)n$
  leading \onebits within $rn^2$ generations, for some constant $r$. 
  By Proposition~\ref{prop:prob-opt-in-n-square}, the probability
  that the optimum has not been found within the first $rn^2$ 
  generations is $\Omega(1)$. If the optimum has not been
  found before the $\gamma$-ranked individual has $(\sigma+2\delta)n$
  leading \onebits, then by Proposition~\ref{prop:belowcritical},
  the expected runtime is $e^{\Omega(n)}$. The unconditional
  expected runtime of the Linear Ranking EA is therefore 
  $e^{\Omega(n)}$.
\end{proof}

\subsection{Too Low Selection Pressure}\label{sec:too-low}

This section proves an analogue to
Theorem~\ref{thm:runtime-high-selpres} for parameter settings where
the equilibrium position $n(\ln\eta)/\chi$ is below $(\sigma -\delta)
n$.  \ie, it is shown that \selpres is also hard when the selection
pressure is too low. To prove this, it suffices to show that with
overwhelming probability, no individual reaches more than
$n\ln(\eta\kappa\phi)/\chi$ leading \onebits in exponential time, for
appropriately chosen constants $\kappa, \phi>1$.  Again, we will apply
the technique of non-selective family trees, but with a different core
than in the previous section. The core is here defined as the set of
search points with prefix sum less than $n\ln(\eta\kappa)/\chi$, where
the \emph{prefix sum} is the number of \onebits in the first
$n\ln(\eta\kappa\phi)/\chi$ bit positions of the search point.
Clearly, to obtain at least $n\ln(\eta\kappa\phi)/\chi$ leading
\onebits, it is necessary to have prefix sum exactly
$n\ln(\eta\kappa\phi)/\chi$.  We will consider individuals outside the
core, \ie, the individuals with prefix sums in the interval from
$n\ln(\eta\kappa)/\chi$ to $n\ln(\eta\kappa\phi)/\chi$. Note that
choosing $\kappa$ and $\phi$ to be constants slightly larger than 1
implies that this interval begins slightly above the equilibrium
position $n\ln(\eta)/\chi$ given by Theorem~\ref{thm:eq-pos} (see
Fig.~\ref{fig:branchingtypes}).

Single-type branching processes are not directly applicable to analyse
this drift process, because they have no way of representing how far each
family tree member is from the core. Instead, we will consider a more
detailed model based on \emph{multi-type branching processes} (see
e.g. Haccou \etal \cite{Haccou2005BranchingProcesses}). Such branching
processes generalise single-type branching processes by having
individuals of multiple types. In our application, the type of an
individual corresponds to the prefix-sum of the individual. Before
defining and studying this particular process, we will describe some
general aspects of multi-type branching processes.

\begin{definition}[Multi-Type Branching Process \cite{Haccou2005BranchingProcesses}]\label{def:multi-type-branching-process}
  A multi-type branching process with $d$ types is a Markov process
  $Z_0, Z_1, ...$ on $\mathbb{N}^d_0$, which for all $t\geq 0$, is given by
  \begin{align*}
    Z_{t+1} &:= \sum_{j=1}^d\sum_{i=1}^{Z_{tj}} \xi_i^{(j)},         
  \end{align*}
  where for all $j, 1\leq j\leq d$, $\xi_i^{(j)}\in \mathbb{N}_0^d$ are
  i.i.d. random vectors having expectation
  $\expect{\xi^{(j)}} =:\trans{(m_{j1},m_{j2},...,m_{jd})}$.    
  The associated matrix $M := (m_{jk})_{d\times d}$ is called the 
  \emph{mean matrix} of the process.
\end{definition}

Definition \ref{def:multi-type-branching-process} states that the
population vector $Z_{t+1}$ for generation $t+1$ is defined as a sum
of offspring vectors, one offspring vector for each of the individuals
in generation $t$. In particular, the vector element $Z_{tj}$ denotes
the number of individuals of type $j, 1\leq j\leq d,$ in generation
$t$. And $\xi_i^{(j)}$ denotes the offspring vector for the
$i$-th individual, $1\leq i\leq Z_{nj},$ of type $j$.  The $k$-th
element, $1\leq k\leq d,$ of this offspring vector
$\xi_{i}^{(j)}$ represents the number of offspring of type $k$ this
individual produced.

Analogously to the case of single-type branching processes, the
expectation of a multi-type branching process $Z_{t\geq 0}$ with 
mean matrix $M$ follows
\begin{align*}
  \trans{\expect{Z_{t}}}
      = \trans{\expect{\expect{Z_{t}\mid Z_{t-1}}}} 
       = \trans{\expect{Z_{t-1}}}M 
       = \trans{\expect{Z_0}}M^t.       
\end{align*}
Hence, the long-term behaviour of the branching-process depends on the
matrix power $M^t$. Calculating matrix powers can in general be
non-trivial. However, if the branching process has the property that
for any pair of types $i,j$, it is possible that a type $j$-individual
has an ancestor of type $i$, then the corresponding mean matrix is
\emph{irreducible} \cite{Seneta1973NonNegMatrices}. 

\begin{definition}[Irreducible matrix \cite{Seneta1973NonNegMatrices}]\label{def:irreducible}
  A $d\times d$ non-negative matrix $M$ is \emph{irreducible} if
  for every pair $i,j$ of its index set, there exists a positive
  integer $t$ such that $m_{ij}^{(t)}>0$, where $m_{ij}^{(t)}$ are 
  the elements of the $t$-th matrix power $M^t$.
\end{definition}
If the mean matrix $M$ is irreducible, then Theorem~\ref{thm:perron-frobenius} 
implies that the asymptotics of the matrix power $M^t$ depend on 
the largest eigenvalue of $M$.
\begin{theorem}[Perron-Frobenius \cite{Haccou2005BranchingProcesses}]\label{thm:perron-frobenius}
  If $M$ is an irreducible matrix with non-negative elements,
  then it has a unique positive eigenvalue $\rho$, called the 
  \emph{Perron root} of $M$, that is greater in
  absolute value than any other eigenvalue. All elements of the left
  and right eigenvectors $u=\trans{(u_1,...,u_d)}$ and $v=\trans{(v_1,...,v_d)}$
  that correspond to $\rho$ can be chosen positive and such that
  $\sum_{k=1}^d u_k = 1$ and $\sum_{k=1}^d u_kv_k = 1$.  In addition,
  \begin{align*}
    M^n & = \rho^n\cdot A+B^n,
  \end{align*}
  where $A=(v_iu_j)_{i,j=1}^d$ and $B$ are matrices that satisfy the
  conditions
  \begin{enumerate}
  \item $AB=BA=0$
  \item There are constants $\rho_1\in(0,\rho)$ and $C>0$ such that
    none of the elements of the matrix $B^n$ exceeds $C\rho_1^n$.
  \end{enumerate}
\end{theorem}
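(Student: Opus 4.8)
Since this is the classical Perron--Frobenius theorem together with its standard spectral-decomposition corollary, the plan is to reproduce the textbook argument, proving the two halves in turn: first the spectral facts about the Perron root $\rho$ (its existence, positivity of $\rho$ and of both associated eigenvectors, algebraic simplicity of $\rho$, and strict domination of $\rho$ over the rest of the spectrum in modulus), and then the decomposition $M^n = \rho^n A + B^n$, which is a formal consequence of the first half via spectral-projection bookkeeping.

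For the spectral part I would use the Wielandt--Collatz approach. For $x\in\Real^d$ with $x\geq 0$, $x\neq 0$, set $r(x) := \min_{i:\,x_i>0}(Mx)_i/x_i$ and $\rho := \sup\{\, r(x) : x\geq 0,\ x\neq 0\,\}$. Restricting to the simplex and exploiting that $(I+M)^{d-1}$ is strictly positive by irreducibility (to push candidate maximizers into the interior) shows the supremum is attained at some $v$ with $Mv=\rho v$; iterating $M^t v=\rho^t v$ and using irreducibility then forces $v>0$, and the same argument on $\trans{M}$ yields a strictly positive left eigenvector $u$. Every eigenvalue $\mu$ satisfies $|\mu|\leq\rho$: apply $r(\cdot)$ to the coordinate-wise modulus of an eigenvector, using $|Mw|\leq M|w|$. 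Algebraic simplicity comes in two steps --- geometric simplicity, since from two independent $\rho$-eigenvectors one can slide $v-\epsilon w$ out to the boundary of the positive orthant to obtain a nonzero nonnegative $\rho$-eigenvector with a vanishing coordinate, contradicting $v>0$; and absence of a Jordan block, since $Mw=\rho w+v$ would give, on pairing with $u$, the contradiction $\trans{u}v=0$ (impossible for $u,v>0$).

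Given these facts the decomposition is routine. Scale $u$ so that $\sum_k u_k=1$ and then $v$ so that $\sum_k u_k v_k=\trans{u}v=1$, and put $A:=(v_iu_j)_{i,j=1}^d=v\,\trans{u}$. From $Mv=\rho v$ and $\trans{u}M=\rho\,\trans{u}$ we get $MA=AM=\rho A$, and $A^2=v(\trans{u}v)\trans{u}=A$, so $A$ is exactly the spectral projection onto the one-dimensional $\rho$-eigenspace. Setting $B:=M-\rho A=M(I-A)$ gives $AB=\rho A(I-A)=0$ and likewise $BA=0$ (condition~1). Since $A,B$ commute with $AB=BA=0$ and $A^n=A$, the binomial expansion collapses to $M^n=(\rho A+B)^n=\rho^n A+B^n$. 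The spectrum of $B$ is $\{0\}$ together with the eigenvalues of $M$ other than $\rho$, so its spectral radius $\rho_B:=\max\{\,|\mu| : \mu\in\mathrm{spec}(M),\ \mu\neq\rho\,\}$ satisfies $\rho_B<\rho$; picking any $\rho_1\in(\rho_B,\rho)$, Gelfand's formula (or the Jordan form of $B$) yields a constant $C>0$ with $\norm{B^n}\leq C\rho_1^n$, which bounds every entry of $B^n$ (condition~2).

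The genuine obstacle is the \emph{strict} domination $\rho_B<\rho$: irreducibility alone does not give it, since a periodic irreducible matrix has peripheral eigenvalues $\rho\,e^{2\pi\mathrm{i}k/h}$. One must invoke aperiodicity --- in the branching-process setting this is the standing assumption of \emph{positive regularity}, that some power $M^t$ is strictly positive --- and then rule out eigenvalues of modulus $\rho$ other than $\rho$ itself by examining the equality case of $|Mw|\leq M|w|$ for a hypothetical peripheral eigenvector. Everything else is standard linear algebra once primitivity is in hand.
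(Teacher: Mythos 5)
The paper never proves this statement: it is quoted, with a citation, from the branching-processes literature \cite{Haccou2005BranchingProcesses}, so there is no internal proof to compare yours against. Your sketch is the standard textbook argument --- the Collatz--Wielandt variational characterisation of $\rho$, strict positivity of the right and left eigenvectors via irreducibility (using positivity of $(I+M)^{d-1}$), geometric and algebraic simplicity of $\rho$ by the sliding and pairing arguments, then the rank-one spectral projection $A=v\trans{u}$ with $\trans{u}v=1$, $B:=M-\rho A$, $AB=BA=0$, the collapse $M^n=\rho^n A+B^n$, and Gelfand's formula (or the Jordan form of $B$) for the entrywise bound $C\rho_1^n$ --- and it is correct as far as it goes.

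Your closing remark is the substantive point, and you are right about it: irreducibility alone does not yield the strict modulus gap, since an irreducible matrix of period $h$ has the $h$ peripheral eigenvalues $\rho e^{2\pi i k/h}$, so condition~2 with $\rho_1\in(0,\rho)$ genuinely requires primitivity (positive regularity), which is the hypothesis under which the cited source states the result; as written in the paper the hypothesis ``irreducible'' is slightly too weak. In your write-up the exclusion of peripheral eigenvalues for a primitive matrix (equality case of $|Mw|\leq M|w|$, or passing to a strictly positive power $M^t$) is only gestured at, so a complete proof would still need that step carried out. For the paper itself the discrepancy is harmless: the theorem is only ever applied, via Lemma~\ref{lemma:branching-size-tail} and Lemma~\ref{lemma:perron-bound}, to the mean matrix of Definition~\ref{def:selpresmeanmatrix}, all of whose entries are strictly positive, hence the matrix is primitive and the strict gap and geometric decay of $B^n$ hold there.
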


A central attribute of a multi-type branching process is therefore 
the Perron root of its mean matrix $M$, denoted $\rho(M)$.
A multi-type branching process with mean matrix $M$ is classified as
\emph{sub-critical} if $\rho(M)<1$, \emph{critical} if $\rho(M)=1$ and
\emph{super-critical} if $\rho(M)>1$.  Theorem
\ref{thm:perron-frobenius} implies that any sub-critical multi-type
branching process will eventually become extinct. However, to obtain
good bounds on the probability of extinction within a given number of
generations $t$ using Theorem \ref{thm:perron-frobenius}, one also has
to take into account matrix $A$ that is defined in terms of both
the left and right eigenvectors. Instead of directly applying Theorem
\ref{thm:perron-frobenius}, it will be more convenient to use the
following lemma.

\begin{lemma}[\cite{Haccou2005BranchingProcesses}]\label{lemma:branching-size-tail}
  Let $Z_0, Z_1,...$ be a multi-type branching process with
  irreducible mean matrix $M=(m_{ij})_{d\times d}$.
  If the process started with a single individual of type $h$, 
  then for any $k>0$ and $t\geq 1$,
  \begin{align*}
    \prob{\sum_{j=1}^d Z_{tj}\geq k\mid Z_{0}=e_h} & \leq \frac{\rho(M)^t}{k}\cdot \frac{v_h}{v^*},
  \end{align*}
  where $e_h, 1\leq h\leq d,$ denote the standard basis vectors,
  $\rho(M)$ is the Perron root of $M$ with the corresponding right
  eigenvector $v$, and $v^* := \min_{1\leq i\leq d} v_i$.
\end{lemma}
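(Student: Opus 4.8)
The plan is to reduce the tail bound to an application of Markov's inequality to a suitably re\nobreakdash-weighted total population count. First I would note that the computation of $\trans{\expect{Z_t}}$ carried out just before the lemma statement goes through unchanged when the process is conditioned on its initial state: by the branching (Markov) structure and the i.i.d.\ offspring assumption, $\trans{\expect{Z_t\mid Z_0=e_h}}=\trans{e_h}M^t$, so that $\expect{Z_{tj}\mid Z_0=e_h}=m_{hj}^{(t)}$, the $(h,j)$ entry of the matrix power $M^t$.

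Next, let $v=\trans{(v_1,\dots,v_d)}$ be the right eigenvector of $M$ associated with the Perron root $\rho(M)$. Since $M$ is irreducible, Theorem~\ref{thm:perron-frobenius} guarantees that $v$ may be chosen with all entries strictly positive, so $v^*=\min_{1\le i\le d}v_i>0$. Because each $Z_{tj}$ is non\nobreakdash-negative and $v_j\ge v^*$ for every $j$, we have the pointwise inequality $v^*\sum_{j=1}^d Z_{tj}\le \sum_{j=1}^d v_j Z_{tj}=:W_t$, and hence the event $\{\sum_j Z_{tj}\ge k\}$ is contained in the event $\{W_t\ge kv^*\}$. The random variable $W_t$ is non\nobreakdash-negative, so it can be controlled by Markov's inequality.

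It then remains to evaluate $\expect{W_t\mid Z_0=e_h}$. By linearity of expectation and the eigenvector identity $M^t v=\rho(M)^t v$, one gets $\expect{W_t\mid Z_0=e_h}=\sum_{j=1}^d v_j m_{hj}^{(t)}=\trans{e_h}M^t v=\rho(M)^t v_h$. Combining the pieces,
\begin{align*}
  \prob{\sum_{j=1}^d Z_{tj}\ge k \mid Z_0=e_h}
    & \le \prob{W_t\ge k v^* \mid Z_0=e_h}\\
    & \le \frac{\expect{W_t\mid Z_0=e_h}}{k v^*}
      = \frac{\rho(M)^t}{k}\cdot\frac{v_h}{v^*},
\end{align*}
which is exactly the claimed bound.

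I do not expect any genuinely hard step in this argument; the two points that deserve a line of justification are that the mean\nobreakdash-matrix recursion survives conditioning on $Z_0$ (immediate from the branching structure, exactly as in the unconditioned computation), and that irreducibility is precisely the hypothesis that makes the Perron eigenvector strictly positive — this is what guarantees $v^*>0$, and it is also what ensures the re\nobreakdash-weighting step loses at most the bounded factor $v_h/v^*$ rather than something unbounded.
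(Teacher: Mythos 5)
Your argument is correct and is essentially identical to the paper's proof: both reweight the total population by the positive right Perron eigenvector, apply Markov's inequality to the non-negative variable $\sum_j v_j Z_{tj}$, and evaluate its conditional expectation via $\trans{\expect{Z_t\mid Z_0=e_h}}=\trans{e_h}M^t$ together with $M^t v=\rho(M)^t v$. Nothing is missing.
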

\begin{proof}
  The proof follows \cite[p. 122]{Haccou2005BranchingProcesses}.
  By Theorem~\ref{thm:perron-frobenius}, matrix $M$ has a unique
  largest eigenvalue $\rho(M)$, and all the elements of the
  corresponding right eigenvector $v$ are positive, implying $v^*>0$.
  The probability that the process consists of more than $k$ individuals in 
  generation $t$, conditional on the event that the process started
  with a single individual of type $h$, can be bounded as
  \begin{align*}
    \prob{\sum_{j=1}^d Z_{tj}\geq k\mid Z_0=e_h}
      & = \prob{\sum_{j=1}^d Z_{tj}v^*\geq kv^*\mid Z_0=e_h}\\
      &\leq \prob{\sum_{j=1}^d Z_{tj}v_j\geq kv^*\mid Z_0=e_h}.
  \end{align*}
  Markov's inequality and linearity of expectation give
  \begin{align*}
    \prob{\sum_{j=1}^d Z_{tj}v_j\geq kv^*\mid Z_0=e_h}
    & \leq \expect{\sum_{j=1}^d Z_{tj}v_j\mid Z_0=e_h}\cdot\frac{1}{kv^*}\\
    &  =     \sum_{j=1}^d\expect{Z_{tj}\mid Z_0=e_h}\cdot \frac{v_j}{kv^*}.
  \end{align*}
  As seen above, the expectation on the right hand side can be
  expressed as 
  \begin{align*}
    \trans{\expect{Z_{t}\mid Z_0=e_h}} = \trans{\expect{Z_0\mid Z_0=e_h}}M^t.    
  \end{align*}
  Additionally, by taking into account the starting conditions, $Z_{0h}=1$
  and $Z_{0j}=0$, for all indices $j\neq h$, this simplifies further to
  \begin{align*}
    \sum_{j=1}^d\expect{Z_{tj}\mid Z_0=e_h}\cdot \frac{v_j}{kv^*} 
     &= \sum_{j=1}^d\sum_{i=1}^d \expect{Z_{0i}\mid Z_0=e_h}\cdot m_{ij}^{(t)}\cdot \frac{v_j}{kv^*}\\
     &= \sum_{j=1}^d m_{hj}^{(t)}\cdot \frac{v_j}{kv^*}.
  \end{align*}
  Finally, by iterating 
  \begin{align*}
    M^tv=M^{t-1}(Mv)=\rho(M)\cdot M^{t-1}v,
  \end{align*}
  which on
  coordinate form gives 
  \begin{align*}
    \sum_{j=1}^d m_{hj}^{(t)}v_j=\rho(M)^t\cdot v_h,   
  \end{align*}
  one obtains the final bound
  \begin{align*}
    \prob{\sum_{j=1}^d Z_{tj}\geq k\mid Z_0=e_h}\leq \frac{\rho(M)^t}{k}\cdot \frac{v_h}{v^*}.
  \end{align*}
\end{proof}

\begin{figure}
  \centering  
  \begin{tikzpicture}

    \draw[|-latex] (0cm,0cm) -- (8.25cm,0cm);

    \draw (1.0cm, 1.0cm) node[text width=1.5cm] {\small Prefix Sum};
    \draw (1.0cm,-1.0cm) node[text width=1.5cm] {\small Branching Process Type};
 
    \draw (2.0cm,-0.1cm) -- (2.0cm,0.1cm) 
    node[above] {$\frac{n}{\chi}\ln\eta$};

    \draw (3.5cm,-0.1cm) 
    node[below] {$\frac{n}{\chi}\ln \phi$} -- (3.5cm,0.1cm)
    node[above] {$\frac{n}{\chi}\ln(\eta\kappa)$};

    \draw (7.5cm,-0.1cm) 
    node[below] {$0$} -- (7.5cm,0.1cm)
    node[above] {$\frac{n}{\chi}\ln(\eta\kappa\phi)$};

    \draw[fill] (4.5cm,0cm) node[below] {$j$} circle (2pt);
    \draw[fill] (6.5cm,0cm) node[below] {$i$} circle (2pt);
    \draw[|-latex] (6.5,0cm) .. controls (5.75cm,0.5cm) and (5.25cm,0.5cm) .. (4.6cm,0.1cm);
    \draw (5.5,0.75) node {$p_{ij}$};

  \end{tikzpicture}

  \caption{Multi-type Branching Process Model in Theorem
    \ref{thm:unlikely-above}.  The prefix-sum of an individual is the
    number of \onebits in the first $n\ln(\eta\kappa\phi)/\chi$
    bit-positions. The population core contains all individuals with
    prefix sum lower than $n\ln(\eta\kappa)/\chi$, which is slightly
    above the equilibrium value of $n\ln(\eta)/\chi$ from 
    Theorem~\ref{thm:eq-pos}. The multi-type branching process considers
    individuals outside the core, where the type of an individual is
    given by the number of \zerobits in the first
    $n\ln(\eta\kappa\phi)/\chi$ bit-positions. The probability that an
    offspring of a type $i$-individual is a type $j$-individual, is
    $p_{ij}$.  }
  \label{fig:branchingtypes}
\end{figure}
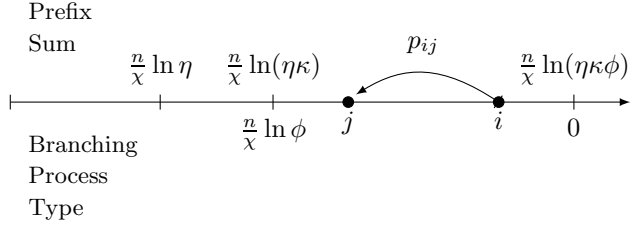

We will now describe how to model a non-selective family tree outside
the core as a multi-type branching process (see
Fig.~\ref{fig:branchingtypes}). 
Recall that the prefix sum of a search
point is the number of \onebits in the first
$n\ln(\eta\kappa\phi)/\chi$ bit positions of the search point, and
that the core is defined as all search points with prefix-sum less
than $n\ln(\eta\kappa)/\chi$ \onebits. The process has
$n(\ln\phi)/\chi$ types. A family tree member has type $i$ if its prefix
sum is $n\ln(\eta\kappa\phi)/\chi-i$. The element $a_{ij}$ of the mean
matrix $A$ of this branching process represents the expected number of
offspring a type $i$-individual gets of type $j$-individuals per
generation. Since we are looking for a lower bound on the extinction
probability, we will over-estimate the matrix elements, which can only
decrease the extinction probability.  By the definition of linear
ranking selection, the expected number of times during one generation
in which any individual is selected is no more than $\eta$. We will
therefore use $a_{ij}=\eta\cdot p_{ij}$, where $p_{ij}$ is the
probability that mutating a type $i$-individual creates a type
$j$-individual. To simplify the proof of the second part of Lemma
\ref{lemma:perron-bound}, we overestimate the probability $p_{ij}$ to
$1/n^2$ for the indices $i$ and $j$ where $j-i\geq 2\log n + 1$. Note
that the probability that none of the first $n\ln(\eta\kappa)/\chi$
bits are flipped is less than
$\exp(-\ln(\eta\kappa))=1/\eta\kappa$. In particular, this means that
$\eta\cdot p_{ii}\leq\eta/\eta\kappa=1/\kappa:=a_{ii}$.  The full
definition of the mean matrix is as follows.

\begin{definition}[Mean Matrix $A$]\label{def:selpresmeanmatrix}
  For any integer $n\geq 1$ and real numbers 
  $\eta, \chi, \phi,\kappa,\varepsilon$ where $0<\chi, 1\leq \eta$ and
  $1<\phi<\kappa\leq\varepsilon$, define the 
  $n\ln(\phi)/\chi\times n\ln(\phi)/\chi$ matrix $A=(a_{ij})$ as 
  \begin{align*}
    a_{ij} & =
    \begin{cases}
      \eta/n^2      & \text{ if }2\log n + 1\leq j-i,\\
      \eta\cdot{n\ln(\eta\kappa\phi)/\chi\choose j-i}\cdot \left(\frac{\chi}{n}\right)^{j-i} & \text{ if } 1\leq j-i\leq2\log n,\\
      1/\kappa                                        & \text{ if } i=j,\text{ and}\\
      1/\kappa\cdot{i\choose i-j}\cdot \left(\frac{\chi}{n}\right)^{i-j} & \text{ if } i>j.
    \end{cases}
  \end{align*}
\end{definition}

In order to apply Lemma~\ref{lemma:branching-size-tail} to mean matrix
$A$ defined above, we first provide upper bounds on the Perron root of
$A$ and on the maximal ratio between the elements of the corresponding
right eigenvector.

\begin{lemma}\label{lemma:perron-bound} 
  For any integer $n\geq 1$, and real numbers
  $\eta, 1< \eta\leq 2,$  
  $\chi>0$, and
  $\varepsilon>1$,
  there exist real numbers $\kappa$ and $\phi$,
  $1<\phi<\kappa\leq\varepsilon$, 
  such that matrix $A$
  given by Definition \ref{def:selpresmeanmatrix}  
  has Perron root bounded from above by $\rho(A)<c$ for some constant
  $c<1$.  Furthermore, for any $h$, $1\leq h\leq n\ln(\phi)/\chi$, the
  corresponding right eigenvector $v$, where $v^* := \min_{i} v_i$,
  satisfies
  \begin{align*}
    \frac{v_h}{v^*} & \leq 2^{n\ln(\phi)/\chi}\cdot \left(\frac{n}{\chi}\right)^{n\ln(\phi)/\chi-h}.
  \end{align*}
\end{lemma}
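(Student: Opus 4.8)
Throughout write $d:=n\ln(\phi)/\chi$ for the number of types (replace $d$ by $\lfloor d\rfloor$ if needed; for the few small $n$ with $d\le 1$ the matrix is at most $1\times1$ and both claims are trivial). Two facts will be used repeatedly: for a non-negative matrix $A$ and any strictly positive vector $w$ one has $\rho(A)\le\max_i(Aw)_i/w_i$ (it follows by iterating $Aw\le cw$); and $A$ is irreducible, since $a_{i,i+1}=\eta\ln(\eta\kappa\phi)>0$ for $i<d$ and $a_{i,i-1}=i\chi/(\kappa n)>0$ for $i>1$, so by Theorem~\ref{thm:perron-frobenius} the right eigenvector $v$ is strictly positive. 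The plan is to fix $\phi:=1+\theta^2$ and $\kappa:=1+C\theta$, where $C$ is a sufficiently large absolute constant and $\theta\in(0,1)$ is a sufficiently small constant depending on $\eta$ and $\varepsilon$; then $1<\phi<\kappa\le\varepsilon$ for $\theta$ small.

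For the Perron-root bound I would use the geometric test vector $w_i:=\theta^{i}$ and split $(Aw)_i/w_i=\sum_j a_{ij}\theta^{j-i}$ into the diagonal-and-backward part ($j\le i$), the short forward jumps ($1\le j-i\le 2\log n$), and the long forward jumps ($j-i\ge 2\log n+1$). The binomial sums telescope: the first part equals $\tfrac1\kappa(1+\chi/(n\theta))^i\le e^{d\chi/(n\theta)}/\kappa=\phi^{1/\theta}/\kappa\le e^{\theta}/\kappa$; the second is at most $\eta((1+\chi\theta/n)^{n\ln(\eta\kappa\phi)/\chi}-1)\le\eta((\eta\kappa\phi)^{\theta}-1)$; the third is at most $(\eta/n^2)\sum_{m\ge1}\theta^m=O(1/n^2)$. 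Hence $\rho(A)\le e^{\theta}/\kappa+\eta((\eta\kappa\phi)^{\theta}-1)+o(1)$. Now $e^{\theta}/(1+C\theta)\le 1-(C-2)\theta$ for $\theta$ small, while $\eta((\eta\kappa\phi)^{\theta}-1)=\eta\theta\ln\eta+O(\theta^2)=O(\theta)$ with an absolute implied constant (since $\eta\le2$); so picking $C$ large enough makes the right-hand side at most some constant $c<1$.

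For the eigenvector ratio the test vector is of no help (it bears no fixed relation to $v$), so one works directly from $Av=\rho v$. First I would show $v^*=v_d$: if instead the minimum were attained at $h_0\le d-1$, then plugging $v_j\ge v^*$ into row $h_0$ gives $\rho v^*=\sum_j a_{h_0 j}v_j\ge v^*(a_{h_0 h_0}+a_{h_0,h_0+1})=v^*(\tfrac1\kappa+\eta\ln(\eta\kappa\phi))$, so $\rho\ge\tfrac1\kappa+\eta\ln(\eta\kappa\phi)\ge\tfrac1{1+C\theta}+\eta\ln\eta>1$ whenever $\theta\le\eta\ln\eta/C$ (here $\eta\ln\eta>0$ as $\eta>1$), contradicting $\rho<1$ from the first part. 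Second, keeping only the term $j=i$ in row $i+1$ gives $\rho v_{i+1}\ge a_{i+1,i}v_i=\tfrac{(i+1)\chi}{\kappa n}v_i$, i.e.\ $v_i\le\tfrac{\rho\kappa n}{(i+1)\chi}v_{i+1}$ for all $i<d$. Iterating from $h$ up to $d$, $v_h\le(\rho\kappa)^{d-h}(n/\chi)^{d-h}\tfrac{h!}{d!}v_d$; since $d!/h!=(h+1)(h+2)\cdots d\ge 2^{d-h}$ and $\rho\kappa<2$, the prefactor is at most $1$, so $v_h\le(n/\chi)^{d-h}v_d=(n/\chi)^{d-h}v^*\le 2^{d}(n/\chi)^{d-h}v^*$.

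I expect the eigenvector localisation to be the crux: unlike $\rho(A)$, the entries of the genuine Perron vector cannot be read off from a convenient super-solution, so the argument must exploit the banded structure of $A$ — a single sub-diagonal term already caps $v_i/v_{i+1}$ at $O(n/\chi)$, the descending product $(h+1)\cdots d$ then swallows the modest growth factor $\rho\kappa<2$, and the fact that any interior minimiser of $v$ would push a row sum (hence $\rho$) above $1$ forces the minimum to sit at the type adjacent to the core. The remaining points — the precise value of $C$, the telescoping binomial identities, and the integrality of $d$ — are routine.
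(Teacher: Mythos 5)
Your proposal is correct, but its second half takes a genuinely different route from the paper. For the Perron root you and the paper use essentially the same device --- a geometric re-weighting of the matrix --- only in transposed form and with different parameter choices: the paper writes $A=\kappa^{-1}I+B$, applies the Frobenius column-sum bound to $SBS^{-1}$ with $S=\diag(q^1,\dots,q^{n\ln(\phi)/\chi})$, $q>1$, and fixes $\kappa:=\varepsilon$, whereas you bound $\max_i (Aw)_i/w_i$ with $w_i=\theta^i$, $\theta<1$, and take both $\kappa=1+C\theta$ and $\phi=1+\theta^2$ close to $1$; since the lemma only asserts existence of some $1<\phi<\kappa\le\varepsilon$, either choice is legitimate, and your three telescoping band sums correspond almost term for term to the paper's three column sums. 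The real divergence is the eigenvector ratio. The paper avoids any structural information about $v$ via the generalised Minc bound $v_h/v^*\le\max_{k,j}a_{hj}/a_{kj}$, followed by a six-case comparison of matrix entries, which is what produces the $2^{n\ln(\phi)/\chi}$ factor. You instead read the bound off the eigen-equation itself (writing $d:=n\ln(\phi)/\chi$): an interior minimiser of $v$ would force $\rho\ge \kappa^{-1}+a_{h_0,h_0+1}\ge 1$, contradicting $\rho<1$ from the first part, so the minimum sits at type $d$, and iterating the single sub-diagonal inequality $\rho v_{i+1}\ge \tfrac{(i+1)\chi}{\kappa n}v_i$ gives $v_h/v^*\le(\rho\kappa)^{d-h}\left(\tfrac{n}{\chi}\right)^{d-h}\tfrac{h!}{d!}\le \left(\tfrac{n}{\chi}\right)^{d-h}$, which is in fact sharper than the stated bound (the $2^{d}$ factor is not needed). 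Your route buys a cleaner, stronger estimate with no case analysis, at the price of invoking $\rho<1$ and the positivity and ordering structure of the sub- and super-diagonal; the paper's Minc-type argument needs no knowledge of where $v$ is minimised and compares only entries of $A$. The minor loose ends you flag (non-integrality of $n\ln(\phi)/\chi$; for $n=1$ the super-diagonal entry is $\eta/n^2=\eta$ rather than $\eta\ln(\eta\kappa\phi)$, which only strengthens your contradiction) are harmless and at the paper's own level of informality.
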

\begin{proof}
  Set $\kappa:=\varepsilon$.  Since $a_{ij}>0$ for all $i,j$, matrix
  $A$ is by Definition~\ref{def:irreducible} irreducible, and
  Theorem~\ref{thm:perron-frobenius} applies to the matrix. Expressing
  the matrix as $A=1/\kappa\cdot I + B$, where $B := A-1/\kappa\cdot
  I$, and $I$ is the identity matrix, the Perron root is
  $\rho(A)=1/\kappa+\rho(B)$.

  The Frobenius bound for the Perron root of a non-negative matrix $M=(m_{ij})$
  states that $\rho(M)\leq \max_j c_j(M)$
  \cite{Kolotilina2004PerronBounds}, where $c_j(M):=\sum_i m_{ij}$ is
  the $j$-th column sum of $M$.  However, when applied directly to our
  matrix, this bound is insufficient for our purposes. Instead, we can
  consider the transformation $SBS^{-1}$, for an invertible matrix
  \begin{align*}
    S:=\diag(x_1,x_2,...,x_{n\ln(\phi)/\chi}).     
  \end{align*}
  To see why this
  transformation is helpful, note that for any matrix $A$ with the
  same dimensions as $S$, we have $\det(SAS^{-1})=\det(A)$. So if
  $\rho$ is an eigenvalue of $B$, then 
  \begin{align*}
  0 & =\det(B-\rho I)\\
    & =\det(S(B-\rho I)S^{-1})\\
    & =\det(SBS^{-1}-\rho I),    
  \end{align*}
  and $\rho$ must also be an eigenvalue of $SBS^{-1}$. 
  It follows that $\rho(B)=\rho(SBS^{-1})$.
  We will therefore apply the Frobenius bound to the matrix
  $SBS^{-1}$, which has off-diagonal elements
  \begin{align*}
    (SBS^{-1})_{ij} = a_{ij}\cdot\frac{x_i}{x_j}.
  \end{align*}
  Define 
  $x_i := q^{i}$ where 
  \begin{align*}
    q := \frac{\ln(\eta\kappa\phi)}{\ln(1+1/r\eta)},    
  \end{align*}
  for some constant $r>1/(\eta-1)\geq 1$ that will be specified later. 
  Since $\eta=1+c$ for some $c>0$, the constant $q$ is
  bounded as
  \begin{align*}
    q & 
    > \frac{\ln\eta}{\ln(1+\frac{1}{r\eta})}
    > \frac{\ln\eta}{\ln(2-\frac{1}{\eta})}
    = \frac{\ln\eta}{\ln\eta+\ln(\frac{2}{\eta}-\frac{1}{\eta^2})}
    >    1.
  \end{align*}

  The sum of any column $j$ can be bounded by the three sums
  \begin{align*}
    \sum_{i=1}^{j-2\log n-1} a_{ij}\cdot\frac{x_i}{x_j} 
    & \leq n\cdot\frac{\eta}{n^2} = \frac{\eta}{n},\\
    \sum_{i=j-2\log n}^{j-1} a_{ij}\cdot\frac{x_i}{x_j} 
    & \leq \eta\cdot\sum_{i=1}^{j-1} {n\ln(\eta\kappa\phi)/\chi\choose j-i}\cdot \left(\frac{\chi}{n}\right)^{j-i}\cdot q^{i-j}\\
    & \leq \eta\cdot\sum_{i=1}^{j-1} \frac{(\ln(\eta\kappa\phi)/q)^{j-i}}{(j-i)!}\\
    & \leq \eta\cdot\sum_{k=1}^{\infty} \frac{(\ln(\eta\kappa\phi)/q)^{k}}{k!}\\
    &  =   \eta\cdot(\exp(\ln(\eta\kappa\phi)/q)-1),\quad\text{ and}\\
           \sum_{i=j+1}^{n\ln(\phi)/\chi} a_{ij}\cdot\frac{x_i}{x_j} & 
       =   \frac{1}{\kappa}\cdot\sum_{i=j+1}^{n\ln(\phi)/\chi}{i\choose i-j}\cdot \left(\frac{\chi}{n}\right)^{i-j}\cdot q^{i-j} \\
    & \leq \frac{1}{\kappa}\cdot\sum_{i=j+1}^{n\ln(\phi)/\chi}{n\ln(\phi)/\chi\choose i-j}\cdot \left(\frac{\chi}{n}\right)^{i-j}\cdot q^{i-j}\\
    & \leq \frac{1}{\kappa}\cdot\sum_{i=j+1}^{n\ln(\phi)/\chi}\frac{(q\ln\phi)^{i-j}}{(i-j)!}\\
    & \leq \frac{1}{\kappa}\cdot\sum_{k=1}^{\infty}\frac{(q\ln\phi)^k}{k!}\\
    & =    \frac{1}{\kappa}\cdot(\exp(q\ln\phi)-1).
  \end{align*}
  The Perron root of matrix $A$ can now be bounded by
  \begin{align*}
    \rho(A) 
    & \leq \frac{1}{\kappa} + 
           \max_j c_j(SBS^{-1}) \\
    & =    \frac{1}{\kappa} + 
           \max_j\sum_{i\neq j}^{n\ln(\phi)/\chi} a_{ij}\cdot\frac{x_i}{x_j} \\
    & \leq \frac{\eta}{n} +
           \eta\cdot (\exp(\ln(\eta\kappa\phi)/q)-1) +
           \frac{1}{\kappa}\cdot\exp(q \ln\phi ) \\
    & =    \frac{\eta}{n} +
           \frac{1}{r} +
           \frac{\phi^q}{\kappa}.
   \end{align*}
   Choosing $\phi$ sufficiently small, such that $1<\phi<\kappa^{1/2q}$, and
   defining the constant $r:=\frac{2}{\eta-1}\cdot\frac{\sqrt{\kappa}}{\sqrt{\kappa}-1}>1/(\eta-1)$, we have
   \begin{align*}
     \rho(A) 
     & \leq \frac{\eta}{n} + 
            \frac{1}{r} + 
            \frac{\phi^q}{\kappa} \\
     & \leq \frac{\eta}{n} + 
            \frac{\sqrt{\kappa}-1}{2\sqrt{\kappa}} + 
            \frac{1}{\sqrt{\kappa}}\\
     & =    \frac{\eta}{n} + \frac{1}{2} + 
            \frac{1}{2\sqrt{\kappa}} 
       < 1.
   \end{align*}

   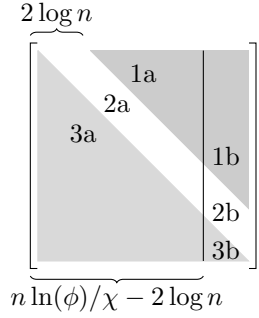
\begin{figure}
     \centering
     \begin{tikzpicture}

       \draw (0.0, 0.0) rectangle (3.0,3.0);
       \draw[white,fill=white] (0.1,-0.1) rectangle (2.9,3.1);

       \draw[fill=gray!30,gray!30] (0.1, 2.9) -- (2.9,0.1) -- (0.1,0.1) -- (0.1, 2.9);
       \draw[fill=gray!40,gray!40] (0.8, 2.9) -- (2.9,0.8) -- (2.9,2.9) -- (0.8, 2.9);

       \draw (2.3, 0.1) -- (2.3,2.9);

       \draw[snake=brace] (0.0, 3.1) -- node[above] {$2\log n$} (0.7,3.1);
       \draw[snake=brace] (2.3,-0.1) -- node[below] {$n\ln(\phi)/\chi-2\log n$} (0.0,-0.1);

       \node at (1.50, 2.60) {1a};
       \node at (1.15, 2.20) {2a};
       \node at (0.70, 1.80) {3a};

       \node at (2.60, 1.50) {1b};
       \node at (2.60, 0.75) {2b};
       \node at (2.60, 0.25) {3b};

     \end{tikzpicture}

     \caption{Structure of matrix $A$ in Definition \ref{def:selpresmeanmatrix}.}
     \label{fig:matrix-structure}
   \end{figure}

   The second part of the lemma involves for any $h$, to bound the
   ratio $v_h/v^*$ where $v$ is the right eigenvector corresponding to
   the eigenvalue $\rho$. In the special case where the index $h$
   corresponds to the eigenvector element with largest value, this
   ratio is called the \emph{principal ratio}.  By generalising Minc's
   bound for the principal ratio \cite{Minc1970Eigenvector}, one
   obtains the upper bound
   \begin{align*}
     \frac{v_h}{v^*}
     = \max_k\frac{v_h}{v_k} 
     = \max_k\frac{\rho v_h}{\rho v_k}
     = \max_k\frac{\sum_{j} a_{hj}\cdot v_j}{\sum_{j} a_{kj}\cdot v_j}
     \leq \max_{k,j}\frac{a_{hj}}{a_{kj}}.
   \end{align*}
   It now suffices to prove that the matrix elements of $A$ satisfy
   \begin{align*}
     \forall h,j,k\quad  \frac{a_{hj}}{a_{kj}} 
     & \leq 2^{n\ln(\phi)/\chi}\cdot \left(\frac{n}{\chi}\right)^{n\ln(\phi)/\chi-h}.
   \end{align*}

   To prove that these inequalities hold, we first find a
   lower bound $a^*_j$ on the minimal element
   along any column, \ie $\min_k a_{kj}\geq a^*_j,$ for any column index $j$. 
   As illustrated in Fig.~\ref{fig:matrix-structure},
   the matrix elements of $A$ can be divided into six cases according
   to their column and row indices, 
   For case 1a and 1b, where $2\log n +1\leq j-k\leq n\ln(\phi)/\chi$,
   \begin{align*}
     a_{kj} & > \frac{1}{n^2}.
   \end{align*}
   For case 2a and 2b, where $0< j-k\leq 2\log n$,
   \begin{align*}
     a_{kj} & 
     \geq \left(\frac{\chi}{n}\right)^{j-k} 
     \geq \left(\frac{\chi}{n}\right)^{2\log n}. 
   \end{align*}
   For case 3a and 3b, where $k\geq j$,
   \begin{align*}
     a_{kj}  & 
     \geq \frac{1}{\kappa}\left(\frac{\chi}{n}\right)^{k-j}
     \geq \frac{1}{\kappa}\left(\frac{\chi}{n}\right)^{n\ln(\phi)/\chi-j}.
   \end{align*}
   Hence, we can use the lower bound
   \begin{align*}
     a^*_j :=
     &
     \begin{cases}
       \frac{1}{\kappa}\left(\frac{\chi}{n}\right)^{n\ln(\phi)/\chi-j} & \text{if } j\leq n\ln(\phi)/\chi-2\log n,\text{ and}\\
       \left(\frac{\chi}{n}\right)^{2\log n}         & \text{otherwise.}
     \end{cases}
   \end{align*}

   We then upper bound the ratio $a_{hj}/a^*_j$ for all column indices $j$.
   All elements of the matrix satisfy $a_{hj}\leq \eta$. Therefore, 
   in case 1b, 2b and 3b, where $j> n\ln(\phi)/\chi-2\log n$,
   \begin{align*}
     \frac{a_{hj}}{a^*_j} 
     & \leq \eta\left(\frac{n}{\chi}\right)^{2\log n}.
   \end{align*}
   In case 1a and 2a, where $h< j\leq n\ln(\phi)/\chi-2\log n$,
   \begin{align*}
     \frac{a_{hj}}{a^*_j}
     & \leq \kappa\eta\left(\frac{n}{\chi}\right)^{n\ln(\phi)/\chi-j}
       \leq \kappa\eta\left(\frac{n}{\chi}\right)^{n\ln(\phi)/\chi-h}.
     \end{align*}
     Finally, in case 3a, where $j\leq h$ and $j\leq n\ln(\phi)/\chi-2\log n$,
     \begin{align*}
       \frac{a_{hj}}{a^*_j}
       & \leq \frac{1}{\kappa}{h\choose h-j}\cdot\left(\frac{\chi}{n}\right)^{h-j}\cdot\kappa\left(\frac{n}{\chi}\right)^{n\ln(\phi)/\chi-j}\\
       & \leq 2^{n\ln(\phi)/\chi}\cdot\left(\frac{n}{\chi}\right)^{n\ln(\phi)/\chi-h}.
     \end{align*}
     The second part of the lemma therefore holds.
\end{proof}

Having all the ingredients required to apply 
Lemma~\ref{lemma:branching-size-tail} to the mean matrix in Definition
\ref{def:selpresmeanmatrix}, we are now ready to prove the main
technical result of this section. Note that this result implies that 
Conjecture 1 in \cite{Lehre2009FOGA} holds.

\begin{theorem}\label{thm:unlikely-above}
  For any positive constant $\epsilon$, and some positive constant $c$,
  the probability that during $e^{cn}$
  generations, Linear Ranking EA with population size
  $\lambda=poly(n)$, selection pressure $\eta$, and mutation rate
  $\chi/n$, there exists any individual with at least
  ${n((\ln\eta)/\chi+\epsilon)}$
  leading \onebits is $e^{-\Omega(n)}$.
\end{theorem}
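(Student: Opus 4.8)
The plan is to instantiate, for \selpres, the five-step programme for non-selective family trees described above, with the multi-type branching process of Definition~\ref{def:selpresmeanmatrix} as the engine and Lemmas~\ref{lemma:perron-bound} and~\ref{lemma:branching-size-tail} as the main tools. Given $\epsilon>0$, I would first apply Lemma~\ref{lemma:perron-bound} with $\varepsilon:=e^{\chi\epsilon/3}$ to fix constants $\kappa,\phi$ ($1<\phi<\kappa\le\varepsilon$) for which the mean matrix $A$ has Perron root $\rho(A)\le c_1$ for some constant $c_1<1$ and right eigenvector $v$ with $v_h/v^*\le 2^{d}(n/\chi)^{d-h}$, where $d:=n(\ln\phi)/\chi$. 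Writing $\mu:=n\ln(\eta\kappa\phi)/\chi$ for the window length of the model, the choice of $\varepsilon$ forces $\mu<n(\ln\eta/\chi+\tfrac23\epsilon)$, so the gap $g:=n(\ln\eta/\chi+\epsilon)-\mu$ is $\Omega(n)$. I take the \emph{core} to be the search points whose prefix sum (number of \onebits among the first $\mu$ positions) is below $n\ln(\eta\kappa)/\chi$, as before. Since a core individual has fewer than $n\ln(\eta\kappa)/\chi<\mu<n(\ln\eta/\chi+\epsilon)$ leading \onebits, it is never a witness, so any individual reaching $n(\ln\eta/\chi+\epsilon)$ leading \onebits lies outside the core and belongs to a pruned non-selective family tree, whose root is an individual of the initial population or the child of a core individual; over $\tau:=e^{cn}$ generations there are at most $m:=\lambda(\tau+1)=\poly(n)e^{cn}$ such trees.

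Next, each family tree is dominated by a copy of the branching process with mean matrix $A$ started from a single individual whose type is the number of window-\zerobits of its root: one uses that any individual is selected at most $\alpha(0)=\eta$ times in expectation per generation and then mutated independently, that offspring re-entering the core are discarded, and that by construction the entries of $A$ upper bound the resulting per-type offspring expectations. Because a core individual has more than $d$ window-\zerobits, a single mutation producing a child of type at most $d-c_2n$ has probability at most $\binom{\mu}{c_2n}(\chi/n)^{c_2n}=n^{-\Omega(n)}$ for any constant $c_2\in(0,(\ln\phi)/\chi)$, so w.o.p.\ every root has type in $[d-c_2n,d]$; and the uniformly random initial-population roots w.o.p.\ carry $\Omega(n)$ \zerobits in $[\mu+1,\mu+g]$ (Chernoff and a union bound). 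Choosing $t(n):=\lceil 2n\ln(n/\chi)/\ln(1/c_1)\rceil=O(n\log n)$ makes $c_1^{t(n)}\cdot v_h/v^*\le e^{-\Omega(n\log n)}$ for all such $h$, so Lemma~\ref{lemma:branching-size-tail} with $k=1$ and a union bound give that w.o.p.\ every family tree is extinct by generation $t(n)$; Lemma~\ref{lemma:branching-size-tail} with a suitable $k$ then bounds w.o.p.\ the number of tree members over all generations.

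Finally, one shows no lineage produces an individual with all of positions $1,\dots,\mu+g$ equal to $1$. The free ingredient is that, in a non-selective family tree, the bit values at positions $\mu+1,\dots,n$ perform independent lazy random walks that are independent of the (pruned) tree structure, the latter depending only on the offspring counts and the mutations restricted to the window: if the root carries $\Omega(n)$ \zerobits in $[\mu+1,\mu+g]$, then for any descendant those bits are mutually independent and each is a $1$ only after an odd number of flips --- probability at most $\tfrac12$ --- so the descendant has $[\mu+1,\mu+g]$ all $1$ with probability only $2^{-\Omega(n)}$; by linearity of expectation over tree members this should force the expected number of witnesses to be $e^{-\Omega(n)}$ once $c$ is chosen small. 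The main obstacle I anticipate is exactly here: reconciling the tree-size bounds, which carry the large factor $v_h/v^*$ (up to $2^{d}(n/\chi)^{\Omega(n)}$), with the $2^{-\Omega(n)}$ tail estimate, so that the bookkeeping closes --- this seems to require also exploiting that a witness is type $0$ in the window, i.e.\ that reaching it costs $\Omega(n)$ moves in the window coordinate against the drift built into $A$, rather than relying on the tail segment alone. A second, smaller point is controlling the tail segment at roots spawned from inside the core, which must be imported from the equilibrium analysis of Sections~\ref{sec:equilibrium}--\ref{sec:balance} (Theorems~\ref{thm:eq-pos} and~\ref{thm:eq-time-below}) confining the population's leading-\onebits frontier below $\mu$. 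A union bound over all failure events established along the way then yields the claimed probability $e^{-\Omega(n)}$.
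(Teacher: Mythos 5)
Your overall framework matches the paper's (non-selective family trees outside a prefix-sum core, the mean matrix of Definition~\ref{def:selpresmeanmatrix}, Lemmas~\ref{lemma:perron-bound} and~\ref{lemma:branching-size-tail}), but the argument has a genuine gap --- the one you flag yourself --- and it is not a technicality that can be postponed. Because you shrink the window to $\mu<n(\ln\eta/\chi+\tfrac{2}{3}\epsilon)$ and try to finish via the gap segment $[\mu+1,\mu+g]$, your per-member witness bound is only $2^{-\Omega(n)}$, whereas every size bound extractable from Lemma~\ref{lemma:branching-size-tail} carries the factor $v_h/v^*$, which for root types $h\geq d-c_2n$ (with $d=n\ln(\phi)/\chi$) is still as large as $2^{d}(n/\chi)^{c_2n}=e^{\Theta(n\log n)}$ for \emph{any} constant $c_2>0$; truncating the root type does not help. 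So whether you argue via expectations or via a w.o.p.\ width bound, (number of tree members)$\times 2^{-\Omega(n)}$ does not become small, and the bookkeeping does not close. The paper closes it by a cancellation you did not carry out: it keeps the full window, $(\ln\kappa+\ln\phi)/\chi=\epsilon$, and pairs the eigenvector bound $v_h/v^*\leq 2^{d}(n/\chi)^{d-h}$ with the \emph{root-type distribution} $\prob{Z_0=e_h}\leq\binom{d}{d-h}(\chi/n)^{d-h}$ --- the root is created from a core parent by a single mutation which must flip $d-h$ of the parent's at least $d$ window \zerobits --- so the $(n/\chi)^{d-h}$ factors cancel exactly, leaving only $2^{2d}=2^{2n\ln(\phi)/\chi}$, which is absorbed into $\rho(A)^{t-wn}$ with $w$ arbitrarily small by taking $\phi$ close to $1$. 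This yields extinction within $n$ (not $\Theta(n\log n)$) generations and width $\rho(A)^{-2wn}$ w.o.p., and the per-lineage witness probability is then the probability that \emph{all} $\geq n\ln(\phi)/\chi$ window \zerobits of the core parent are flipped at least once within $n$ generations, which is $e^{-\Omega(n)}$; no gap segment is needed. (Incidentally, the $O(n)$ lifetime is essential for that last step: with your $t(n)=\Theta(n\log n)$ a fixed \zerobit survives untouched only with probability about $n^{-\Theta(\chi)}$, and the all-zeros-flipped probability is no longer $e^{-\Omega(n)}$.)

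A second, independent gap is your treatment of the gap-segment bits for roots spawned after initialisation, which are the overwhelming majority. The core is defined only through the prefix sum on $[1,\mu]$, so nothing guarantees that a core parent --- and hence the root it spawns --- has $\Omega(n)$ \zerobits in $[\mu+1,\mu+g]$. Theorems~\ref{thm:eq-pos} and~\ref{thm:eq-time-below} control the leading-\onebits value of $\gamma$-ranked individuals for constant $\gamma$ only; they say nothing about the bit distribution of core individuals beyond position $\mu$, and the ``bits beyond the frontier are uniformly distributed'' argument used in Theorem~\ref{thm:succprob-corr-selpres} is itself deduced from Theorem~\ref{thm:unlikely-above}, so importing it here would be circular. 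The paper's proof sidesteps this entirely because the only \zerobits it uses are window \zerobits, which every core parent possesses by the definition of the core.
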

\begin{proof}
  In the following, $\kappa$ and $\phi$ are two constants such that
  $(\ln\kappa+\ln\phi)/\chi=\epsilon$, where the relative magnitudes 
  of $\kappa$ and $\phi$ are as given in the proof of 
  Lemma~\ref{lemma:perron-bound}. 

  Let the \emph{prefix sum} of a search point be the number of
  \onebits in the first $n\ln(\eta\kappa\phi)/\chi$ bits.  We will
  apply the technique of non-selective family trees, where the core is
  defined as the set of search points with prefix sum less than
  $n\ln(\eta\kappa)/\chi$ 1\nobreakdash-bits.  Clearly, any
  non-optimal individual in the core has fitness lower than
  $n\ln(\eta\kappa)/\chi$.  

  To estimate the extinction time of a given family tree, we consider
  the multi-type branching process $Z_0,Z_1,...$ having
  $n\ln(\phi)/\chi$ types, and where the mean matrix $A$ is given by
  Definition \ref{def:selpresmeanmatrix}. Let the random variable $S_t
  :=\sum_{i=1}^{n\ln(\phi)/\chi} Z_{ti}$ be the family size in
  generation $t$.  By Lemma~\ref{lemma:branching-size-tail} and
  Lemma~\ref{lemma:perron-bound}, it is clear that the extinction
  probability of the family tree depends on the type of the root of
  the family tree. The higher the prefix sum of the family root, the
  lower the extinction probability. The parent of the root of the
  family tree has prefix sum lower than $n\ln(\eta\kappa)/\chi$, 
  hence the probability that the root of the family tree has 
  type $h$, is
  \begin{align*}
    \prob{Z_0=e_h} & \leq
         {n\ln(\phi)/\chi\choose n\ln(\phi)/\chi-h}
        \cdot \left(\frac{\chi}{n}\right)^{n\ln(\phi)/\chi-h}.
  \end{align*}

  By Lemma \ref{lemma:branching-size-tail} and Lemma \ref{lemma:perron-bound},
  the probability that the family tree has more than $k$ members in generation
  $t$ is for sufficiently large $n$ and sufficiently small $\phi$ bounded by
  \begin{align*}
    \lefteqn{\prob{S_{t}\geq k}}\\
    & =    \sum_{h=1}^{n\ln(\phi)/\chi}  
           \prob{Z_0=e_h}
           \cdot \prob{\sum_{j=1}^{n\ln(\phi)/\chi} Z_{tj}\geq k \mid Z_0=e_h}\\
    & \leq \sum_{h=1}^{n\ln(\phi)/\chi}
           {n\ln(\phi)/\chi\choose n\ln(\phi)/\chi-h}
           \cdot \left(\frac{\chi}{n}\right)^{n\ln(\phi)/\chi-h}
           \cdot \frac{\rho(A)^t}{k}
           \cdot \frac{v_h}{v^*}\\
    & \leq 2^{n\ln(\phi)/\chi}
           \cdot \frac{\rho(A)^t}{k}
           \cdot \sum_{h=0}^{n\ln(\phi)/\chi}
           {n\ln(\phi)/\chi\choose h}\\
    & =    2^{2n\ln(\phi)/\chi} 
           \cdot \frac{\rho(A)^t}{k}.
  \end{align*}
  By Lemma~\ref{lemma:perron-bound}, the Perron root of matrix $A$ is
  bounded from above by a constant $\rho(A)<1$.
  Hence, for any constant $w>0$, the constant $\phi$ can be chosen
  sufficiently small such that for large $n$, the probability is
  bounded by $\prob{S_{t}\geq k} \leq \rho(A)^{t-wn}/k$.

  For $k=1$ and $w<1$,
  the probability that the non-selective family tree is not extinct in
  $n$ generations, \ie, that the \emph{height} of the tree is larger
  than $n$, is $\rho(A)^{\Omega(n)}=e^{-\Omega(n)}$. Furthermore, the
  probability that the \emph{width} of the non-selective family tree
  exceeds $k=\rho(A)^{-2wn}$ in any generation is by union bound 
  less than $n\rho(A)^{wn}=e^{-\Omega(n)}$.

  We now consider a phase of $e^{cn}$ generations. The number of
  family trees outside the core during this period is less than 
  $\lambda e^{cn}$. 
  The probability that any of these family trees survives longer 
  than $n$ generations, or are wider than $\rho(A)^{-2wn}$, is by 
  union bound 
  $\lambda e^{cn}\cdot (e^{-\Omega(n)}+e^{-\Omega(n)})=e^{-\Omega(n)}$ 
  for a 
  sufficiently small constant $c$. The number of paths from root to
  leaf within a single family tree is bounded by the product of the
  height and the width of the family tree. Hence, the expected 
  number of different paths from root to leaf in all family trees 
  is less than $\lambda e^{cn}n\rho(A)^{-2wn}$.
  The probability that it exceeds $e^{2cn}\rho(A)^{-2wn}$ is by 
  Markov's inequality $\lambda e^{cn}ne^{-2cn}=e^{-\Omega(n)}$.

  The parent of the root of each family tree has prefix sum no 
  larger than $n\ln(\eta\kappa)/\chi$. In order to reach at least
  $n\ln(\eta\kappa\phi)/\chi$ leading 1\nobreakdash-bits, it is therefore
  necessary to flip $n\ln(\phi)/\chi$  0\nobreakdash-bits within
  $n$ generations. The probability that a given \zerobit is not
  flipped during $n$ generations is $(1-\chi/n)^{n}\geq p$ for 
  some constant $p>0$. Hence, the probability that all of the 
  $n\ln(\phi)/\chi$ \zerobits are
  flipped at least once within $n$ generations is no more than
  $p^{n\ln(\phi)/\chi}=e^{-c'n}$ for some constant $c'>0$.
  Hence, by union bound, the probability that any of the
  paths attains at least $\ln(\eta\kappa\phi)/\chi$ leading 
  \onebits is less than $e^{2cn}\rho(A)^{-2wn}e^{-c'n}=e^{-\Omega(n)}$ 
  for sufficiently small $c$ and $w$.
\end{proof}

Using Theorem \ref{thm:unlikely-above}, it is now straightforward to
prove that \selpres is hard for the Linear Ranking EA when the ratio
between the selection pressure $\eta$ and the mutation rate $\chi$ is
too small.

\begin{corollary}\label{cor:runtime-low-selpres}
  The probability that Linear Ranking EA with population size
  $\lambda=\poly(n)$, bit-wise mutation rate $\chi/n$, and selection
  pressure $\eta$ satisfying $\eta<
  \exp(\chi(\sigma-\delta))-\epsilon$ for any $\epsilon>0$, finds the
  optimum of \selpres within $e^{cn}$ function evaluations is
  $e^{-\Omega(n)}$, for some constant $c>0$.
\end{corollary}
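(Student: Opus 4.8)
\emph{Proof proposal.} The plan is to reduce the statement to Theorem~\ref{thm:unlikely-above} and then exploit the fact that, conditional on no individual acquiring too many leading \onebits, the bits following the ``frontier'' stay uniformly distributed, so that the long all-\onebits block demanded by the optimality condition $\|x[k+4,(\sigma-\delta)n-1]\|=1$ is very unlikely ever to appear.

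First I would translate the hypothesis into a statement about the equilibrium position. Since $\eta>1$ we have $0<\epsilon\exp(-\chi(\sigma-\delta))<1$, so the inequality $\ln(1-x)<-x$ gives $\ln\eta<\ln(\exp(\chi(\sigma-\delta))-\epsilon)<\chi(\sigma-\delta)-\epsilon''$ for the constant $\epsilon'':=\epsilon\exp(-\chi(\sigma-\delta))>0$. Pick a constant $\epsilon'>0$ so small that $(\ln\eta)/\chi+2\epsilon'<\sigma-\delta$, and set $X:=\lceil n((\ln\eta)/\chi+\epsilon')\rceil$; then $X=\Omega(n)$ and $(\sigma-\delta)n-1-X=\Omega(n)$. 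By Theorem~\ref{thm:unlikely-above} applied with parameter $\epsilon'$, there is a constant $c_0>0$ such that with probability $1-e^{-\Omega(n)}$ no individual has $X$ or more leading \onebits during the first $e^{c_0 n}$ generations; call the complementary event a \emph{failure}, and choose the constant $c\le c_0$ small enough that $e^{cn}$ function evaluations span at most $e^{c_0 n}$ generations and involve at most $e^{cn}$ individual evaluations.

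Next I would show that, conditioned on no failure, the bits in positions $X+1,\dots,n$ of every individual are marginally uniformly distributed in every generation. The point is that, absent a failure and before the optimum is found, the fitness of every individual equals its number of leading \onebits and hence depends only on its first $X$ bits; consequently the entire sequence of fitness rankings and selection choices in the run is a function of the first $X$ bits of the populations together with independent sampling randomness, and never involves bits $X+1,\dots,n$. Since the initial population is uniform and mutation flips each bit independently, a straightforward induction over generations shows that, conditioned on the trajectory of the first $X$ bits and on the selection choices (in particular conditioned on the no-failure event, which is determined by these), the bits in positions $X+1,\dots,n$ of any fixed individual in any fixed generation are uniform on $\{0,1\}^{n-X}$.

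Finally I would combine the two facts. To be optimal an individual must satisfy $\|x[k+4,(\sigma-\delta)n-1]\|=1$, and since $X>k+3$ for large $n$ this forces all of the $(\sigma-\delta)n-1-X=\Omega(n)$ bits in positions $X+1,\dots,(\sigma-\delta)n-1$ to be \onebits; by the previous paragraph this has probability $2^{-\Omega(n)}$ for any fixed individual in any fixed generation, conditioned on no failure. A union bound over the at most $e^{cn}$ individual evaluations performed within $e^{cn}$ function evaluations then shows that, conditioned on no failure, an optimal search point is produced with probability at most $e^{cn}\cdot 2^{-\Omega(n)}=e^{-\Omega(n)}$ once $c$ is a sufficiently small constant; adding back the $e^{-\Omega(n)}$ failure probability from Theorem~\ref{thm:unlikely-above} completes the argument. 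I expect the only subtle point to be the uniformity-of-free-bits claim, which needs the (standard) assumption that the selection mechanism depends on individuals only through their fitness values; everything else is a routine reduction plus union bound.
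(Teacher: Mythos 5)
Your plan is sound, but it follows a genuinely different route from the paper's own proof, which is a two-line reduction: the paper asserts that reaching the optimum requires first obtaining an individual with at least $(\sigma-\delta)n$ leading \onebits and then invokes Theorem~\ref{thm:unlikely-above} directly. Taken literally that necessity claim is imprecise (optimal points begin with $k+3$ \zerobits, so they have no leading \onebits at all); what really carries the paper's argument is the prefix-sum machinery inside the proof of Theorem~\ref{thm:unlikely-above}, since any optimum necessarily has a prefix sum far outside the core used there. You avoid this by using only the \emph{statement} of Theorem~\ref{thm:unlikely-above} -- capping the leading \onebits at $X\approx n((\ln\eta)/\chi+\epsilon')$ -- and then arguing that the bits beyond position $X$ stay uniformly distributed, so the all-ones block in positions $X+1,\dots,(\sigma-\delta)n-1$ demanded by optimality appears with probability $2^{-\Omega(n)}$ per evaluated individual, finishing with a union bound; this ``free bits are uniform'' device is the same one the paper uses in the proof of Theorem~\ref{thm:succprob-corr-selpres}, so it is in the spirit of the paper even though the corollary's own proof does not use it. The trade-off: your version is self-contained at the level of theorem statements and in fact repairs the looseness in the paper's wording, but its load-bearing step is the uniformity claim, which is only valid for the dynamics \emph{up to the first evaluation of an optimal point} (afterwards the fitness value $2n$ does depend on the suffix, and conditioning on the future trajectory of the first $X$ bits could otherwise leak information); you should therefore phrase it via the process stopped at the first optimum, or equivalently a coupling with the run driven by truncated \leadingones fitness, together with the fact that the no-failure event is measurable with respect to the first $X$ bits. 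You flag exactly this subtlety, and once it is formalized the argument goes through; the paper's intended prefix-sum route avoids the conditioning issue altogether at the price of reopening the branching-process analysis of Section~\ref{sec:too-low}.
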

\begin{proof}
  In order to reach the optimum, it is necessary to obtain an
  individual having at least $n(\alpha-\delta)$ leading 
  1\nobreakdash-bits.  However, by Theorem~\ref{thm:unlikely-above}, 
  the probability that this happens within $e^{cn}$ generations 
  is $e^{-\Omega(n)}$ for some constant $c>0$.
\end{proof}

\section{Conclusion}

The aim of this paper has been to better understand the
relationship between mutation and selection in EAs, and in particular
to what degree this relationship can have an impact on the runtime. To
this end, we have rigorously analysed the runtime of a non-elitist
population-based EA that uses linear ranking
selection and bit-wise mutation on a family of fitness functions. We
have focused on two parameters of the EA, $\eta$ which
controls the selection pressure, and $\chi$ which controls the 
bit-wise mutation rate.

The theoretical results show that there exist fitness functions where
the parameter settings of selection pressure $\eta$ and mutation rate
$\chi$ have a dramatic impact on the runtime.  To achieve polynomial
runtime on the problem, the settings of these parameters need to be
within a narrow critical region of the parameter space, as illustrated
in Fig.~\ref{fig:finalresult}. An arbitrarily small increase in
the mutation rate, or decrease in the selection pressure can increase the
runtime of the EA from a small polynomial (\ie highly efficient), to
exponential (\ie highly inefficient). The critical factor which
determines whether the EA is efficient on the problem is not 
individual parameter settings of $\eta$ or $\chi$, but rather the
ratio between these two parameters. A too high mutation rate
$\chi$ can be balanced by increasing the selection pressure $\eta$,
and a too low selection pressure $\eta$ can be balanced by decreasing
the mutation rate $\chi$.  Furthermore, the results show that the EA
will also have exponential runtime if the selection pressure becomes
too high, or the mutation rate becomes too low. It is pointed out
that the position of the critical region in the parameter space in which
the EA is efficient is problem dependent. Hence, the EA may be
efficient with a given mutation rate and selection pressure on one
problem, but be highly inefficient with the same parameter settings on
another problem. There is therefore no balance between selection
and mutation that is good on all problems. The results shed some 
light on the possible reasons for the difficulty of parameter tuning in
practical applications of EAs.  The optimal parameter settings can be
problem dependent, and very small changes in the parameter settings
can have big impacts on the efficiency of the algorithm.

Informally, the results for the functions studied here can be
explained by the occurrence of an equilibrium state into which the
non-elitist population enters after a certain time. In this state, the
EA makes no further progress, even though there is a fitness gradient
in the search space. The position in the search space in which the
equilibrium state occurs depends on the mutation rate and the
selection pressure. When the number of new good individuals added to
the population by selection equals the number of good individuals
destroyed by mutation, then the population makes no further
progress. If the equilibrium state occurs close to the global optimum,
then the EA is efficient. If the equilibrium state occurs far from the
global optimum, then the EA is inefficient.  The results are
theoretically significant because the impact of the selection-mutation
interaction on the runtime of EAs has not previously been
analysed. Furthermore, there exist few results on the runtime of
population-based EAs, in particular those that employ both a parent
and an offspring population. Our analysis answers a challenge by Happ
\etal \cite{Happ2008Selection}, to analyse a population-based EA using
a non-elitist selection mechanism.  Although this paper analyses
selection and mutation on the surface, it actually touches upon a far
more fundamental issue of the trade-off between exploration (driven by
mutation) and exploitation (driven by selection). The analysis
presented here could potentially by used to study rigorously the
crucial issue of balancing exploration and exploitation in
evolutionary search.

In addition to the theoretical results, this paper has also introduced
some new analytical techniques to the analysis of evolutionary
algorithms. In particular, the behaviour of the main part of the
population and stray individuals are analysed separately. The analysis
of stray individuals is achieved using a concept which we call
non-selective family trees, which are then analysed as single- and
multi-type branching processes.  Furthermore, we apply the drift
theorem in two dimensions, which is not commonplace.  As already
demonstrated in \cite{Lehre2010PopNegDrift}, these new techniques are
applicable to a wide range of EAs and fitness functions.

A challenge for future experimental work is to design and analyse
strategies for dynamically adjusting the mutation rate and selection
pressure. Can self-adaptive EAs be robust on problems like those that
are described in this paper? For future theoretical work,
it would be interesting to extend the analysis to other problem
classes, to other selection mechanisms, and to EAs that 
use a crossover operator.

\section*{Acknowledgements}

The authors would like to thank Tianshi Chen for discussions
about selection mechanisms in evolutionary algorithms and Roy Thomas,
Lily Kolotilina and Jon Rowe for discussions about Perron root
bounding techniques.

\bibliographystyle{plain}

\end{document}